\documentclass[11pt]{article}
\usepackage{cite}
\usepackage{amsmath,amsthm,amssymb,enumerate,fullpage}
\usepackage{xcolor}
\usepackage{svg}
\usepackage{enumitem}
\usepackage{array}
\usepackage{caption}
\usepackage{subcaption}
\usepackage{graphicx}
\usepackage{diagbox,url}
\usepackage{algorithm}
\usepackage{algorithmic}
\usepackage{float}
\usepackage{adjustbox}
\usepackage{multirow}
\usepackage{hyperref}

\usepackage{tikz}
\tikzstyle{startstop} = [rectangle, rounded corners, minimum width=1cm, minimum height=2cm,text centered, draw=black, fill=red!30]
\tikzstyle{process} = [rectangle, minimum width=1cm, minimum height=2cm, text centered, draw=black, fill=orange!30]
 \tikzstyle{decision} = [diamond, minimum width=3cm, minimum height=1cm, text centered, draw=black, fill=green!30]
\usetikzlibrary{shapes, arrows}
\usetikzlibrary{decorations.markings}
\usetikzlibrary{calc}
\usetikzlibrary{spy}

\newtheorem{definition}{Definition}
\newtheorem{proposition}{Proposition}

\renewcommand{\bezier}{B\'{e}zier }

\title{Image Vectorization with Depth:\\ 
convexified shape layers with depth ordering}

\author{Ho Law and Sung Ha Kang \footnote{School of Mathematics, Georgia Institute of Technology, Atlanta, GA, USA (hlaw@gatech.edu, and Kang@math.gatech.edu). This work is partially supported by Simons Foundation 584960.} }

\date{}
\begin{document}

\maketitle

\begin{abstract}
Image vectorization is a process to convert a raster image into a scalable vector graphic format. Objective is to effectively remove the pixelization effect while representing boundaries of image by scaleable parameterized curves.  
We propose new image vectorization with depth which considers depth ordering among shapes and use curvature-based inpainting for convexifying shapes in vectorization process. 
From a given color quantized raster image, we first define each connected component of the same color as a shape layer, and construct depth ordering among them using a newly proposed depth ordering energy.  Global depth ordering among all shapes is described by a directed graph, and we propose an energy to remove cycle within the graph. 
After constructing depth ordering of shapes, we convexify occluded regions by Euler's elastica curvature-based variational inpainting, and leverage on the stability of Modica-Mortola double-well potential energy to inpaint large regions.  This is following human vision perception that boundaries of shapes extend smoothly, and we assume shapes are likely to be convex. 
Finally, we fit B\'{e}zier curves to the boundaries and save vectorization as a SVG file which allows superposition of curvature-based inpainted shapes following the depth ordering. 
This is a new way to vectorize images, by decomposing an image into scalable shape layers with computed depth ordering.  This approach makes editing shapes and images more natural and intuitive.  We also consider grouping shape layers for semantic vectorization.  We present various numerical results and comparisons against recent layer-based vectorization methods to validate the proposed model.  
\end{abstract}

\section{Introduction}
Image vectorization, also known as image tracing, is a crucial technique in animation, graphic design, and printing~\cite{ma2022layer, li2020diffvg}. Unlike raster images which store color values at each pixel, vectorized images use geometric primitives like lines, curves, and shapes to represent the image. These images are typically saved in Scalable Vector Graphics (SVG) format, offering several advantages. First, they can be infinitely scaled without losing quality, eliminating the staircase effect on edges. Second, the file size can be significantly reduced, especially for large images with simple shapes and limited colors. Finally, SVG files are easier to load and render. As a text-based format, SVGs can be edited directly or through a user interface, allowing for easy modification and rearrangement of elements.
One of the earliest software for image vectorization is AutoTrace \cite{autortrace} developed in 1999. Then, Potrace~\cite{Selinger2003PotraceA} and Adobe Streamline~\cite{adobestreamline} emerged in 2001.  Some of current state-of-the-art commercial software includes Adobe Illustrator~\cite{adobeillustrator} and VectorMagic~\cite{vectormagic}.
Image vectorization can be characterized to contour-based and patch-based methods. Contour-based methods understand input images as a collection of curves, and aim to restore inputs as vector graphics of a collection of simple geometry elements, including lines, \bezier curves and ellipses, to represent the color intensity discontinuity.  Kopf et al. \cite{kopf2011pixel} aimed at preserving feature connectivity and reduce pixel aliasing artifacts when fitting spline curves to contours for pixel art.  In \cite{he2021silhouette}, the authors considered binary images and used affine-shortening flow to reduce pixelization effect and to find the meaningful high curvature points before fitting \bezier curves.  This approach is geometrically stable under affine transformation, and shows reduction in the number of control points while maintaining high image quality.  In ~\cite{he2022silhouette}, from the color quantized raster image,  color image vectorization is explored by carefully keeping track of T-junctions and X-junctions during the curve smoothing process of vectorization.  A new color image vectorization  based on region merging is explored in \cite{he2023viva} which is free from color quantization.
Patch-based methods, on the other hand, utilize meshes in different ways to capture fine details in raster images, such as using gradient mesh to capture the contrast changes in \cite{lai2009patch}, and using curvilinear feature alignment in \cite{xia2009patch}.
Some deployed neural networks for vectorization tasks: for drawings~\cite{egiazarian2020vectorization}, floorplans~\cite{liu2017vectorization}, a generative model for font vectorization~\cite{lopes2019learned} and exploring latent space for vectorized output in \cite{reddy2021im2vec}.

Color image vectorization is a challenging problem. Color quantized real images usually contain a lot of tiny piecewise constant regions due to contrast, reflection, and shading. Improper denoising of such regions may result in oscillatory boundaries and poor vectorization quality. Another challenge is due to the staircase effect of raster images, even for piecewise constant images. Figure \ref{fig:intro} shows a typical example, where (a) shows the color quantized image $f$, and (b) shows a typical vectorization result~\cite{adobeillustrator}.  Each connected component is vectorized separately and T-junctions may show unnatural artifacts.  It is desirable if the boundaries of triangles are reconstructed as straight lines and arcs of the circles are reconstructed  following the curvature directions as in (c).  
In order to achieve such geometrically meaningful reconstruction, we allow each connected component to be considered as a region possibly occluded by another shape.  We leverage on Euler's elastica curvature-based inpainting to convexify these shapes, but in order to define occluded region, we propose an energy to give depth ordering among the shapes. 

\begin{figure}
  \centering
  \begin{tabular}{ccc}
  (a) & (b) & (c)\\
  \begin{tikzpicture} [spy using outlines={rectangle,red,magnification=15,size=1.75cm, connect spies}]
  \node[anchor=south west,inner sep=0] (image) at (1.5,0) {\includegraphics[width = 4cm]{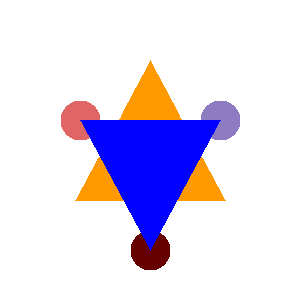}};
  \spy[red, connect spies] on (3.15, 1.315) in node at (2.5,-0.51);
  \spy[red, connect spies] on (4.32, 2.16) in node at (4.5,-0.51);
  \end{tikzpicture}
  &\begin{tikzpicture} [spy using outlines={rectangle,red,magnification=15,size=1.75cm, connect spies}]
  \node[anchor=south west, inner sep=0] (image) at (1.5,0){ \includegraphics[width = 4cm]{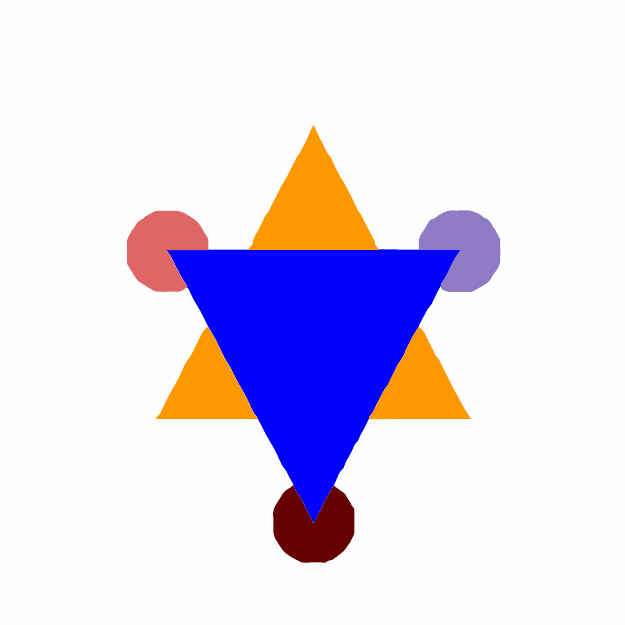}};
  \spy[red, connect spies] on (3.15, 1.315) in node at (2.5,-0.51);
  \spy[red, connect spies] on (4.32, 2.16) in node at (4.5,-0.51);
  \end{tikzpicture}
  &
  \begin{tikzpicture} [spy using outlines={rectangle,red,magnification=15,size=1.75cm, connect spies}]
\node[anchor=south west,inner sep=0] (image) at (1.5,0) {\includegraphics[width = 4cm]{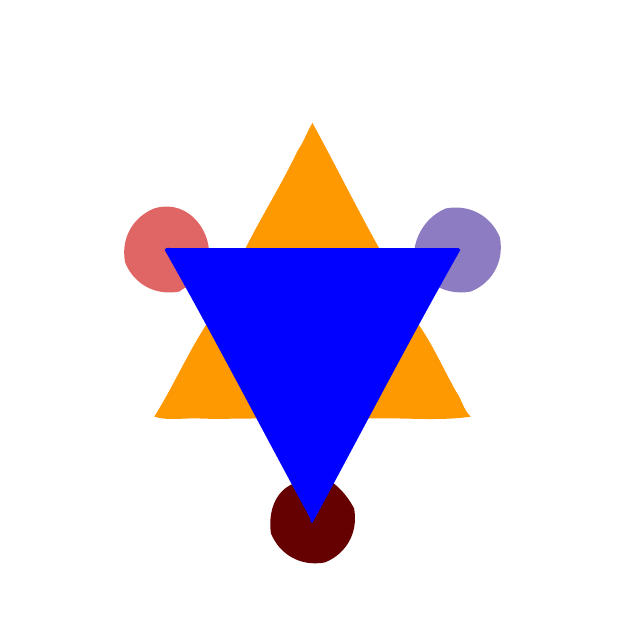}};
  \spy[red, connect spies] on (3.15, 1.315) in node at (2.5,-0.51);
  \spy[red, connect spies] on (4.32, 2.16) in node at (4.5,-0.51);
  \end{tikzpicture}
  \end{tabular}
\caption{(a) Given color quantized raster input image $f$. (b) Typical vectorization result by \cite{adobeillustrator} which considers each connected component separately. (c) Desired vectorization of curves using our propose method. }
\label{fig:intro}
\end{figure}

Finding depth ordering and inpainting is closely related to segmentation with depth. In \cite{nitzberg1990seg,nitzberg1993seg}, the authors propose Nitzberg-Mumford-Shiota (NMS) functional which  decomposes image into shapes that are allowed to be superposed and minimizes curvature of occluded boundaries, while keeping the ordering result faithful to the input image.  Different optimization schemes are suggested for NMS functional: Nitzberg et al.~\cite{nitzberg1993seg} utilize T-junctions and combinatorial algorithm to avoid minimizing the functional directly, and in \cite{esedoglu2004segmentationWD}, authors minimize the functional directly without detecting T-junctions by approximating it with elliptic functionals. Zhu et al.~\cite{zhu2006segmentation} use the level set approach~\cite{osher2001levelset} to minimize the curvature in distribution sense and apply a fast semi-implicit discretization scheme. These methods all minimize the NMS functional first for every possible ordering, and choose the one that gives the minimum value to be the final ordering.  

For images with many objects, real image particularly, the complexity of considering every possible ordering increases geometrically with the number of shapes, and minimizing NMS functional for every possible ordering is nearly impractical. To circumvent this issue, we first estimate the depth ordering, then inpaint each shape with a curvature-based inpainting model in this paper.
Determining objects' relative depth ordering based on a single image is often referred as monocular depth ordering. In \cite{palou2013depth}, the authors locate the T-junctions in the input image, and consider several factors such as color, angles, curvature and local depth gradient, to determine the relative depth ordering.  In \cite{rezaeirowshan2016depth}, authors use convexity and T-junction cues to determine local depth ordering between two neighbouring objects, and then aggregate to a global depth ordering of all objects in the image.  There are training-based methods for identifying depth ordering, e.g., using convolutional neural network \cite{zhang2015depth}, simultaneously training segmentation and depth estimation \cite{mousavian2016depth}, unsupervised learning \cite{zhao2019geometry} and others \cite{depthreview1, depthreview2}. 
While T-junction is an important clue for depth ordering, it is not only difficult to identify them in raster images, but also often gives conflicting depth ordering information (mentioned in later section).
We view the given image $f$ as layered shapes to give a more semantic vectorization result rather than focusing on T-junctions. We consider the perception of completed occluded objects, such as convexity and its area measure, for a more stable computation of depth ordering. 

There are limited recent vectorization methods considering some layering approach. Ma et al.~\cite{ma2022layer} propose Layer-wise Image Vectorization (LIVE), learning-based method which vectorizes image while keeping image topology. LIVE~\cite{ma2022layer} progressively adds more curves to fit the given image, to minimize a loss function for both the color difference between the input and rendered output, and the geometry of produced \bezier curves. 
Wang et al. \cite{wang2024layered} propose Layered Image vectorization via Semantic Simplification (LIVSS). This method generates a sequence of simplified images given by sampling and segmentation, then using two modules, one for simplification and another for layered vectorization, LIVSS finds various level of details in vectorization. These methods use differentiable rasterizer (DiffVG)~\cite{li2020diffvg}, which allows computing gradient of a differentiable loss function with one raster and one vectorized image as inputs.  In~\cite{guo2017depth}, depth information is given in addition to the raster input, and the method outputs a diffusion curve image.

In this paper, we propose image vectorization with depth, which uses depth ordering and curvature-based inpainting for convexifying each shape layer. This approach is training-free and does not have progressive addition of curves. We make assumptions that shapes tend to be convex and level lines should be extended following the curvature direction. We propose a new depth ordering energy that gives depth ordering between  two shape layers based on the ratio of occluded area approximated by convex hulls. From the pairwise depth ordering information, we construct a directed graph amongst all shape layers in the image, where a directed edge indicates one shape is above the other. If there is a directed cycle in the graph, we use a new proposed energy, convex hull symmetric difference, to remove one edge in the cycle. 
To properly convexify each shape layer under occluded regions, we use Euler's elastica curvature-based inpainting to extend the boundary curves smoothly.  In particular, we construct inpainting corner phase functions defined at appropriate corners to guide the inpainting process.  Once each shape layer is reconstructed, we use \bezier curves to fit the boundary, and write as an SVG file following the reverse depth ordering, since unlike bitmap format, SVG format allows stacking shape layers. 
Contribution of our paper is as follow:
\begin{enumerate}
\item{We propose a new image vectorization method incorporating depth information.  The proposed method lowers computation complexity compared to traditional image segmentation with depth, and avoids long computation compared to learning-based methods. }
\item{We propose two new energies for depth ordering based on the convexity assumption of each shape layer: one determines pairwise depth ordering between any two shapes, and another removes cycles for building a linear global depth ordering.}
\item{This method decomposes image into sequence of shape layers considering each connected component of the same color as one shape later. Compared to existing layer-based vectorization methods, our method outputs more semantic layers of shape, which allow easy post-vectorization editing.}
\item{We utilize curvature-based inpainting for reconstructing occluded regions determined by the depth ordering. We leverage a stable and effective method of Modica-Mortola double-well potential approach for large domain curvature-based inpainting. }

\end{enumerate}
Our paper is organized as follow: We first present basic definitions and give an overview of our proposed method in Section \ref{sec: main method}.  The details of each steps are given in each subsection, starting from the new depth ordering energy of shape layers. Some analytical properties of depth ordering energy are explored in Section \ref{sec:analysis}. The details of Euler's elastica curvature-based inpainting functional and the inpainting corner phase function are presented in Section \ref{sec:elastica} and numerical details are presented in Section \ref{sec:numerical}.  In Section \ref{sec: experiments}, we present experiment results of the proposed method, and comparisons against other layer-based vectorization.  We conclude the paper in Section \ref{sec:conclude}.

\section{The proposed method: Image Vectorization with Depth} \label{sec: main method}
 
Let $\Omega$ be a discrete image domain $\{1, 2, \cdots, h \} \times \{1, 2, \cdots, w \}$ with a rectangular grid that $(i,j)$ is connected to $(i - 1, j), (i + 1, j), (i, j - 1)$, and $ (i, j + 1)$. Let $\tilde{f}$ be the input raster image, and we first color quantize the raster image $\tilde{f}$ and consider 
\[f: \Omega \to \{ c_l \}_{l=1}^{K}\]
as the \emph{color quantized input image} for $c_l \in \mathbb{R}^3$.    
Since image vectorization usually represents input raster images with fewer colors and simpler shapes, color quantization can effectively reduce the number of color.  There are different color quantization methods such as K-mean clustering on RGB color space~\cite{macqueen1967kmean}, total squared error minimisation~\cite{orchard1991quantization}, and adaptive distributing units algorithm~\cite{celebi2014quantization}.  We use $K$-mean clustering~\cite{macqueen1967kmean} on the image $\Omega$ with a pre-determined positive integer $K$, which is much smaller than the number of colors in $\tilde{f}$, for color quantization in this paper.  

\begin{definition}\label{def:shapelayer}
Let the color quantized input image be $f: \Omega \to \{c_l \}_{l=1}^{K}$ with $K$ number of colors.  For each color $c_l$, let $N_l$ be the total number of disjoint connected components $S^j_l \subset \Omega$ such that $ f(\bigcup_{j=1}^{N_l} S^j_{l}) = \{ c_l \} $ and $S^j_l \cap S^k_l = \emptyset$ for $j$ and $k = 1, \cdots, N_l$ and $j \neq k$. 
We define each $S^j_l$ as a \textbf{shape layer}, which we simply denote as $S_i$ with associated color $c_l$, i.e., 
\[
f(x) =  c_l \quad \text{ for } \quad\forall x \in S_i,\quad \text{ i.e., }\quad f(S_i) =  c_l.
\] 
We let $N_\mathcal{S}$ be the total number of shape layers, i.e. $N_\mathcal{S}=\sum_{l=1}^K N_l$, and $\mathcal{S}$ be the set of all shape layers of $f$.
\end{definition}
We note that each connected component with the same color in the discrete domain $\Omega$ is defined as a shape layer $S_i$, thus this is a region, and each associated color $c_l$ is only needed at the final vectorization step to record as SVG format.    Figure \ref{fig:shapelayer} (a) shows the color quantized rater image $f$, and (b) shows seven shape layers $S_i$ for $i=1,2,\dots, 7$. The shape layer index $i$ and the color index $l$ are independent to each other: $S_1$ is associated with black $c_1$, $S_2$ also with $c_1$, $S_3$ with orange $c_2$, $S_4$ with yellow $c_3$, $S_5$ and $S_6$ with white $c_4$, and the background $S_7$ with green $c_5$.  Note $S_1$ and $S_2$ have the same color black, and $S_5$ and $S_6$ have the same color white, but each connected component is defined as a separate shape layer.  

\begin{figure}
\centering
\begin{tabular}{ccccc}
(a) & \multicolumn{4}{c}{(b)} \\
\multirow{4}{*}{
\includegraphics[width = 5cm]{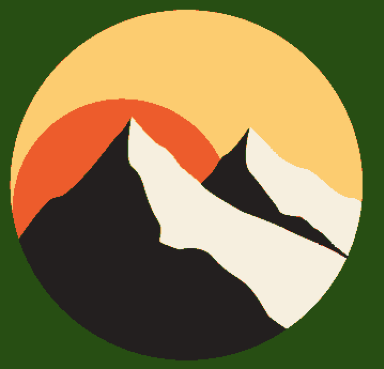}} & $S_1$ & $S_2$ &  $S_3$ &  $S_4$ \\
& \includegraphics[width=0.12\textwidth]{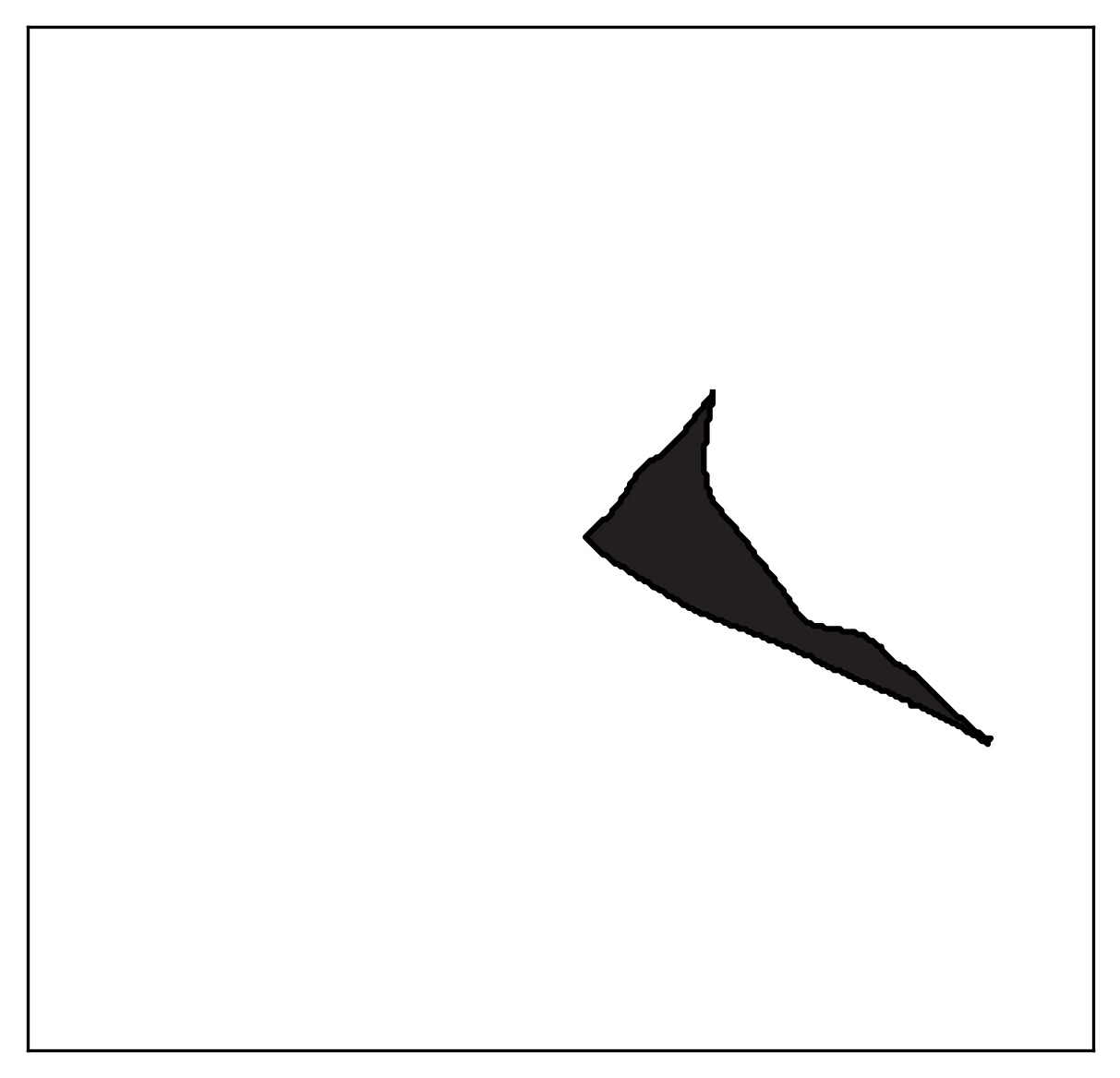} &
\includegraphics[width=0.12\textwidth]{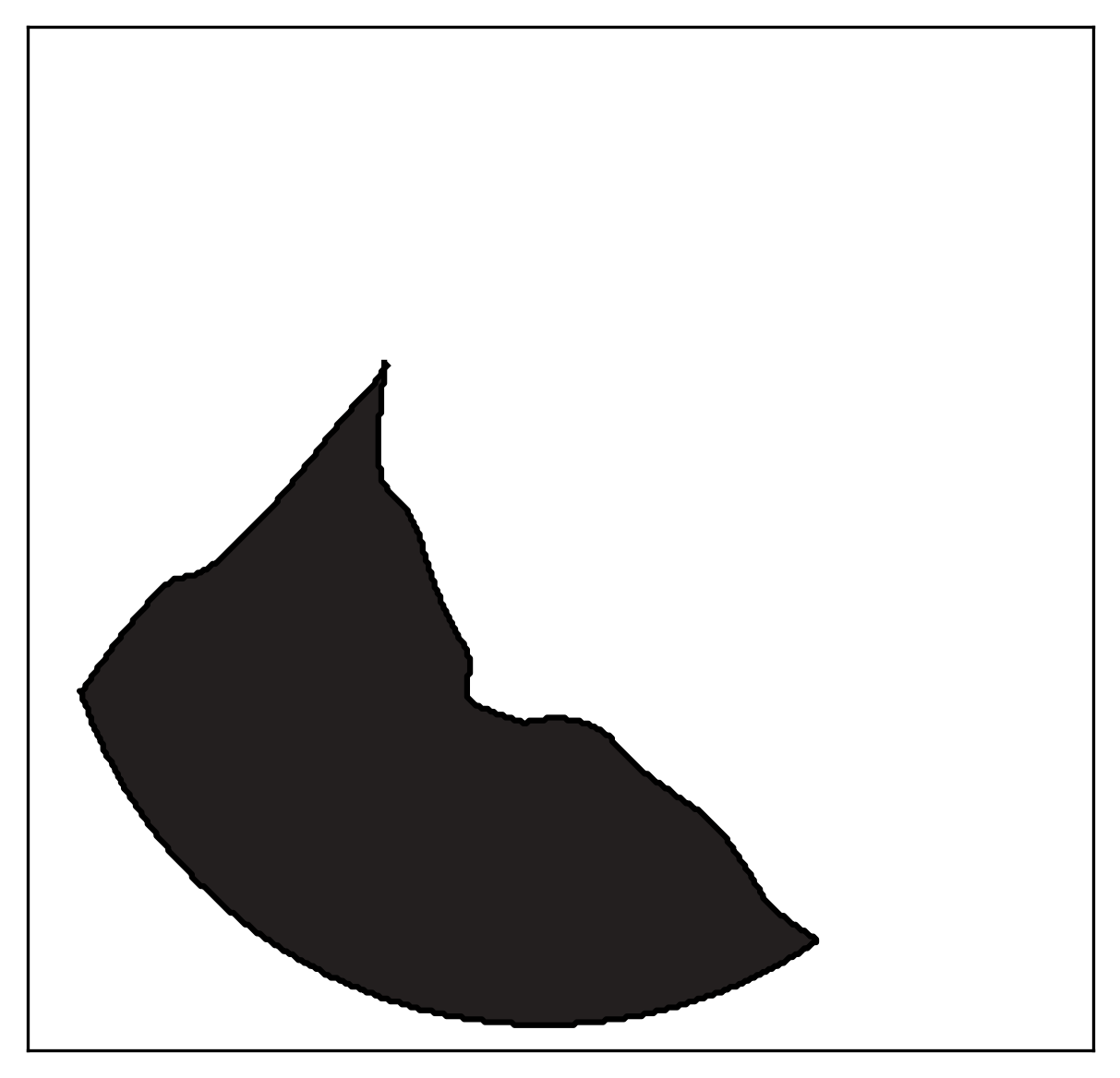} &
\includegraphics[width=0.12\textwidth]{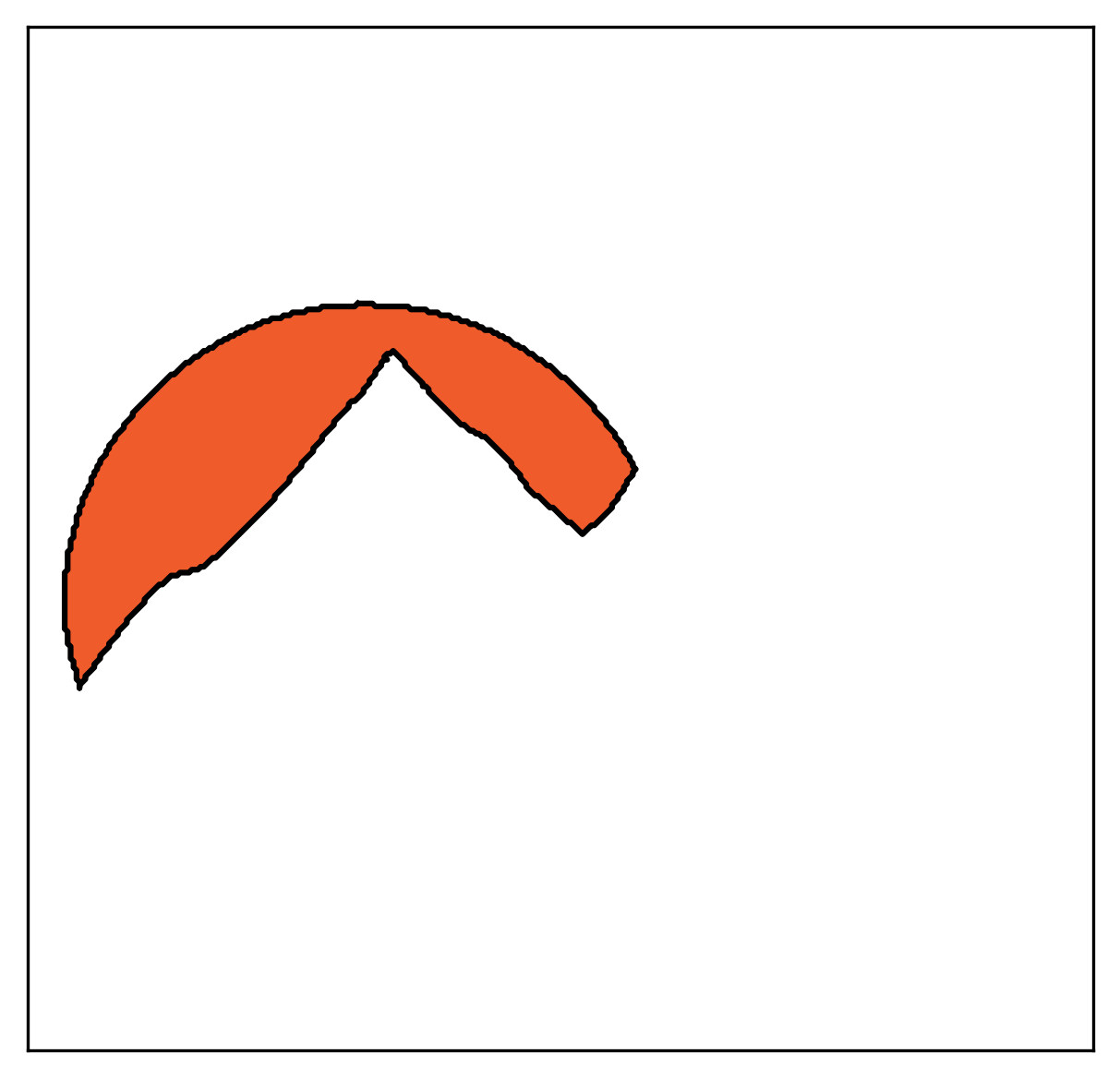}&
\includegraphics[width=0.12\textwidth]{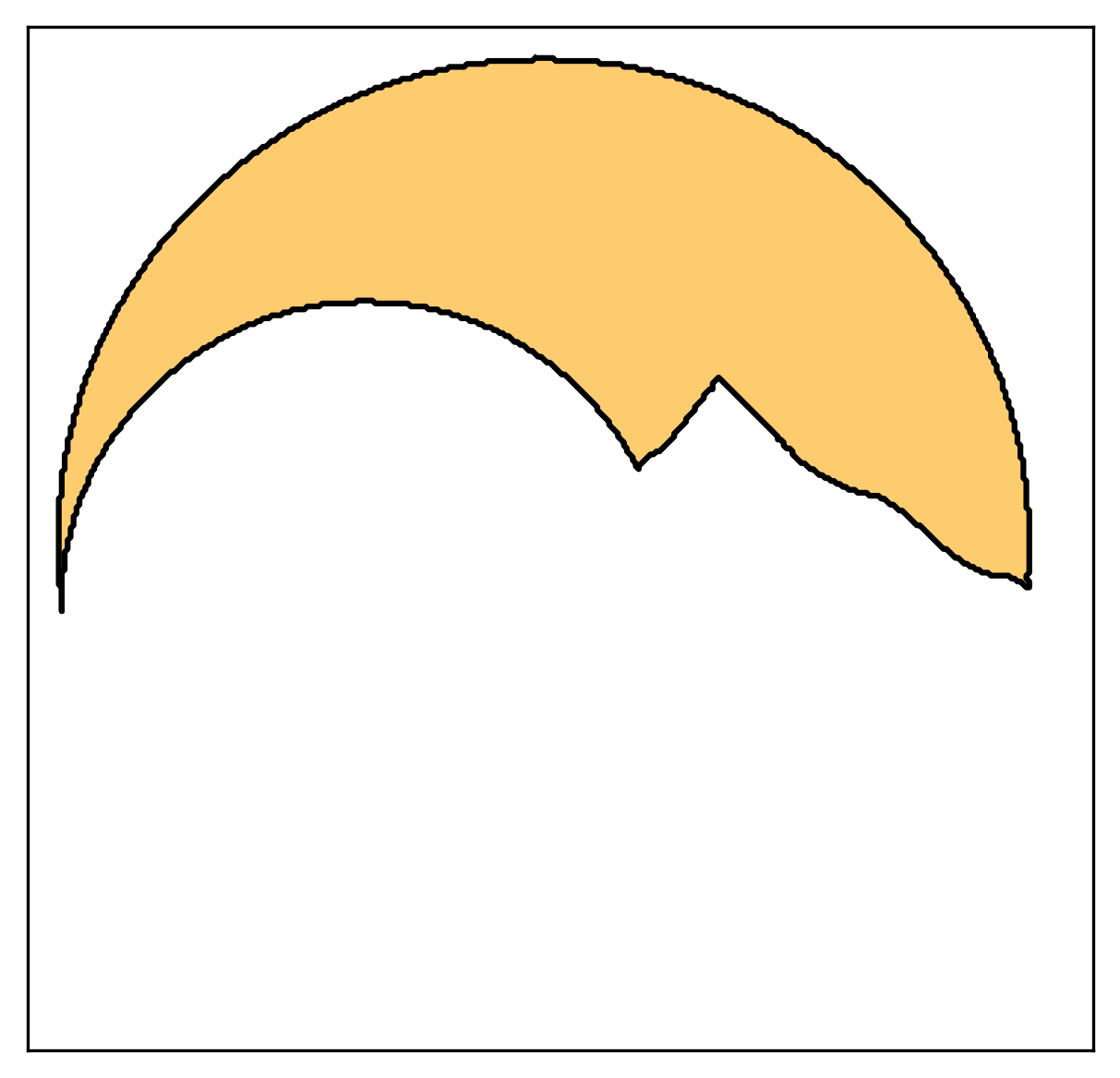} \\
& $S_5$ & $S_6$ &  $S_7$\\
& \includegraphics[width=0.12\textwidth]{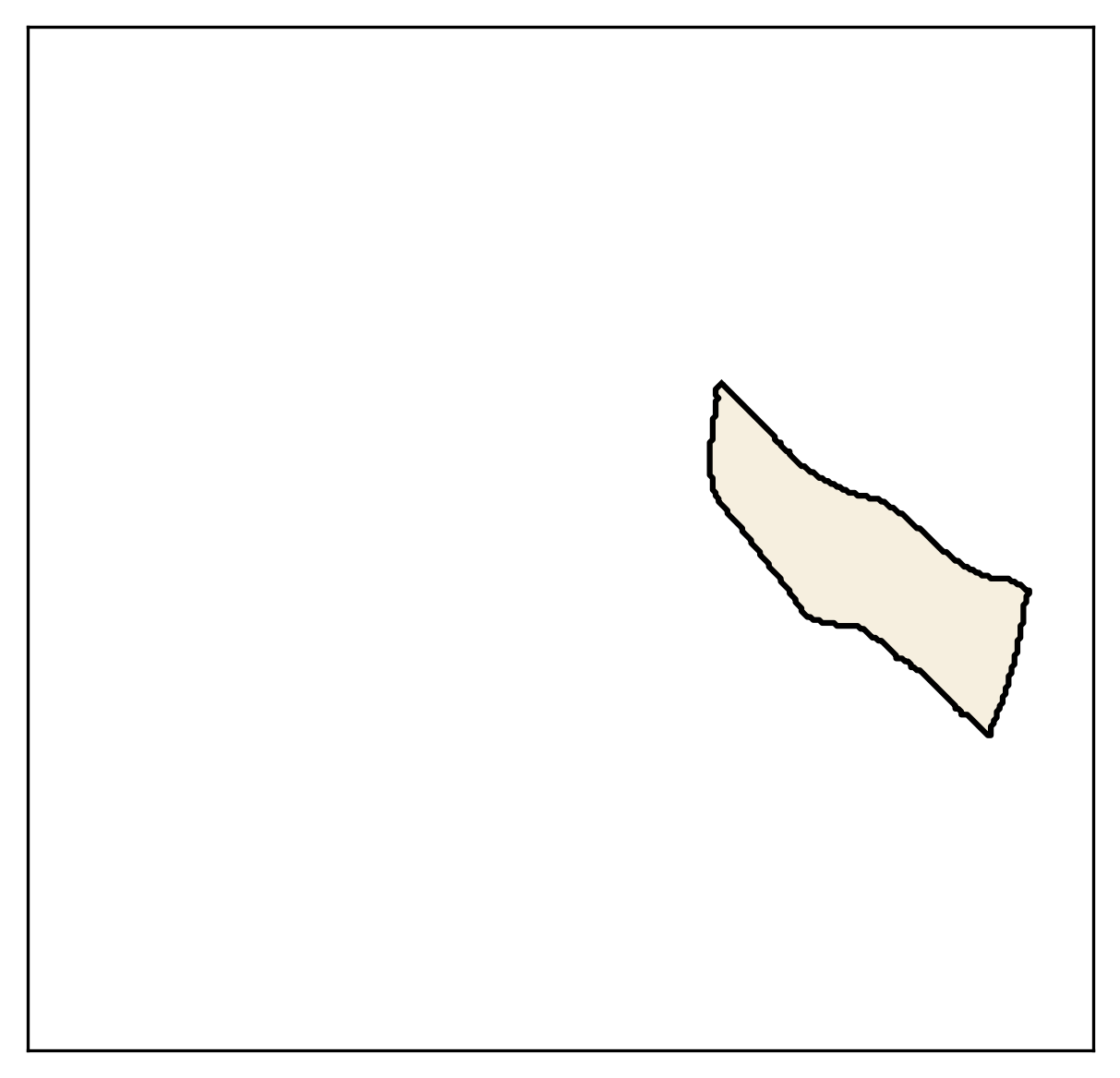} &
\includegraphics[width=0.12\textwidth]{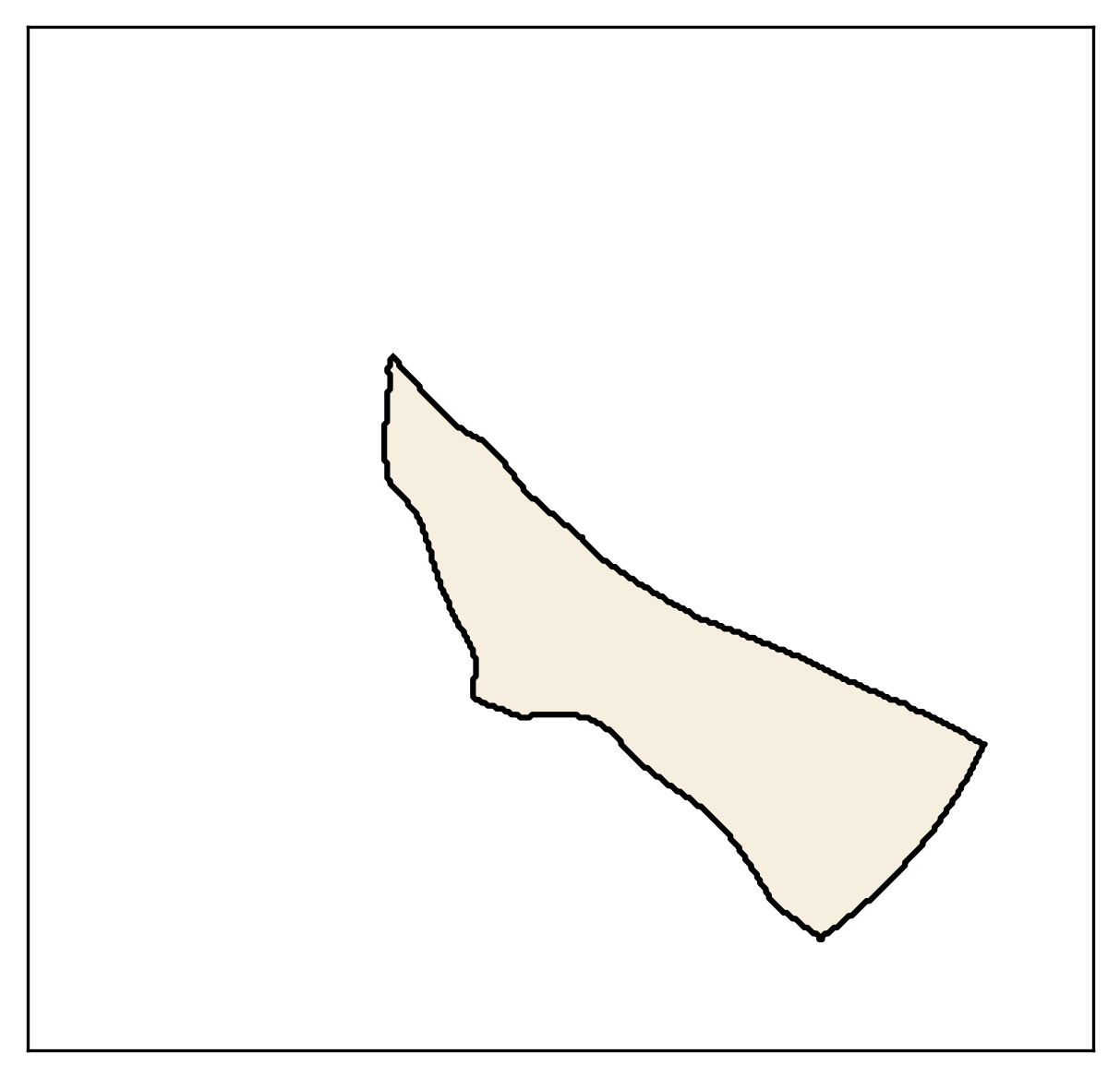} &
\includegraphics[width=0.12\textwidth]{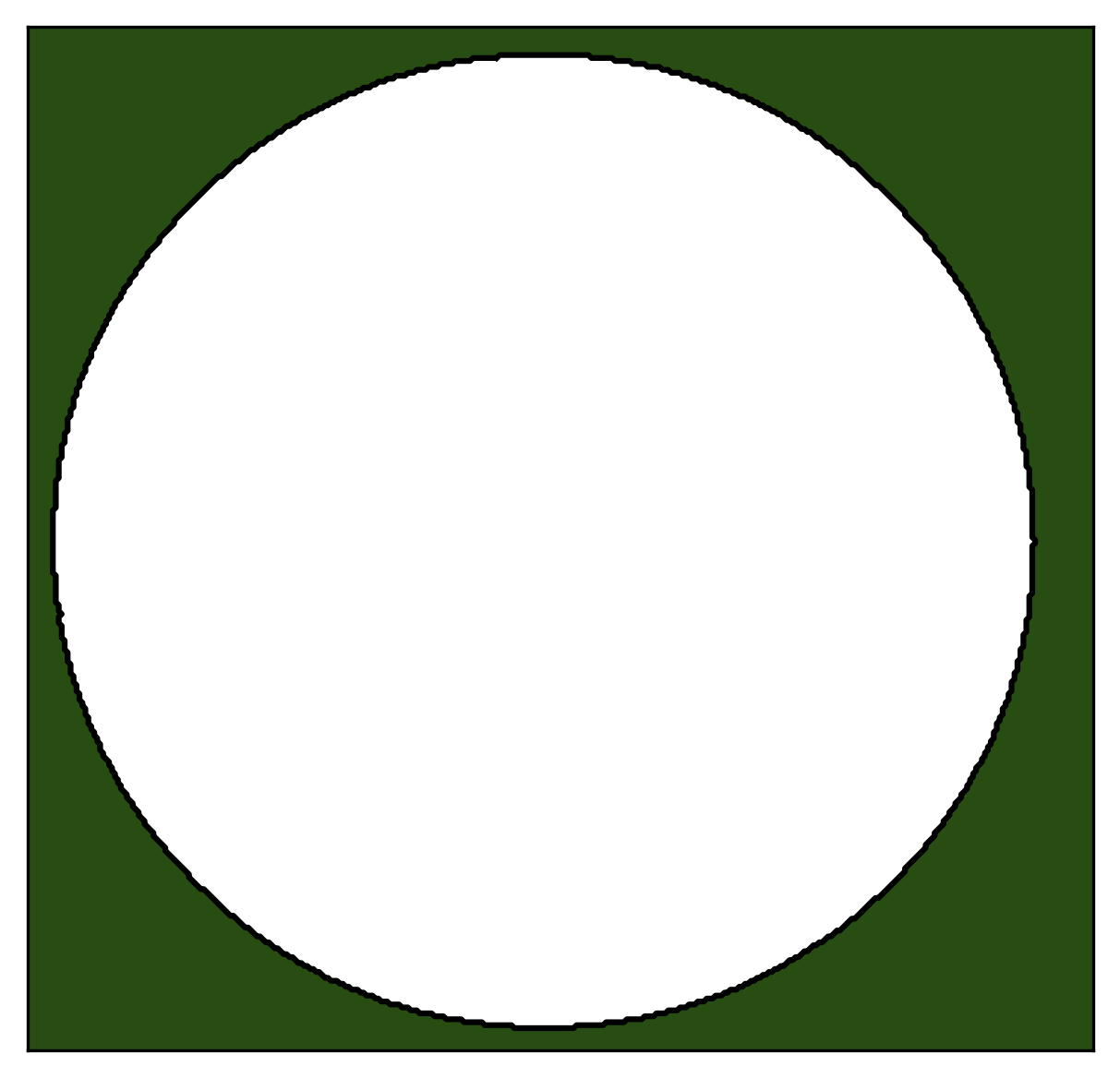}
\end{tabular}
\caption{[Shape Layer $S_i$] (a) The given color quantized raster image $f$.  (b) Seven shape layers $S_i$ $i=1,2,\dots, 7$ are colored by their associated colors $c_l$, with $l=1,2, \dots,5$ (black, orange, yellow, white and green). }\label{fig:shapelayer} 
\end{figure}

\subsection{Depth ordering energy for pairs of shape layers} 
\label{subsec: depth ordering}

In order to determine the depth ordering between any two shape layers from $\mathcal{S}$, we first follow studies of human vision perception to build simple rules to give depth ordering. In  \cite{kelly2000perception}, authors explored how human perception incline to straightening occluded objects based on FACADE model~\cite{grossberg1994perception}. In~\cite{rock1990legacy}, Rock et al. points to Pr\"{a}gnanz's idea of how human usually perceive simpler, smoother and more convex shape behind another when the depth ordering is ambiguous. In~\cite{vision2010}, convex prior in visual perception is also discussed. We give the following assumptions built up on these simple humen perception rules: 
\begin{enumerate}[label = \textbf{A\arabic*}]
\item{\label{assumption: convex object} Objects tend to be convex. }
\item{\label{assumption: top}     
Objects with less occluded region tend to be on top.}
\item{\label{assumption: smooth}
Object boundaries tend to be smooth, i.e., tangential directions on the boundaries change smoothly. }
\end{enumerate}

We introduce a pairwise area measure, which estimates occluded regions between each pair of shape layers $S_{i}$ and $S_{j}$. This is based on our convexity assumption \ref{assumption: convex object}, and \ref{assumption: top} assuming smaller occluded objects are on top.  
\begin{definition}\label{def: covered area measure }
Let $\chi_i: \Omega \to \{0, 1 \}$ be the characteristic function of $S_i,$ and $\chi^{\mathrm{conv}}_{i}$ denote characteristic function of the convex hull of $S_{i}$.  
For two distinct shape layers $S_i$ and $S_j$, we define \textbf{covered area measure} of $S_{j}$ by $S_{i}$ as
    \begin{equation}\label{eq:covered area measure}
        A(i, j) = \frac{\int_{\Omega} \chi^{\mathrm{Conv}}_{j} \chi_{i} \mathrm{d}x}{\int_{\Omega} \chi_{i} \mathrm{d}x}.
    \end{equation}
\end{definition}
This covered area measure $A(i, j)$ approximates the area of $S_j$ possibly occluded by $S_i$, by finding the area of convex hull of $S_j$ intersecting $S_i$.  Comparing against the total area of $S_i$, this ratio shows how much  $S_i$ is occluding shape $S_j$. When $A(i, j)$ is small, this implies shape $S_j$ almost has no overlap with $S_i$, since convex hull of $S_j$ is barely intersecting with $S_i$, and $S_i$ is minimally covering $S_j$. When $A(i, j)$ is close to 1, $S_i$ lies completely inside the convex hull of $S_j$. This covered area measure shows how much of $S_i$ is covering $S_j$, but does not considers how much $S_j$ is occluding $S_i$: it is non-commutative, i.e. $A(i,j) \neq A(j,i)$ in general. 
To properly determine the depth ordering between $S_i$ and $S_j$, we compare two covered area measures between a pair of shape layers.

\begin{definition}
We define  \textbf{depth ordering energy}  between two adjacent shape layers $S_i$ and $S_j$ to be 
\begin{equation}\label{eq: ordering}
D(i, j) = A(i,j) - A(j,i) = 
\frac{\int_{\Omega} \chi^{\mathrm{Conv}}_{j}\chi_{i} \mathrm{d}x}{\int_{\Omega} \chi_{i} \mathrm{d}x} -
\frac{\int_{\Omega} \chi^{\mathrm{Conv}}_{i}\chi_{j} \mathrm{d}x}{\int_{\Omega} \chi_{j} \mathrm{d}x}
\end{equation}
\end{definition}
If $D(i, j) > 0$, this implies $A(i,j)$ is bigger than $A(j,i)$, that shape layer $S_i$ covers $S_j$ more according $S_i$'s own size. This means more portion of $S_i$'s area is in front of $S_j$. This measure is independent of the size of $S_i$ and $S_j$, that even if the area of $S_i$ is small compared to that of $S_j$, if more portion of $S_i$ is covering $S_j$, it is still determined to be in front of $S_j$. If $D(i, j) < 0$, $S_j$ is determined to be in front of $S_i$. This measure expresses the assumption \ref{assumption: top}. 
To address numerical error and small perturbations in practice, we allow a small variation and use the following:
\begin{equation}\label{eq: ordering result}
     \begin{cases}
    D(i,j)> \delta & \implies \text{$S_i$ is in front of  $S_j$, and set } D(i,j)> 0 \\
    D(i,j)< -\delta & \implies \text{$S_j$ is in front of  $S_i$, and set } D(i,j)< 0 \\
    D(i,j) \in [-\delta, \delta] & \implies \text{$S_i$ and $S_j$ are on the same depth level, and set } D(i,j) =0. \\
    \end{cases}
\end{equation}
We use $\delta \in [0.01, 0.1]$ depending on how refine one wants the depth ordering to be, and  how many objects are in the input image (see Section \ref{sec: experiments}). 
Figure \ref{fig: depth conv} illustrates depth ordering energy using the shape layers in Figure \ref{fig:shapelayer}.  Figure \ref{fig: depth conv} (a) shows the orange sun $S_3$ and (b) shows the light yellow sky $S_4$ from the input image.  The red closed contours in (a) and (b) show the convex hull of each shapes, which are presented as yellow regions in (c) and (d).  (c) and (d) describe $A(3,4)$ and $A(4,3)$ that green area over green and blue areas gives the ratio.   $A(3,4)$ in (c) is a positive value (close to 1), while  $A(4,3)$ in (d) is almost zero, thus the depth ordering energy $D(3,4)>0$, and $S_3$ is above $S_4$.
\begin{figure}
\centering
\begin{tabular}{ccccc}
(a) $S_3$ & (b) $S_4$ & (c) $A(3,4)$ & (d) $A(4,3)$\\
\includegraphics[width=3.6cm]{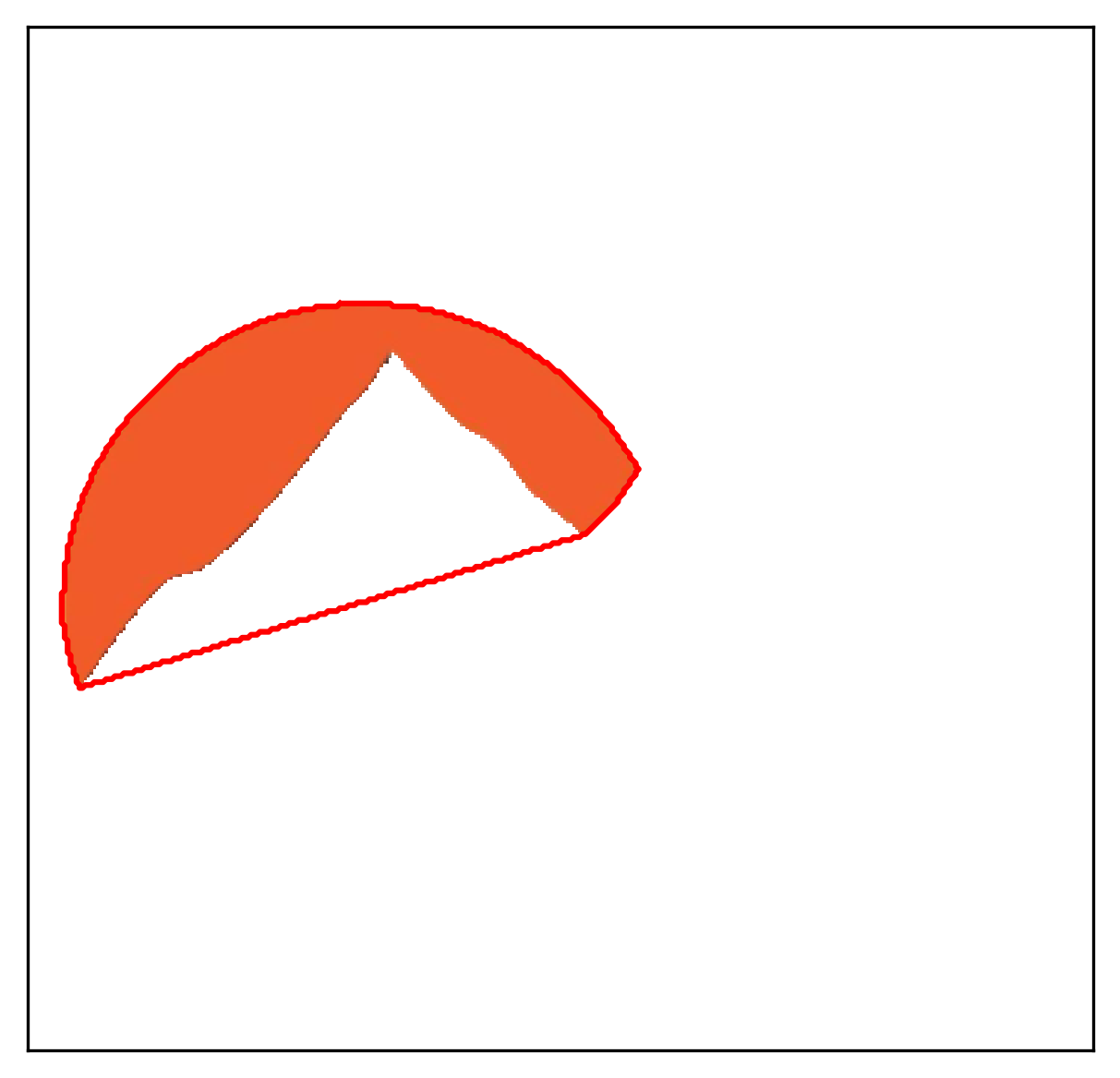} &
\includegraphics[width=3.6cm]{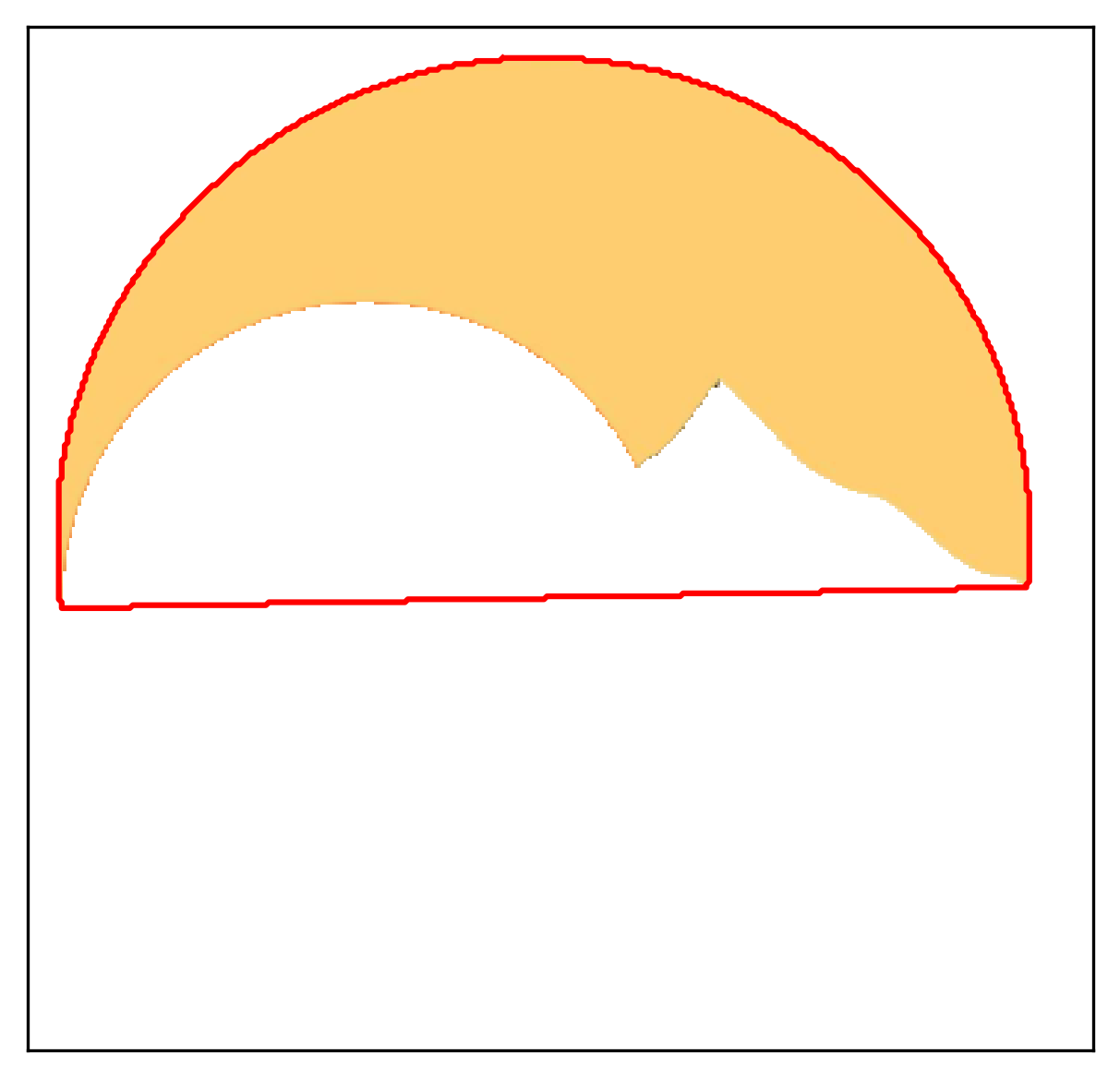}&
\includegraphics[width=3.6cm]{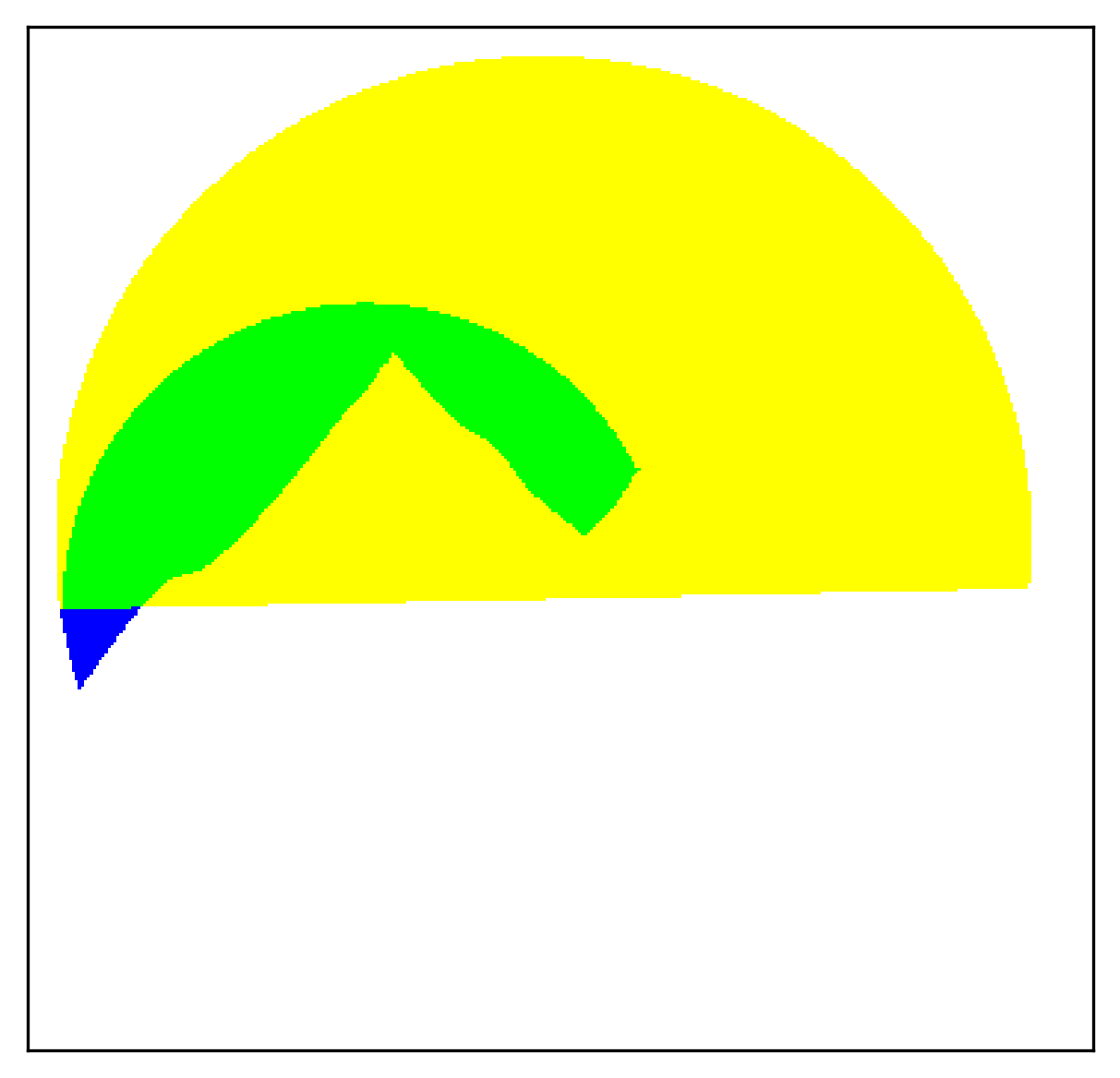} &
\includegraphics[width=3.6cm]{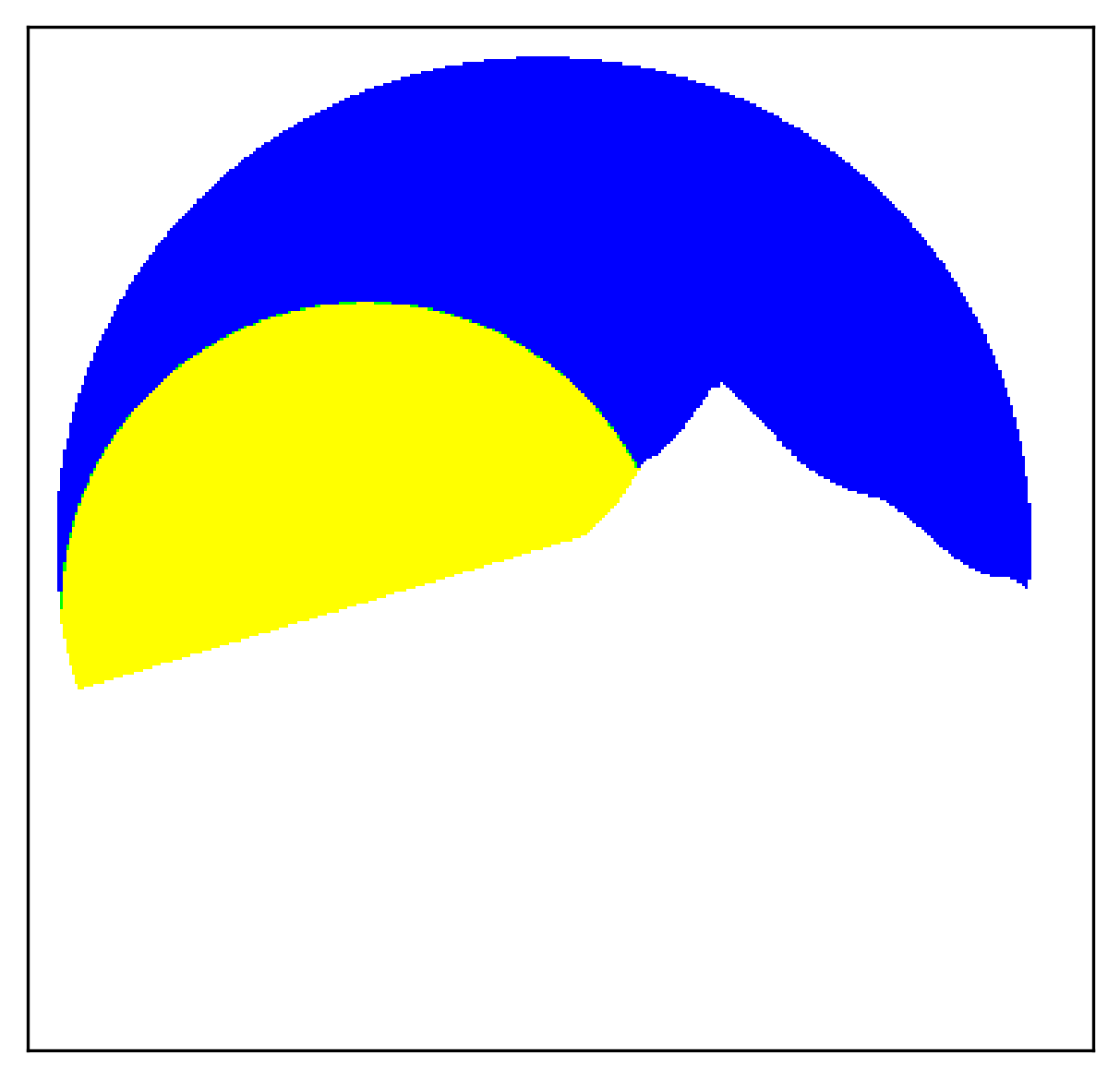}
\end{tabular}
\caption{[Depth ordering $D(i,j)$] Consider $f$ from Figure \ref{fig:shapelayer}(a), (a) shows the orange sun shape layer $S_3$, and (b) shows the light yellow sky $S_4$. The red closed contours in (a) and (b), and yellow regions in (c) and (d) show the convex hull of each shapes.  In (c) and (d), green area represents the numerators of $A(3,4)$ and $A(4,3)$ respectively.  (c) $A(3,4)$ is close to 1, while  $A(4,3)$ in (d) is near zero, thus $D(3,4)>0$, and $S_3$ is determined to be in front of $S_4$.  }
\label{fig: depth conv}
\end{figure}

We found that the tangential direction computation or concavity computation, especially for small images, to be unstable and quite noisy in many cases for raster images. 
The propose covered area measure as well as the depth ordering energy using area comparison give more stable results. 
We use convex hull for depth ordering measure for faster and simpler computation. However, to reconstruct convexified shape layers, we use Euler's elastica curvature based model to satisfy the assumption \ref{assumption: smooth}.   We analyze this difference in Section \ref{sec:analysis}.

\subsection{Global depth ordering via a directed graph}\label{subsec: depth graph}

To determine the global depth ordering, we build a directed graph $G(M,E)$ using the pairwise depth ordering energy $D(i,j)$ in \eqref{eq: ordering}, where $M$ is the set of all shape layers $S_i$ as nodes and $E$ is the set of directed edges $E_{i,j}$ with direction determined by the sign of $D(i,j)$. A directed edge $E_{i,j}>0$ from node $i$ to $j$, indicates that shape layer $S_{i}$ is above $S_{j}$, we also denote as $S_{i} \rightarrow S_{j}$. Every shape layer is compared to every other one, and there is no edge between two nodes if they are identified on the same depth level. This graph helps to find a linear global ordering of all shape layers when it is acyclic, after performing topological sort~\cite{cormen01introduction}.

In real images, there is a large number of shape layers even after $K$-mean clustering of colors, i.e. $N_{\mathcal{S}}$ is large.  In such cases, directed cycles can be common in this directed graph.  Each cycle implies shapes are on top of each other in a loop, which is non-physical. We propose the following energy to break these cycles. Let the set of nodes in a cycle be $M_c \subset M$.   If there are multiple cycles, we consider each cycle separately as $N_{c_{1}}$, $N_{c_{2}}$, \dots $N_{c_{n}}$, and break only one edge from each of them. 
\begin{definition}
We define \textbf{convex hull symmetric difference} $V(i,j)$ for each $i, j \in M_c$ as 
\begin{equation}\label{eq: CSdifference}
V(i,j) := \int_{\Omega} \chi_{i} + \chi_{j} - \chi^{\mathrm{conv}}_{j}\chi_{i} \mathrm{d}x.
\end{equation}
This is a symmetric difference for sets $\chi^{\mathrm{conv}}_{j}$, the convex hull of $S_j$, and $\chi_{i}$.  This $V(i,j)$ is not commutative. 
\end{definition}

The main motivation of this convex hull symmetric difference is to remove the edge which is the least noticeable.  For a cycle with length $m$, $i_1 \rightarrow i_2 \rightarrow \cdots \rightarrow i_m  \rightarrow 
 i_1$, we compute all $V(i_a, i_b)$, for $a<b$ and $a, b \in M_c$ and $V(i_m, i_1)$, then find the maximum $V$ to break the cycle: Find the edge with 
 \begin{equation} \label{eq: edge largest CS difference}
    (i^{\ast}, j^{\ast}) = \arg \max_{(i,j) \in M_c} V(i,j) 
 \end{equation}
and remove the edge weight by setting $E^\ast_{i^\ast,j^\ast} =0$. Once $E^\ast_{i^{\ast},j^{\ast}}$ is set to zero, $S_{i^\ast}$ is no longer on top of $S_{j^{\ast}}$, and there is no cycle. The graph $G$ is reduced to a linear ordering with $S_{j^{\ast}}$ being the source node and $S_{i^{\ast}}$ the sink.  This shape layer $S_{j^\ast}$ is violating the convexity assumption \ref{assumption: convex object}, since it may be occluded by $S_{i^{\ast}}$; yet within the cycle, this represents the least area of convexity violation, i.e. least noticeable to remove this edge. 

This is illustrated in Figure \ref{fig: cyclic}, with three overlapping disks in (a). The depth ordering energy (\ref{eq: ordering result}) gives a cycle in (b) that the red shape layer $S_1$ is above the green shape layer $S_2$ ($E_{1,2}>0$),  the green shape layer $S_2$ above the blue one $S_3$ ($E_{2,3}>0$), and the blue above the red ($E_{3,1}>0$).  (c) shows possible occluded regions $\chi_j^{conv}\chi_i$ in different colors: The yellow region is the convex hull of the green shape layer over the red shape layer $\chi_2^{conv}\chi_1$, thus $V(1,2)$ is the union of red and green shape layers minus the yellow.  The cyan region is $\chi_3^{conv}\chi_2$ convex hull of the blue shape $S_3$ layer over the blue shape layer $S_2$, and the 
magenta region is $\chi_1^{conv}\chi_3$.    Assuming areas of the three circles are similar, red and green shape layer have the smallest area (the yellow region) being subtracted, thus $V(1,2)$ in  \eqref{eq: CSdifference} is the biggest among $V(1,2)$, $V(2,3)$ and $V(3,1)$, and the edge $E_{1,2}$ is set to zero.   As a result, shown in (d), the global ordering is given as $S_2$ green shape layer above $S_3$ blue shape layer, which is above $S_1$ red shape layer.   
\begin{figure}
\centering
\begin{tabular}{cccc}
(a) & (b) & (c) & (d) \\
\includegraphics[width=4cm]{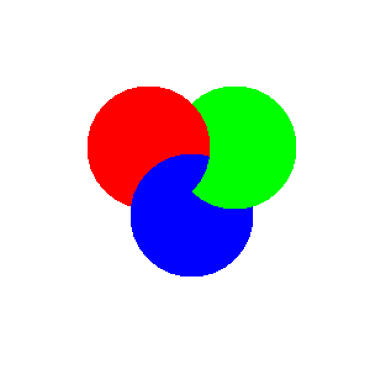}&
\raisebox{1cm}{\begin{tikzpicture}
\draw
(0.0:0) node (green) {$S_2$ Green }
(-120.0:2) node (blue) {$S_3$ Blue }
(-180.0:2) node (red){$S_1$ Red };
 \begin{scope}[->]
 \draw (green) to (blue);
 \draw (blue) to (red);
\draw (red) to (green);
\end{scope}
\end{tikzpicture}
}&
\includegraphics[width=4cm]{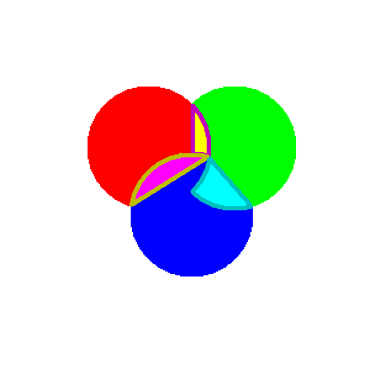}
&
\raisebox{1cm}{
\begin{tikzpicture}
\draw
(0.0:0) node (green) {$S_2$ Green }
(-120.0:2) node (blue) {$S_3$ Blue }
(-180.0:2) node (red){$S_1$ Red };
 \begin{scope}[->]
 \draw (green) to (blue);
 \draw (blue) to (red);
\end{scope}
\end{tikzpicture}
}
\end{tabular}    \vspace{-1cm}
\caption{[The graph $G(M,E)$ with a cyclic.]  The given image $f$ in (a) gives a directed graph in (b).  Considering the convex hull symmetric differences $V(1,2)$ (yellow is $\chi_2^{conv}\chi_1$), $V(2,3)$ (cyan is $\chi_3^{conv}\chi_2$) and $V(3,1)$ (magenta is $\chi_1^{conv}\chi_2$), $V(1,2)$ is the maximum and we set $E_{1,2}=0$.  (d) shows the linear directed graphs which gives the global depth ordering. }
\label{fig: cyclic}
\end{figure}

In implementation, we find all cycles in the directed graph, and perform this action until no more cycles are found. Once this directed acyclic graph is obtained, we use topological sort to find a linear depth ordering of shapes, details are presented in Section \ref{subsec: convexification and topological sort}.  The final ordering result is hereafter denoted as a permutation $\mathcal{D}: \mathbb{Z}_{N_\mathcal{S}} \to \mathbb{Z}_{N_\mathcal{S}}$, where $N_\mathcal{S}$ is the total number of shape layers. The full algorithm is described in Algorithm \ref{alg: depth ordering}.

\textbf{Remark}: The effect of using the maximum of convex hull symmetric difference $V(i,j)$ differs from possibly re-using the minimum value of $D(i,j)$ for removing a cycle.  When $|D(i,j)|$ is small, i.e., $A(i,j) \approx A(j,i)$, the difference in the ratio of occluded areas is small regardless of the size of the shapes.  Even if one shape is very large and another is very small, this $|D(i,j)|$ does not distinguish them.  If a cycle exists, this already represents inconsistency of comparing $A(i,j)$'s,  following an example such as Figure \ref{fig: cyclic}, we propose $V(i,j)$ to ignore smallest possible inpainting region.

\subsection{Euler's elastica curvature-based inpainting for the occluded region}\label{subsec: shapes convexifiation}

Once we have the full depth ordering for all shape layers $\mathcal{D}: \mathbb{Z}_{N_\mathcal{S}} \to \mathbb{Z}_{N_\mathcal{S}}$, where $N_\mathcal{S}$ is the total number of shape layers, we convexify each shape layer by the Euler's elastica curvature-based inpainting model considering the occluded regions given by the depth ordering.  Inpainting shape layers is not only for reducing the possibility of forming gaps between two adjacent shapes, but also aiding possible post-vectorization edit process. 
We allow each shape layer $S_i$ to extend following the curvature direction, as our assumption \ref{assumption: smooth}, as long as it is covered by shapes on top of the current layer $S_i$. 
The region  $O_i$ is the occluded region where the inpainting is allowed (i.e. inpaintable domain) for $S_i$ and is defined as follows:
\begin{definition}\label{def: regularization region}
Let the given depth ordering be $\mathcal{D}: \mathbb{Z}_{N_\mathcal{S}} \to \mathbb{Z}_{N_\mathcal{S}}$, where the smaller number represents the shape being on the top, closer to observer.
We define the \textbf{shape-covered region} of shape layer $S_{i}$ to be 
\begin{equation}\label{eq:O_i}
    O_i = \left( \bigcup_{j : \mathcal{D}(j) \leq \mathcal{D}(i)} S_{j} \right) \cup S_{noise}.
\end{equation}
This is the union of all shape layers $S_{j}$ that are on top of shape $S_{i}$, including $S_{i}$, and the noise layer will be defined later in  \eqref{eq:noiselayer}.
\end{definition}

We find the optimal region, the \textbf{inpainted shape layer} of $S_i$, as $C_i \subset \Omega$ by minimizing the Euler's elastica energy within the occluded region $O_i$:
\begin{equation}\label{eq: curvature based energy}
     E(C_i) = \int_{\partial C_i} \left( a + b \kappa^{2} \right) \mathrm{d}s \quad \text{ such that }  \quad S_{i} \subseteq C_i \subseteq O_i
\end{equation}
where $a, b$ are some positive constants, $\partial C_i$ denotes the boundary of $C_i$, and $\kappa$ is the curvature of the boundary.  Using the shape-covered region $O_i$ constraint in  (\ref{eq: curvature based energy}) ensures the final collection of shape layers to be close to the raster image, while curvature-based inpainting inpaints and regularizes each shape's boundary. The inpainted region $C_i$ is not shown from the top, since it is occluded by shape layers above $S_i$. 
We use phase transition function $u_i$ to find the inpainted shape $C_i$, and detail of the modified model and implementation of  \eqref{eq: curvature based energy} is discussed in Section \ref{sec:elastica}.

\subsection{Vectorization: \bezier curve fitting }\label{subsec: bezier curve fitting}

The convexified shape layer $C_i$ is represented by a phase transition functions $u_{i}: \Omega \to [-1, 1]$. By considering the characteristic function of $u_{i} \geq 0$, i.e., $\chi(C_i)$, this becomes equivalent to silhouette image vectorization in \cite{he2021silhouette}.  We briefly outline this process here.  

We extract the phase transition  $\lambda_{i} = \{ x \mid u^{\ast}_{i} = 0\}$ as a set of discrete points.  We pick the curvature extrema \cite{morel2017} from this boundary to capture the geometry of boundary accurately. 
Given $C_i$, oriented clockwise and discretized as a set of points $\{ p_{k} \}_{k=1}^{K}$, for some small positive integer $h$, the curvature at $p_{k}$ is given by 
\begin{equation}\label{eq: discretized curvature}
    \kappa(p_{k}) = \frac{-2 \mathrm{det}\begin{pmatrix} \overrightarrow{p_{k}p_{k - h}} & \overrightarrow{p_{k}p_{k+h}} \end{pmatrix}}{\|\overrightarrow{p_{k}p_{k-h}}\| \|\overrightarrow{p_{k}p_{k+h}}\|\|\overrightarrow{p_{k+h}p_{k-h}}\|}. 
\end{equation}
Since the set of points $\{ p_{k}\}_{k=0}^{K-1}$ are sampled from a shape layer which has closed boundary curve $C_{i}$, $k-h$ and $k+h$ are computed modulo $K$. We use the notation $\overrightarrow{p_{i}p_{j}}$ to denote the vector from $p_{i}$ to $p_{j}$.  We identify local extrema if the curvature in  \eqref{eq: discretized curvature} is larger than a threshold $\mathcal{T}$.  In case no curvature extreme is found, we randomly choose a point to be both the starting and ending point, and consider  the boundary as a single segment. 

We find the cubic \bezier curves fitting these points: 
Given four vectors $\mathbf{P}_{0}, \mathbf{P}_{1},\mathbf{P}_{2},\mathbf{P}_{3}$ in $\mathbb{R}^{2}$, a cubic \bezier curve $\mathbf{B}(t): [0,1] \to \mathbb{R}^{2}$ can be defined as:
\[ 
        \mathbf{B}(t) = (1-t)^{3}\mathbf{P}_{0} + 3(1-t)^{2}t\mathbf{P}_{1} + 3(1-t)t^{2}\mathbf{P}_{2} + t^{3} \mathbf{P}_{3}
\]
Since each cubic \bezier curve is determined by four vectors, this process is commonly called vectorization.
We partition the boundary of inpainted shape layer $C_i$ into segments $\{ \mathcal{P}_{ij} \}_{j = 1}^{\ell_{i}}$, which all begin and end points $\mathcal{P}_{i}$ are at local curvature extrema $\{ p_{ijq}\}_{q = 1}^{Q_{j}}$ for fixed $j$ and $i$. 
For each $\mathcal{P}_{ij}$, we solve a least square problem to fit cubic \bezier curves to a given set of points with orientation as in \cite{schneider1990algorithm}:
\begin{equation}\label{eq: wls fitting}
    \min_{\mathbf{P}_{0}, \mathbf{P}_{1},\mathbf{P}_{2},\mathbf{P}_{3}}\sum_{q = 1}^{Q_{j}} \left\| \sum_{i=0}^{3} \binom{3}{i} (1 - t)^{3 - i} t^{i} \mathbf{P}_{i} - p_{ijq} \right\|^{2}_{2} .
\end{equation}
If the Hausdorff distance between the fitted \bezier curve and the points is too large, we recursively partition $\mathcal{P}_{ij}$ at the point that gives the greatest error, and solve the above least-squares problem \eqref{eq: wls fitting} until the distance is smaller than a prescribed tolerance  \cite{he2021silhouette,schneider1990algorithm}.  We refer to this fitting parameter as $\tau$.

After each shape layer is represented by \bezier curves, following the depth ordering, we write them to a SVG file, starting from the bottom layer to the top layer, in reserve depth order, to superpose the shape layers.

\subsection{Outline of the proposed method}
\begin{figure}[th]
    \centering
\begin{tikzpicture}[node distance=5.3cm, every edge/.style={draw=black, ->}]
  \node (quantized) [label=below: Color Quantized Input $f$] {\includegraphics[width=3.2cm, height=3.2cm]{pic/mountain_example/quantized_mountain.png}};
  \node (stack) [right of = quantized, xshift = 0.8cm] {\begin{tabular}{c@{\hskip .1cm}c@{\hskip .1cm}c}
      \includegraphics[width = 1.2cm]{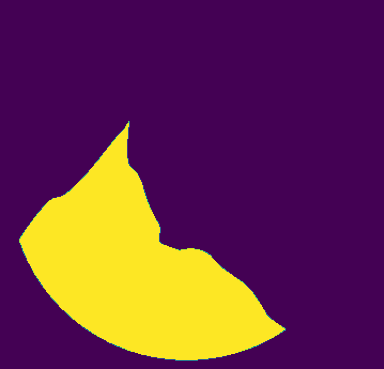} &  \includegraphics[width = 1.2cm]{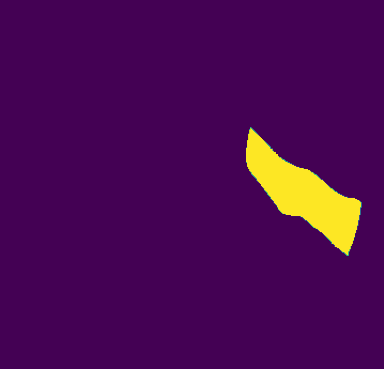} & \includegraphics[width = 1.2cm]{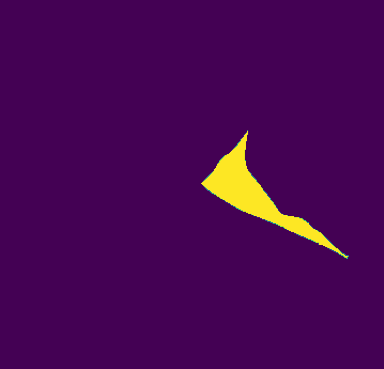} \\
      \includegraphics[width = 1.2cm]{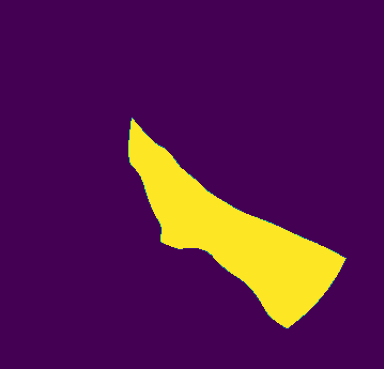} &  \includegraphics[width = 1.2cm]{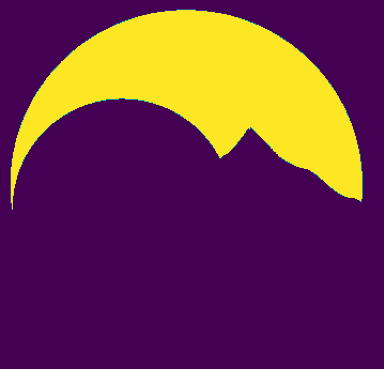} & \includegraphics[width = 1.2cm]{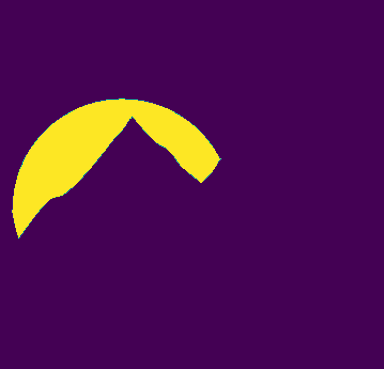}  \\
        & \includegraphics[width = 1.2cm]{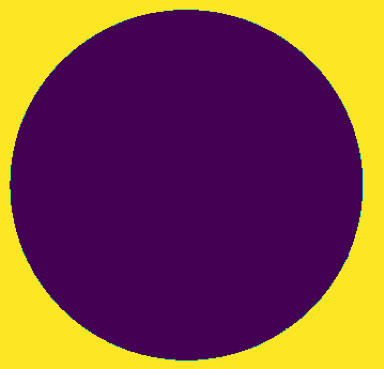} & \\
  \end{tabular}};
  \foreach \X [count=\Z]in {level_3.png, level_5.png,level_6.png,level_0.png,level_4.png,level_2.png, level_1.png}
  \node (ordered stack) [right of = stack, xshift = .5cm, yshift = .1cm] at (7.5,1.2, \Z /2) { \includegraphics[width = 2.5cm, height = 2.5cm]{pic/mountain_example/layers/\X} };
  \node (inpainted stack) [below of = ordered stack, xshift =0cm, yshift = 1.6cm] at (12.2,-1){
  \begin{tabular}{c@{\hskip .1cm}c@{\hskip .1cm}c}
      \includegraphics[width=1.2cm]{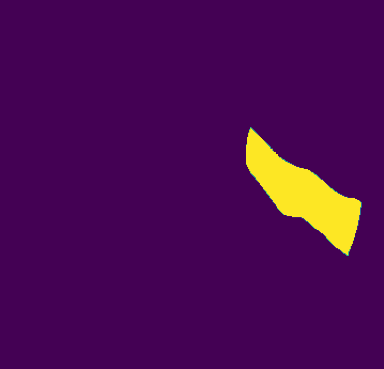} & \includegraphics[width=1.2cm]{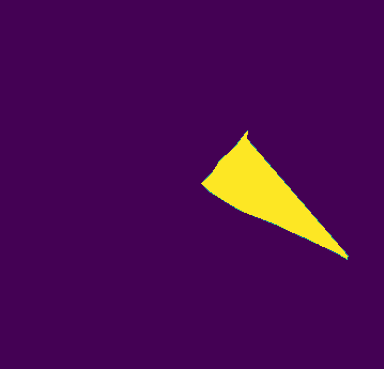} & \includegraphics[width=1.2cm]{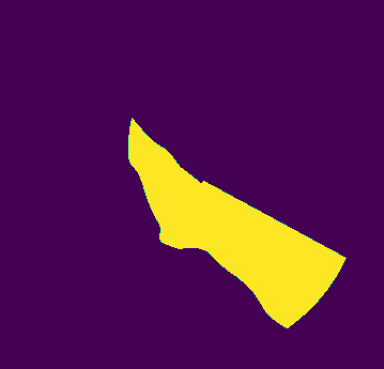} \\
      \includegraphics[width=1.2cm]{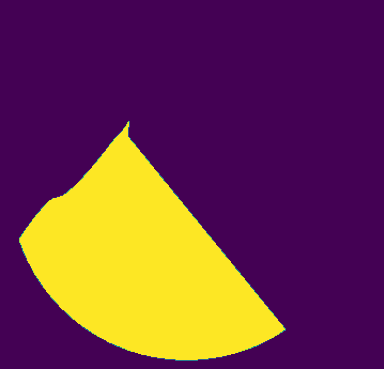} & \includegraphics[width=1.2cm]{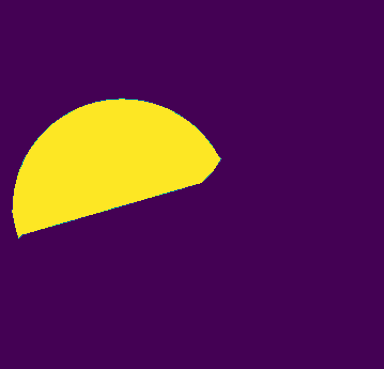} & \includegraphics[width=1.2cm]{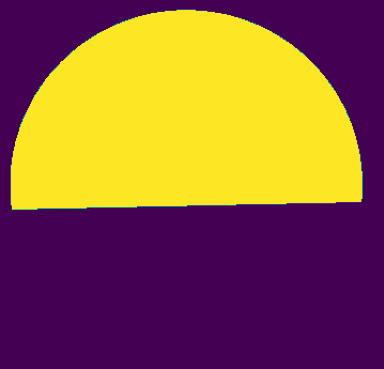} \\
      & \includegraphics[width=1.2cm]{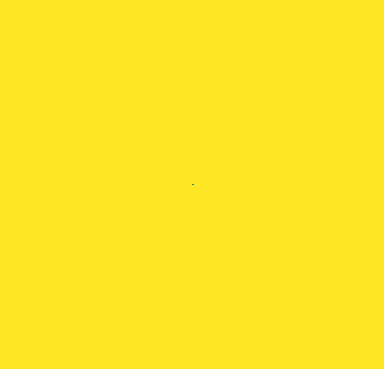} & \\
  \end{tabular}
  };
  \node (bezier stack) [left of = inpainted stack, xshift=-.8cm] {
  \begin{tabular}{c@{\hskip .1cm}c@{\hskip .1cm}c}
      \includegraphics[width = 1.2cm]{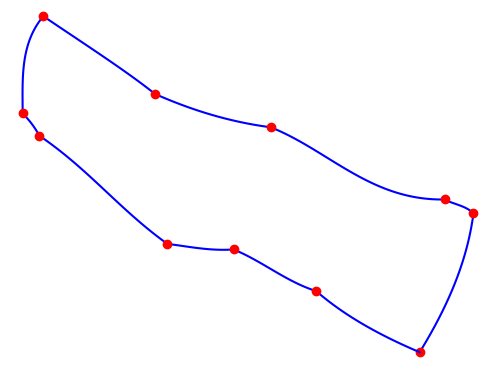} & \includegraphics[width = 1.2cm]{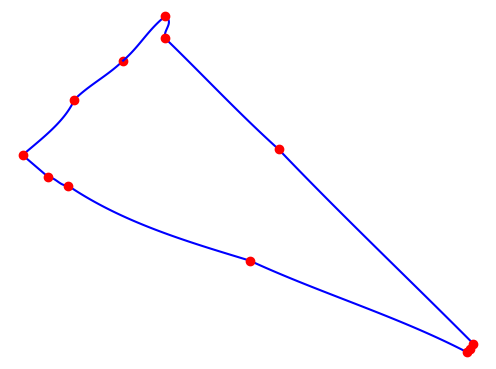} & \includegraphics[width = 1.2cm]{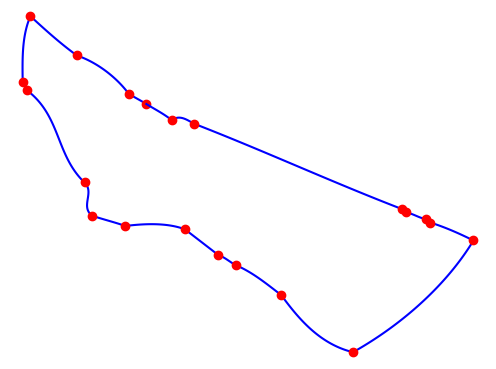}\\
      \includegraphics[width = 1.2cm]{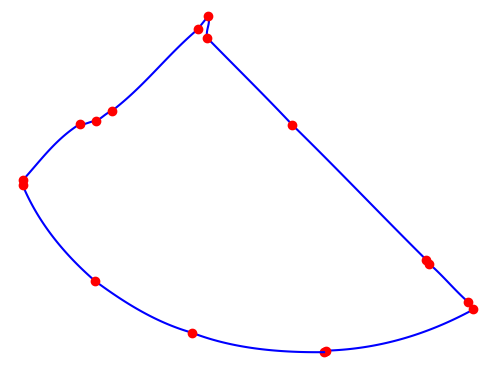} & \includegraphics[width = 1.2cm]{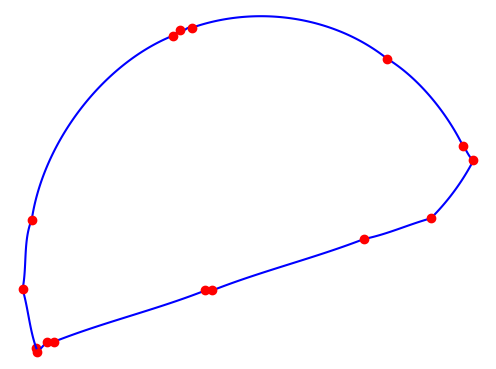} & \includegraphics[width = 1.2cm]{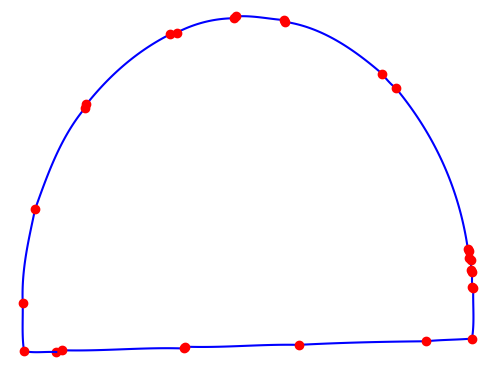}\\
  \end{tabular}  
  };
  \node (output image) [left of = bezier stack, label=below: SVG Output, xshift=-0.8cm]
  {\includegraphics[width = 3.2cm, height = 3.2cm]{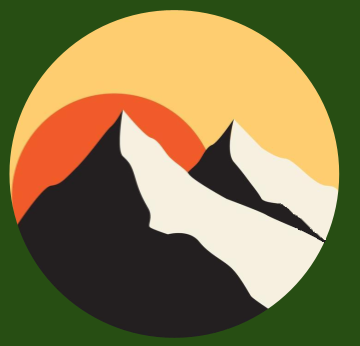}};
  \draw [->] (quantized) -- node[anchor=south] {Shape layers}(stack);
  \draw [->] (stack) -- node[anchor=south, xshift = -.1cm] {Depth ordering}(ordered stack);
  \draw [->] (ordered stack) -- node[anchor=east] {Curvature-based inpainting} (inpainted stack);
  \draw [->] (inpainted stack) -- node[anchor = south]{Curve fitting}
  (bezier stack);
  \draw [->] (bezier stack) -- node[anchor = south] {Vectorize} 
  (output image);
\end{tikzpicture}
\caption{[Image Vectorization with depth flowchart] From the given color quantized image $f$, shape layers $S_i$s are defined, and depth ordering is determined. Euler's Elastica curvature-based inpainting is used to convexify shape layers considering the occluded region $O_i$ given by the depth ordering.  Finally, each convexified layers $C_i$s are vectorized and stacked in SVG file format. }
\label{fig: flow chart}
\end{figure}

The outline of the proposed image vectorization with depth is illustrated in Figure \ref{fig: flow chart}.  From the color quantized input image $f$, shape layers are formed based on colors and connectedness in $\Omega$.  We consider the depth ordering energy  \eqref{eq: ordering} between two adjacent shapes to build a directed graph of depth ordering among all shapes.  If there are cycles in the graph, we remove one edge which has the maximum convex hull symmetric difference  \eqref{eq: CSdifference} and obtain a linear global depth ordering. 
We convexify each shape layer $S_i$ by minimizing Euler's elastica energy, with constraints on the shape-covered region $O_i$ of $S_i$ given by the depth ordering.  Then, we find the \bezier curves to vectorize each convexified region $C_i$, and stack them according to the depth ordering in a SVG file format.  This SVG file gives image vectorization with depth ordering and each shape layer is convexified as $C_i$.

\section{Analytical properties of depth ordering}\label{sec:analysis}
 
We explore some analytical properties of the proposed model, such as some properties of depth ordering energy, and the difference between using convex hull and curvature-based inpainting method when estimating occluded area. 

The covered-area measure $A(i,j)$ in  \eqref{eq:covered area measure} and convex hull symmetric difference $V(i,j)$ in  \eqref{eq: CSdifference} are non-symmetric measures, while $D(i,j)$ is skew-symmetric which is important for stability and consistency of depth-ordering computation. 

\begin{proposition}
The depth ordering energy $D(i,j)$ in  (\ref{eq: ordering}) has the following properties:
\begin{enumerate}
    \item 
    $D(i,j)$ is skew-symmetric: $D(i, j) = -D(j, i)$.
    \item
    $D(i, j) \in [-1, 1]$.
\end{enumerate}
\end{proposition}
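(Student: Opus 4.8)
The proof is a direct computation from the definition of $D(i,j)$ in \eqref{eq: ordering}, so the plan is to unwind that definition and bound each piece.

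\textbf{Part 1 (skew-symmetry).} This is immediate: by definition $D(i,j) = A(i,j) - A(j,i)$, and swapping the roles of $i$ and $j$ gives $D(j,i) = A(j,i) - A(i,j) = -D(i,j)$. I would simply write this one line, perhaps also noting that it holds for the thresholded version in \eqref{eq: ordering result} since the cases are symmetric under $\delta \mapsto \delta$.

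\textbf{Part 2 (range).} The key observation is that each covered area measure satisfies $A(i,j) \in [0,1]$. Indeed, the numerator $\int_\Omega \chi^{\mathrm{Conv}}_j \chi_i \, \mathrm{d}x$ is the area of the intersection of the convex hull of $S_j$ with $S_i$, hence is nonnegative and at most $\int_\Omega \chi_i \, \mathrm{d}x$, the area of $S_i$ (which is strictly positive since $S_i$ is a nonempty connected component). Dividing gives $0 \le A(i,j) \le 1$, and likewise $0 \le A(j,i) \le 1$. Therefore $D(i,j) = A(i,j) - A(j,i)$ is a difference of two numbers in $[0,1]$, so $D(i,j) \in [-1,1]$. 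I would spell out the two inequalities $\chi^{\mathrm{Conv}}_j \chi_i \le \chi_i$ pointwise (since $\chi^{\mathrm{Conv}}_j \le 1$) and $\chi^{\mathrm{Conv}}_j \chi_i \ge 0$, integrate, and divide.

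\textbf{Main obstacle.} There is essentially no obstacle here — the statement is elementary. The only points worth being careful about are: (i) confirming the denominator is nonzero, which follows because a shape layer is by Definition \ref{def:shapelayer} a nonempty connected component, so $\int_\Omega \chi_i\,\mathrm{d}x \ge 1$ in the discrete setting; and (ii) making sure the endpoints of $[-1,1]$ are genuinely included if one wants a sharp statement (e.g. $D(i,j) = 1$ would require $A(i,j)=1$ and $A(j,i)=0$ simultaneously), though the proposition only claims containment, so attainability need not be addressed. I would keep the write-up to a few lines.
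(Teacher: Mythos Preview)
Your proof is correct and follows exactly the same approach as the paper: the paper's own proof simply states that skew-symmetry follows from the definition of $D(i,j)$, and that the range bound holds since $A(i,j)$ and $A(j,i)$ are both in $[0,1]$. You have merely added a bit more detail (the pointwise inequality $\chi^{\mathrm{Conv}}_j \chi_i \le \chi_i$ and the nonzero denominator), which is fine but not required.
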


\begin{proof}
The first statement follows from the definition of $D(i,j)$ in  \eqref{eq: ordering}, and the second statement is true, since $A(i,j)$ and $A(j,i)$ are both in $[0,1]$.
\end{proof}

These are simple properties, and yet, the skew-symmetry reduces the comparisons by a factor of $2$, since only $D(i,j)$ needs to be computed but not both $D(i,j)$ and $D(j,i)$, and this helps to create less cycles in the graph.  
We use $S_{i}^{\mathrm{Conv}}$ to denote the convex hull of $S_{i}$ and $\chi_{i}^{\mathrm{conv}}: \Omega \to \{ 0, 1 \}$ is the characteristic function of $S_{i}^{\mathrm{Conv}}$. 
In the following, we present a few more properties of the shape layer ordering. 

\begin{proposition} \label{prop: subset}
If shape layer $S_{i}$ is a subset of $S^{\mathrm{Conv}}_{j}$, then $S_{i} \rightarrow S_{j}$.
\end{proposition}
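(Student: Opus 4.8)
The plan is to unwind the three relevant definitions --- the covered-area measure $A(i,j)$ in \eqref{eq:covered area measure}, the depth ordering energy $D(i,j)$ in \eqref{eq: ordering}, and the orientation rule for the edge $E_{i,j}$ --- and to observe that the hypothesis forces $A(i,j)$ to attain its maximal value.

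First I would note that $S_i \subseteq S_j^{\mathrm{Conv}}$ means exactly that $\chi_i(x) = 1$ implies $\chi_j^{\mathrm{conv}}(x) = 1$ for every $x \in \Omega$, so the product $\chi_j^{\mathrm{conv}}\chi_i$ is identically equal to $\chi_i$ on $\Omega$. Substituting into \eqref{eq:covered area measure} gives
\[
A(i,j) = \frac{\int_{\Omega} \chi^{\mathrm{conv}}_{j}\chi_{i}\,\mathrm{d}x}{\int_{\Omega}\chi_{i}\,\mathrm{d}x} = \frac{\int_{\Omega}\chi_{i}\,\mathrm{d}x}{\int_{\Omega}\chi_{i}\,\mathrm{d}x} = 1.
\]
Since $A(j,i)\in[0,1]$ always holds (as used in the previous proposition), we get $D(i,j) = A(i,j) - A(j,i) = 1 - A(j,i) \ge 0$. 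To get strict positivity I would argue that $A(j,i) = 1$ can only occur when $S_j \subseteq S_i^{\mathrm{Conv}}$; combined with the hypothesis $S_i \subseteq S_j^{\mathrm{Conv}}$, taking convex hulls of both inclusions yields $S_i^{\mathrm{Conv}} = S_j^{\mathrm{Conv}}$, which is impossible for two distinct disjoint shape layers --- at least one pixel of $S_j$ must lie outside $S_i^{\mathrm{Conv}}$ --- so in the discrete domain $A(j,i) \le 1 - 1/\!\int_{\Omega}\chi_j\,\mathrm{d}x < 1$. Hence $D(i,j) > 0$, the edge $E_{i,j}$ is oriented from node $i$ to node $j$, i.e. $S_i \rightarrow S_j$.

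The only genuine subtlety, which I expect to be the main obstacle to a fully rigorous statement, is reconciling this strict positivity with the thresholding rule \eqref{eq: ordering result}, which collapses $D(i,j)$ to $0$ whenever $|D(i,j)| \le \delta$. One should either phrase the proposition in the idealized $\delta = 0$ regime, or make explicit that $1 - A(j,i)$ is bounded below by $1/\!\int_{\Omega}\chi_j\,\mathrm{d}x$, so that the conclusion $S_i\rightarrow S_j$ is valid as long as $S_j$ is not so large that this lower bound drops below $\delta$. Apart from this caveat, the argument is a one-line substitution followed by the elementary bound $A(j,i)\le 1$.
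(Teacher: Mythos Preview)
Your proof is correct and follows exactly the paper's approach: use the hypothesis to get $A(i,j)=1$, then bound $A(j,i)<1$ so that $D(i,j)=1-A(j,i)>0$. The paper's own argument is terser---it simply writes ``Since $S_i^{\mathrm{Conv}}\cap S_j\subset S_j$'' and asserts the strict inequality without your convex-hull-equality contradiction, and it does not address the $\delta$-threshold caveat you raise.
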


\begin{proof}
Since $S^{\mathrm{Conv}}_{i} \cap S_{j} \subset S_{j}$, 
\[D(i,j) = \frac{\int_{\Omega} \chi^{\mathrm{Conv}}_{j}\chi_{i} \mathrm{d}x}{\int_{\Omega} \chi_{i} \mathrm{d}x} - \frac{\int_{\Omega} \chi^{\mathrm{Conv}}_{i}\chi_{j} \mathrm{d}x}{\int_{\Omega} \chi_{j} \mathrm{d}x} = 1 - \frac{\int_{\Omega} \chi^{\mathrm{Conv}}_{i}\chi_{j} \mathrm{d}x}{\int_{\Omega} \chi_{j} \mathrm{d}x} > 0.\]
\end{proof}

This proposition is useful especially when $|D(i,j)|$ is very small, e.g.,$ |D(i,j)| <\delta$.  For example, consider a configuration where a thin doughnut-shape $S_j$ is surrounding the outer boundary of another convex shape layer $S_i$ such that area of $S^{\mathrm{Conv}}_j$ is close to that of $S^{\mathrm{Conv}}_i$.  In this case, both terms in $D(i,j)$ are similar, thus $|D(i,j)| < \delta$.  Using Proposition \ref{prop: subset}, once $S_i \subset S^{Conv}_j$ is confirmed, one does not need to compute $D(i,j)$ directly, but use $S_{i} \rightarrow S_{j}$. 
 
\begin{proposition}\label{prop: local convex regions}
Suppose two adjacent shape layers $S_{i}$ and $S_{j}$ share one boundary segment $\Gamma_{ij}$, and let $L_{ij}$ be the straight line connecting the two endpoints of $\Gamma_{ij}$.  Let the region bounded by $\Gamma_{ij}$ and $L_{ij}$ to be  $A_{ij}$.  If  each connected component of $A_{ij}$ are convex and $A_{ij}$ is  a subset of $S_{i}$, then $S_{i} \rightarrow S_{j}$, and if $A_{ij}$ is a subset of $S_{j}$, then $S_{j} \rightarrow S_{i}$.
\end{proposition}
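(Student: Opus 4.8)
The plan is to reduce this statement to Proposition~\ref{prop: subset} by showing that, under the stated hypotheses, one shape layer is contained in the convex hull of the other. I will argue the first case ($A_{ij} \subseteq S_i$ implies $S_i \rightarrow S_j$); the second case follows by the symmetric argument (or by skew-symmetry of $D$). Observe that $S_i$ and $S_j$ lie on opposite sides of the shared boundary segment $\Gamma_{ij}$, and $L_{ij}$ is the chord joining its endpoints. The key geometric claim I want is that $S_i \subseteq S_j^{\mathrm{Conv}}$; once this is established, Proposition~\ref{prop: subset} immediately gives $S_i \rightarrow S_j$.

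First I would set up the local picture near the shared boundary. Since $\Gamma_{ij}$ is a single boundary segment between the two layers and $L_{ij}$ is the straight chord between its endpoints, the lune-shaped region $A_{ij}$ enclosed by $\Gamma_{ij} \cup L_{ij}$ records how $\Gamma_{ij}$ bulges relative to the chord. The hypothesis $A_{ij} \subseteq S_i$ means the curve $\Gamma_{ij}$ bulges \emph{into} $S_i$ — equivalently, from the vantage point of $S_j$, the boundary $\Gamma_{ij}$ is locally \emph{convex} (it curves away from $S_j$). The plan is to use this to show that replacing the arc $\Gamma_{ij}$ by the chord $L_{ij}$ only enlarges $S_j$, and that the enlarged set $S_j \cup A_{ij}$ is still contained in $S_j^{\mathrm{Conv}}$. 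Indeed $A_{ij} \subseteq S_j^{\mathrm{Conv}}$ because $A_{ij}$ is bounded by $\Gamma_{ij} \subseteq \partial S_j \subseteq S_j^{\mathrm{Conv}}$ together with the chord $L_{ij}$, whose endpoints lie in $S_j$; since each connected component of $A_{ij}$ is convex, each such component is the convex hull of its own boundary points, all of which lie in $S_j^{\mathrm{Conv}}$, hence the component lies in $S_j^{\mathrm{Conv}}$ by convexity of the hull.

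Next I would propagate this local containment to a global one. The idea is that $S_i$ sits on the $A_{ij}$-side of $\Gamma_{ij}$, wedged between $S_j$ (across $\Gamma_{ij}$) and the rest of the image; since $A_{ij} \subseteq S_i$ captures exactly the region between the chord $L_{ij}$ and the arc $\Gamma_{ij}$, the set $S_i \setminus A_{ij}$ lies on the far side of $L_{ij}$ from $S_j$, and I want to conclude $S_i \subseteq S_j^{\mathrm{Conv}}$. This is the step that needs care: a priori $S_i$ could be a large non-convex region extending far beyond the local neighborhood of $\Gamma_{ij}$, and there is nothing in the hypotheses forcing such a far-away part of $S_i$ into $S_j^{\mathrm{Conv}}$. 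I expect the honest resolution is that the proposition is really a \emph{local} statement and should be read with the implicit understanding that $S_i$ (or the relevant portion of it) is the lune $A_{ij}$ up to the chord — i.e. the intended content is $A_{ij} \subseteq S_i \subseteq S_j^{\mathrm{Conv}}$ holding because $S_i$ does not protrude past $L_{ij}$ on the $S_j$ side and past $\Gamma_{ij}$ it is absorbed into $S_j^{\mathrm{Conv}}$. Granting that $S_i \subseteq S_j^{\mathrm{Conv}}$, Proposition~\ref{prop: subset} finishes the proof: $D(i,j) = 1 - \int_\Omega \chi_i^{\mathrm{Conv}}\chi_j\,\mathrm{d}x / \int_\Omega \chi_j\,\mathrm{d}x > 0$, so $S_i \rightarrow S_j$.

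The main obstacle, as flagged above, is making the passage from the \emph{local} convexity of the shared boundary to a \emph{global} containment $S_i \subseteq S_j^{\mathrm{Conv}}$ rigorous; I would expect the cleanest fix is either to strengthen the hypothesis (assume $S_i = A_{ij}$, or that $S_i$ lies entirely in the closed half-plane on the $\Gamma_{ij}$-side of $L_{ij}$), or to state the conclusion as a claim about the \emph{local} ordering contribution along $\Gamma_{ij}$ rather than the full $D(i,j)$. The remaining steps — showing each convex component of $A_{ij}$ lies in $S_j^{\mathrm{Conv}}$, and invoking Proposition~\ref{prop: subset} — are routine.
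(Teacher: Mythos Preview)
Your reduction to Proposition~\ref{prop: subset} via the containment $S_i \subseteq S_j^{\mathrm{Conv}}$ is a genuinely different route from the paper's, and the obstacle you flag is real: nothing in the hypotheses prevents $S_i$ from extending far beyond the convex hull of $S_j$, so $A(i,j)=1$ need not hold and the reduction cannot be completed without adding assumptions. The paper does not attempt this containment at all.

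Instead, the paper computes the two terms of $D(i,j)=A(i,j)-A(j,i)$ directly. Your argument that each convex component of $A_{ij}$ lies in $S_j^{\mathrm{Conv}}\cap S_i$ already gives the paper's second ingredient, $\int_\Omega \chi_j^{\mathrm{Conv}}\chi_i\,\mathrm{d}x>0$, hence $A(i,j)>0$. The step you are missing is the \emph{other} term: the paper claims $\int_\Omega \chi_i^{\mathrm{Conv}}\chi_j\,\mathrm{d}x=0$, i.e.\ $A(j,i)=0$. The reasoning is that $A_{ij}\subset S_i$ means the shared arc $\Gamma_{ij}$ bulges toward $S_j$, so $S_i$ is locally convex along $\Gamma_{ij}$; the only region one might need to add when forming $S_i^{\mathrm{Conv}}$ across this boundary is precisely the lune $A_{ij}$, and by hypothesis that lune (with its convex components) is already contained in $S_i$. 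Hence convexifying $S_i$ adds nothing on the $S_j$ side of $\Gamma_{ij}$, so $S_i^{\mathrm{Conv}}\cap S_j=\emptyset$ and $A(j,i)=0$, giving $D(i,j)>0$ without any global statement about $S_i$ sitting inside $S_j^{\mathrm{Conv}}$. In short: rather than pushing $A(i,j)$ up to $1$, aim to push $A(j,i)$ down to $0$; that is where the convexity hypothesis on the components of $A_{ij}$ does its work.
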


\begin{proof}
Since each connected component of $A_{ij}$ are  convex, if $A_{ij} \subset S_{i}$,  $\int_{\Omega} \chi_{i}^{\mathrm{Conv}} \chi_{j} \mathrm{d} x = 0$. In $S_j$ point of view $\int_{\Omega} \chi_{j}^{\mathrm{Conv}} \chi_{i} \mathrm{d} x > 0$ since $A_{ij}$ in $S_i$ is convex, so $D(i,j) > 0$.  If $A_{ij} \subset S_{j}$, the same argument is true changing  $S_{i}$ to  $S_{j}$.
\end{proof}

Proposition \ref{prop: local convex regions} is not limited to a pair of shapes that share only one boundary segment; if they share multiple segments, this proposition can be applied to each connected boundary segment one by one. The depth ordering between these two shapes considers the sum of all boundary segments.

\begin{proposition}\label{prop: transitivity}
    If $S_{i} \rightarrow S_{j}$, $S_{j} \rightarrow S_{k}$ and $S_{k} \nrightarrow S_{i}$, then our depth ordering algorithm identifies $S_{i} \rightarrow S_{k}$.
\end{proposition}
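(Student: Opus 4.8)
The statement asserts a form of transitivity for the depth ordering *after* graph construction and cycle removal. The key subtlety is that $S_i \rightarrow S_j$ and $S_j \rightarrow S_k$ are conclusions drawn from $D(i,j) > \delta$ and $D(j,k) > \delta$, but $D$ is defined via covered-area measures involving convex hulls, and these do not compose transitively as real inequalities in an obvious algebraic way. So the plan is \emph{not} to try to prove $D(i,k) > 0$ directly from $D(i,j)>0$ and $D(j,k)>0$ — that is false in general. Instead, I would read the claim as a statement about what the \emph{algorithm} outputs: the directed graph $G(M,E)$ already contains \emph{some} edge between $S_i$ and $S_k$ (they are compared pairwise, and by hypothesis $S_k \nrightarrow S_i$, so either $S_i \rightarrow S_k$ or they are on the same level), and after topological sort the linear order $\mathcal{D}$ must place $S_i$ above $S_k$.

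First I would recall that every pair of shape layers is compared (Section~\ref{subsec: depth graph}), so between $S_i$ and $S_k$ exactly one of three things holds: $S_i \rightarrow S_k$, $S_k \rightarrow S_i$, or no edge (same depth level, $|D(i,k)| \le \delta$). The hypothesis $S_k \nrightarrow S_i$ rules out the second. So it remains to rule out the third, i.e., to show the algorithm does not leave $S_i$ and $S_k$ unordered, or — if it does — that topological sort still returns $S_i$ before $S_k$. Here I would invoke the cycle-removal and topological-sort machinery: the edges $S_i \rightarrow S_j$ and $S_j \rightarrow S_k$ give a directed path $S_i \rightarrow S_j \rightarrow S_k$ in $G$. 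Since $S_k \nrightarrow S_i$, adding this path creates no cycle through these three nodes, so none of these two edges is removed by the convex-hull-symmetric-difference step on account of this triangle. Consequently the path $S_i \rightarrow S_j \rightarrow S_k$ survives into the final acyclic graph, and any topological sort of a DAG respects reachability: $S_i$ reaches $S_k$, hence $\mathcal{D}(i) < \mathcal{D}(k)$, which is exactly "$S_i \rightarrow S_k$" in the sense of the final ordering.

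The main obstacle — and the point I would be most careful to state precisely — is the possibility that $D(i,k) \in [-\delta,\delta]$, so that the \emph{pairwise} comparison step inserts no edge $E_{i,k}$ even though $S_i \rightarrow S_j \rightarrow S_k$. The proposition is still true in this case, but only because the \emph{global} ordering is determined by topological sort on the full edge set, not by the single comparison $D(i,k)$; the surviving path $S_i \rightarrow S_j \rightarrow S_k$ forces the order. I would phrase the conclusion accordingly: "our depth ordering algorithm identifies $S_i \rightarrow S_k$" should be understood as $\mathcal{D}(i) < \mathcal{D}(k)$ in the final permutation, not as $D(i,k) > \delta$. I would also note the one genuinely needed consistency check: that the hypothesis $S_k \nrightarrow S_i$ guarantees the edge $E_{k,i}$ is absent, so there is no competing path or edge that topological sort could use to place $S_k$ above $S_i$; combined with acyclicity of the final graph this pins down the order uniquely on this pair. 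The remaining details (well-definedness of topological sort on a DAG, respect for reachability) are standard~\cite{cormen01introduction} and I would cite rather than prove them.
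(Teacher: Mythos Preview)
Your proposal is correct and follows essentially the same route as the paper: split on the two remaining outcomes of the pairwise comparison of $S_i$ and $S_k$ (given $S_k \nrightarrow S_i$), handle the direct-edge case immediately, and in the no-edge case invoke the path $S_i \rightarrow S_j \rightarrow S_k$ together with the reachability-respecting property of topological sort. Your write-up is in fact more careful than the paper's, which simply appeals to ``the transitive property of directed graph'' without separating the final-ordering conclusion $\mathcal{D}(i) < \mathcal{D}(k)$ from the pairwise edge condition $D(i,k) > \delta$ or discussing cycle removal.
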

\begin{proof}
    Given that $S_{k} \nrightarrow S_{i}$, we have either $S_{i} \rightarrow S_{k}$ or there is no depth ordering between $S_{i}$ and $S_{k}$ by direct computation of $D(i,k)$. The first case is exactly the result, and in the second, by the transitive property of directed graph, it gives $S_{i}$ is above $S_{k}$.
\end{proof}

Proposition \ref{prop: transitivity} identifies the condition of the transitive property of our depth ordering.  When extending Proposition \ref{prop: transitivity} to $n$ shape layers, i.e. given $D(i_{1}, i_{2}), D(i_{2}, i_{3}), \cdots, D(i_{n-1}, i_{n}) > 0$, to identify if a shape layer $S_{i_{0}}$ satisfies $S_{i_{0}} \rightarrow S_{i_{1}} \rightarrow \cdots \rightarrow S_{i_{n}}$, one need to verify $D(i_{0}, i_{j}) \leq 0$ for all $j = 1, \cdots , n$. Proposition \ref{prop: transitivity} gives theoretical guarantee of the natural transitive property of depth ordering in our model, thus once $S_{i_{0}} \rightarrow S_{i_{1}}$ is verified other shapes' depth information follows.

\begin{figure}
\centering
\begin{subfigure}[b]{.34\textwidth}
    \centering
    \subcaption{}
    \includegraphics[width = \textwidth]{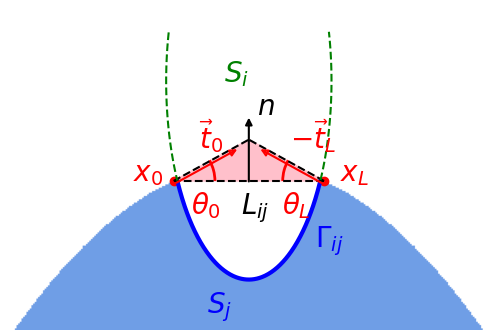}
\end{subfigure}
\begin{subfigure}[b]{.34\textwidth}
    \centering
    \subcaption{}
    \includegraphics[width = \textwidth]{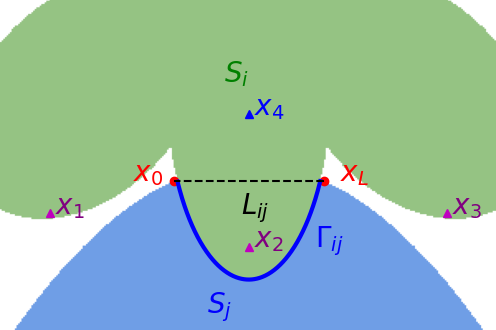}
\end{subfigure}
\begin{subfigure}[b]{.3\textwidth}
    \centering
    \subcaption{}
    \includegraphics[width = \textwidth]{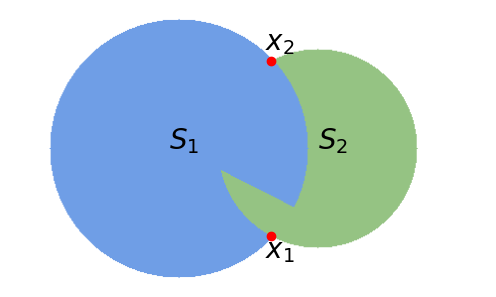}
\end{subfigure}
\caption{(a) The straight line $L_{ij}$ and boundary segment $\Gamma_{ij}$, the two endpoints $x_{0}$ and $x_{L}$, the tangent directions $\vec{t}_{0}$ and $-\vec{t}_{L}$, and the angles they make with $L_{ij}$ as $\theta_{0}$ and $\theta_{L}$.  The pink region is the bounding triangle $T^{j}$.  
(b) An example where $S_{j}$ is one-sided to $L_{ij}$, while the adjacent shape $S_{i}$ is not.  (c) The T-junctions $x_1$ and $x_2$ give conflicting information about the depth ordering between $S_1$ and $S_2$, while the proposed depth ordering gives $S_{2} \rightarrow S_{1}$ showing robustness of the proposed method.}
\label{fig: non example of ordering}
\end{figure}

For further analysis, we use the following notations as shown in Figure \ref{fig: non example of ordering}(a).  Let $S_{i}$ and $S_{j}$ be two adjacent shapes sharing $\Gamma_{ij}$ which is one connected mutual boundary segment. Let the two endpoints of $\Gamma_{ij}$ be $x_{0}$ and $x_{L}$, and $L_{ij}$ be the straight line connecting $x_{0}$ and $x_{L}$.  Let $\mathbf{r}: [0,1] \to \partial S_{j}$ be a piecewise differentiable parameterization of the closed boundary of $S_{j}$ such that for some $0 < s_{0} < s_{L} < 1$, we have $\mathbf{r}(s_{0}) = x_{0}$ and $\mathbf{r}(s_{L}) = x_{L}$.  We denote two tangent vectors to be \[ \vec{t}_{0} = \lim_{s \to s_{0}^{-}}\frac{\mathrm{d}\mathbf{r}}{\mathrm{d}s}\;\; \text{and} \;\;  \vec{t}_{L} = \lim_{s \to s_{L}^{+}}\frac{\mathrm{d}\mathbf{r}}{\mathrm{d}s}.\]
We denote $\theta_{0}$ be the angle between the vectors $\vec{t}_{0}$ and $x_{L} - x_{0}$, $\theta_{L}$ be the angle between $-\vec{t}_{L}$ and $x_{0} - x_{L}$, and  $\vec{n}$ be a vector orthogonal to $x_{L} - x_{0}$.

\begin{definition} \label{def: concave in shape}
Let $S_j$ be a shape, $L_{ij}$ be a straight line segment with two endpoints $x_{0}$ and $x_{L}$ on $\partial S_j$, and $\vec{n}$ be a vector orthogonal to $L{ij}$.  We denote a shape $S_j$ is \textbf{one-sided to $L_{ij}$} if for any $x \in S_j$, $\vec{n} \cdot (x - x_{0})$ (or $\vec{n} \cdot (x - x_{L})$) is either non-positive or non-negative.  
\end{definition}

Definition \ref{def: concave in shape} describes shapes that completely lies on one side of the line segment $L_{ij}$. In Figure \ref{fig: non example of ordering}(b), $S_{j}$ is a shape that is one-sided to $L_{ij}$; but $S_{i}$ is not.  Considering four points $x_1, x_2, x_3$ and $x_4$ as examples, the signs of $\vec{n} \cdot (x_{1} - x_{0}), \vec{n} \cdot (x_{2} - x_{0})$ and $ \vec{n} \cdot (x_{3} - x_{0})$ are opposite sign of $\vec{n} \cdot (x_{4} - x_{0})$.  While for $S_j$, for any point $x \subset S_j$, $\vec{n} \cdot (x - x_{0})$ will be the same sign.

While we prefer smooth boundaries and leverage Euler's elastica curvature-based inpainting model when inpainting shape layers, we use convex hull in \eqref{eq: ordering} to estimate occluded area for computational efficiency in the depth ordering step. In the following, we show the error of convex hull estimation compared against curvature based inpainting to explore when convex hull is a reasonable approximation for depth ordering computation. In \cite{kang2006error}, the authors explored various error analysis results for image inpainting, including Total Variation (TV) inpainting model for piecewise constant images. TV inpainting has great similarity with convex hull estimation that some results from~\cite{kang2006error} is transferable upon certain conditions. We approach the error analysis as region area comparison, since the shape layer $S_i$ is a region and the main error comes from the region difference, since color is not considered for convexification.   

\begin{definition}\label{def: bounding triangle}
Suppose $S_{j}$ is one-sided to $L_{ij}$.  Consider a straight line $L_0$, extending from $x_0$ following the direction of $\vec{t}_{0}$ which forms an angle $\theta_{0}$, and another straight line $L_L$, extending from $x_L$ with direction  $- \vec{t}_{L}$ and an angle $\theta_{L}$.  If both angles are less than $\frac{\pi}{2}$ and at least one of them is strictly less $\frac{\pi}{2}$, we define the \textbf{bounding triangle} $T^{j}$ of $S_{j}$ with respect to $L_{ij}$ as the triangle formed by $L_{ij}$, $L_0$ and $L_L$.  
\end{definition}

When a shape $S_j$ is one-sided to $L_{ij}$, the bounding triangle is formed on the other side of $L_{ij}$ as in  Figure \ref{fig: non example of ordering}(a).   
The area of this bounding triangle can be computed by considering $\frac{1}{2} |L_{ij}| a \sin \theta_0$ where $a$ is the length of side opposite to angle $\theta_L$, and  $|\cdot|$ representing the length.  Using the law of sines, $ \frac{a}{\sin \theta_L} = \frac{|L_{ij}|}{\sin (\pi - (\theta_0+\theta_L))}$ which gives $a = \frac{|L_{ij}| \sin \theta_L}{\sin(\theta_0+\theta_L)}$, this gives the area of bounding triangle to be
\begin{equation}\label{eq:triangleArea}
\text{Area of } T^{j} = |T^j| := \frac{| L_{ij}|^{2}_{2}}{2} \frac{\sin \theta_{0} \sin \theta_{L}}{\sin(\theta_{0} + \theta_{L})}. 
\end{equation}

Next proposition shows that any smooth curve connecting $x_0$ and $x_L$ smoothly with certain condition is bounded within the bounding triangle.

\begin{proposition}\label{prop: smooth connection within bounding triangle}
Suppose $S_{j}$ is one-sided to $L_{ij}$ and $T^j$ be the bounding triangle.  We define a local coordinate system where $L_{ij}$ is $x$-axis with $x_{0}$ at the origin and $x_{L}$ on the positive side.  We let $\vec{e}_{1}$ be the first standard basis vector $\begin{bmatrix} 1 \\ 0 \end{bmatrix}$ in this local coordinate system. 
Consider a smooth parameterized curve $\gamma(t) : [0,1] \to \mathbb{R}^{2}$ which connects $\gamma(0)=x_{0}$ and $\gamma(1)=x_{L}$.  If the angle between $\gamma^{\prime}(t)$ and  $\vec{e}_{1}$ monotonically decreases from $\theta_{0}$ to $-\theta_{L}$, then this smooth curve $\gamma(t)$ is within the bounding triangle $T^{j}$ for any $t \in [0,1]$.
\end{proposition}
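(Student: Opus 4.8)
The plan is to work in the stated local coordinate system, write the curve $\gamma$ in terms of its tangent angle, and exhibit explicit inequalities that confine $\gamma$ between the two lines $L_0$, $L_L$ and above $L_{ij}$. Write $\varphi(t)$ for the angle between $\gamma'(t)$ and $\vec e_1$, so that $\gamma'(t) = |\gamma'(t)| (\cos\varphi(t), \sin\varphi(t))$; by hypothesis $\varphi$ is continuous and monotonically decreasing with $\varphi(0)=\theta_0$ and $\varphi(1)=-\theta_L$. Since $\theta_0,\theta_L \in (0,\tfrac{\pi}{2}]$ (with at least one strict), we have $\varphi(t)\in[-\theta_L,\theta_0]\subset(-\tfrac{\pi}{2},\tfrac{\pi}{2})$ for all $t$, and in particular $\cos\varphi(t) > 0$, so the $x$-coordinate of $\gamma$ is strictly increasing. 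This lets me reparameterize by $x\in[0,|L_{ij}|]$ and regard $\gamma$ as the graph of a function $y = g(x)$ with $g(0)=g(|L_{ij}|)=0$ and $g'(x) = \tan\varphi$, where now $\varphi$ is a decreasing function of $x$ running from $\theta_0$ down to $-\theta_L$.

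The key steps: First, confinement above $L_{ij}$, i.e. $g(x)\ge 0$. Since $g' = \tan\varphi$ is decreasing in $x$ (composition of the increasing $\tan$ on $(-\tfrac\pi2,\tfrac\pi2)$ with the decreasing $\varphi$), $g$ is concave; a concave function vanishing at both endpoints of an interval is nonnegative there, so $g\ge 0$. Second, confinement below the line $L_0$ through the origin with slope $\tan\theta_0$: because $g$ is concave and $g'(0^+) = \tan\theta_0$, we have $g(x) \le g(0) + x\tan\theta_0 = x\tan\theta_0$ for all $x$, which is exactly the statement that $\gamma$ lies on the $L_{ij}$-side of $L_0$. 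Third, confinement below the line $L_L$ through $x_L=(|L_{ij}|,0)$ with slope $-\tan\theta_L$ (note $-\vec t_L$ makes angle $\theta_L$ with $x_0-x_L$, so in the $\vec e_1$ frame the tangent at $x_L$ has angle $-\theta_L$): by concavity and $g'(|L_{ij}|^-) = -\tan\theta_L$, we get $g(x)\le g(|L_{ij}|) + (x-|L_{ij}|)(-\tan\theta_L) = -(x-|L_{ij}|)\tan\theta_L$, i.e. $\gamma$ is on the correct side of $L_L$. These three half-plane constraints are precisely the three sides of $T^j$ (the triangle is nonempty and bounded because $\theta_0+\theta_L < \pi$, guaranteed by both angles being $\le \tfrac\pi2$ with one strict), so $\gamma(t)\in T^j$ for all $t$.

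The main obstacle is handling regularity and degeneracies cleanly rather than any deep idea: the reduction to a graph $y=g(x)$ needs $\cos\varphi>0$ throughout, which uses the strict bound $\theta_0,\theta_L<\tfrac\pi2$ — but the hypothesis only gives $\le\tfrac\pi2$ with at least one strict, so I should check the boundary case where one angle equals $\tfrac\pi2$ (then $\tan\varphi$ is unbounded at one endpoint and $g'$ blows up there, but $g$ is still concave and the one-sided tangent-line bounds remain valid as limits, so the argument survives — this edge case warrants an explicit remark). I also need "monotonically decreasing" to be interpreted as nonstrict-but-enough to force concavity of $g$; if $\varphi$ is merely non-increasing the same concavity conclusion holds, so no issue. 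Finally I should note that $\gamma$ need not be the boundary segment $\Gamma_{ij}$ of $S_j$ itself — the proposition is a purely geometric statement about any curve with the stated tangent-angle behavior — so no appeal to the one-sidedness of $S_j$ is actually needed in the proof beyond ensuring $T^j$ is well-defined via Definition \ref{def: bounding triangle}.
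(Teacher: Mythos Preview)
Your proof is correct and takes a genuinely different route from the paper's.

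The paper's argument is a two-line contradiction sketch: it supposes some $\gamma(t_0)$ lies outside $T^j$ and asserts that then ``the angle between $\gamma(t_0)-x_0$ and $\vec e_1$ is either larger than $\theta_0$ or not monotonically decreasing,'' declaring this a contradiction. That argument implicitly passes from the tangent-angle hypothesis to a statement about the \emph{secant} angle $\gamma(t_0)-x_0$, without justifying the link; it is essentially a heuristic pointing at the right phenomenon.

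Your approach is direct and explicit: reparameterize $\gamma$ as a graph $y=g(x)$ (using $\cos\varphi>0$), observe that $g'=\tan\varphi$ is decreasing so $g$ is concave, and then read off the three half-plane constraints defining $T^j$ from standard concave-function inequalities (nonnegativity between zeros; domination by each endpoint tangent line). This is more rigorous and makes transparent exactly which property (concavity) does the work; it also cleanly separates the three sides of the triangle, and your remark on the boundary case $\theta_0$ or $\theta_L=\tfrac\pi2$ is a welcome addition. What the paper's sketch buys is brevity; what your argument buys is an actual proof, plus the observation that the one-sidedness of $S_j$ is used only to make $T^j$ well-defined and plays no further role.
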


\begin{proof}
Suppose there is $t_{0} \in [0,1]$ such that $\gamma(t_{0})$ is outside of the bounding triangle $T^{j}$. Then, the angle between $\gamma(t_{0}) - x_{0}$ and $\vec{e}_{1}$ is either larger than $\theta_{0}$ or not monotonically decreasing.  This is a contradiction. 
\end{proof}

We note that when we convexify the shape layer $S_j$ in the occluded region using  Euler's elastica curvature-based inpainting model \eqref{eq: curvature based energy}, the convexified result is also a curve satisfying the condition in Proposition \ref{prop: smooth connection within bounding triangle}. Since if the angle between the tangent vector and $\vec{e}_{1}$ increases, the elastica energy $\int_{\gamma} (a + b \kappa^{2}) \mathrm{d}s$ also increases.
Euler's elastica curvature-based inpainting model produces a natural inpainting result through the minimization of a combination of arc length and curvature. By establishing this upper bound on error, we demonstrate the consistency of our vectorization approach.

When the angles $\theta_0$ and $\theta_L$ are small, the area of bounding triangle \eqref{eq:triangleArea} is small, i.e., convex hull and the tangent direction extension are similar to each other.  While when both angles $\theta_0$ and $\theta_L$ are near $\frac{\pi}{2}$, then the denominator $\sin(\theta_0+\theta_L)$ becomes near zero, and this triangle will be very large.  In the following, we investigate conditions on this bounding triangle's angles for having a consistent depth ordering between using $\gamma(t)$ as defined in Proposition \ref{prop: smooth connection within bounding triangle}, which includes Euler's elastica model, and convex hull to estimate occluded area.

\begin{proposition}\label{prop: euler vs convex hull}
Suppose that $S_{i}$ is convex and $S_{j}$ is one-sided to $L_{ij}$. Let $\chi^{\gamma}_{j}$ be the characteristic function of shape of $S_{j}$ constructed by $\gamma$ which satisfies the condition in Proposition \ref{prop: smooth connection within bounding triangle} and connects $x_{0}$ and $x_{L}$. We denote the depth ordering given by $\gamma$ to be $D_\gamma(i,j)$ and suppose $D_{\gamma}(i,j) > 0$. If the two angles $\theta_{0}$ and $\theta_{L}$ satisfy
\begin{equation}\label{eq:angles_small}
|T^j| = \frac{| L_{ij}|^{2}_{2}}{2} \frac{\sin \theta_{0} \sin \theta_{L}}{\sin(\theta_{0} + \theta_{L})} < D_\gamma(i,j) \int_{\Omega} \chi_{i} \mathrm{d}x,
\end{equation}
then $D(i,j) > 0$, i.e. the depth ordering $D_\gamma(i,j)$ given by $\gamma$ is the same as that given by convex hull in $D(i,j)$.
\end{proposition}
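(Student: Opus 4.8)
The plan is to bound the discrepancy between $D(i,j)$ and $D_\gamma(i,j)$ by the area $|T^j|$ of the bounding triangle: I will establish the estimate $D(i,j)\ge D_\gamma(i,j)-|T^j|/\int_\Omega\chi_i\,\mathrm{d}x$, so that the smallness condition \eqref{eq:angles_small} on $\theta_0,\theta_L$ immediately forces $D(i,j)>0$.

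The first step is to exploit the convexity of $S_i$ to collapse both energies to a single occlusion term. Since $S_i$ is convex we have $\chi_i^{\mathrm{conv}}=\chi_i$, and distinct shape layers are disjoint, so $\int_\Omega\chi_i^{\mathrm{conv}}\chi_j\,\mathrm{d}x=\int_\Omega\chi_i\chi_j\,\mathrm{d}x=0$. By \eqref{eq: ordering} this gives $D(i,j)=A(i,j)=(\int_\Omega\chi_j^{\mathrm{conv}}\chi_i\,\mathrm{d}x)/(\int_\Omega\chi_i\,\mathrm{d}x)$. The same observation shows the ``how much $S_j$ occludes $S_i$'' term of $D_\gamma(i,j)$ is nonnegative (a ratio of a product of characteristic functions), hence $D_\gamma(i,j)\le(\int_\Omega\chi_j^{\gamma}\chi_i\,\mathrm{d}x)/(\int_\Omega\chi_i\,\mathrm{d}x)$, where $\chi_j^\gamma$ is the characteristic function of the shape $S_j^\gamma$ obtained from $S_j$ by replacing the shared boundary arc $\Gamma_{ij}$ with $\gamma$.

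The geometric heart of the argument is the area bound $|S_j^\gamma\setminus S_j^{\mathrm{conv}}|\le|T^j|$. I would cut $\Omega$ along the line carrying $L_{ij}$. On the side containing $S_j$, passing from $S_j$ to $S_j^\gamma$ adds only the ``notch'' region $A_{ij}$ bounded by $\Gamma_{ij}$ and $L_{ij}$; because $S_j$ is one-sided to $L_{ij}$, its convex hull stays entirely on this side and contains both $S_j$ and $A_{ij}$ --- indeed $\partial A_{ij}=\Gamma_{ij}\cup L_{ij}\subseteq\overline{S_j^{\mathrm{conv}}}$, and a bounded region whose boundary lies in a convex set is itself contained in it. On the opposite side of $L_{ij}$, the new part of $S_j^\gamma$ is exactly the region $B^\gamma$ trapped between $L_{ij}$ and $\gamma$; by Proposition \ref{prop: smooth connection within bounding triangle} the curve $\gamma$ lies inside $T^j$, and since $T^j$ is a triangle with $L_{ij}$ as one of its edges, convexity gives $B^\gamma\subseteq T^j$. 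Combining, $S_j^\gamma\setminus S_j^{\mathrm{conv}}\subseteq B^\gamma\subseteq T^j$, and in particular $\int_\Omega\chi_j^{\mathrm{conv}}\chi_i\,\mathrm{d}x\ge\int_\Omega\chi_j^{\gamma}\chi_i\,\mathrm{d}x-|T^j|$.

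Chaining the inequalities then finishes the proof:
\[
D(i,j)=\frac{\int_\Omega\chi_j^{\mathrm{conv}}\chi_i\,\mathrm{d}x}{\int_\Omega\chi_i\,\mathrm{d}x}\ \ge\ \frac{\int_\Omega\chi_j^{\gamma}\chi_i\,\mathrm{d}x-|T^j|}{\int_\Omega\chi_i\,\mathrm{d}x}\ \ge\ D_\gamma(i,j)-\frac{|T^j|}{\int_\Omega\chi_i\,\mathrm{d}x}\ >\ 0,
\]
the last step being exactly hypothesis \eqref{eq:angles_small}. I expect the main obstacle to be the third paragraph: one has to argue carefully that $S_j^\gamma$ and $S_j^{\mathrm{conv}}$ differ only inside the bounding triangle, which is precisely where one-sidedness of $S_j$ (so the convex hull absorbs the notch $A_{ij}$ and never crosses $L_{ij}$) and the containment $\gamma\subseteq T^j$ from Proposition \ref{prop: smooth connection within bounding triangle} are indispensable; degenerate configurations in which $\Gamma_{ij}$ or $\gamma$ meet $L_{ij}$ at interior points would only require slicing $A_{ij}$ and $B^\gamma$ into monotone pieces and leave the area bookkeeping unchanged.
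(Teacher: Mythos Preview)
Your proposal is correct and follows essentially the same approach as the paper: reduce both $D(i,j)$ and $D_\gamma(i,j)$ to single occlusion terms using the convexity of $S_i$, then bound the difference $\int_\Omega(\chi_j^{\gamma}-\chi_j^{\mathrm{conv}})\chi_i\,\mathrm{d}x$ by $|T^j|$ and apply \eqref{eq:angles_small}. The paper states this last bound in one line (``the difference area $\epsilon$ is within the bounding triangle $T^j$''), whereas you spell out the geometric reason --- the notch $A_{ij}$ on the $S_j$ side is absorbed by $S_j^{\mathrm{conv}}$ via one-sidedness, and the overshoot $B^\gamma$ on the other side sits inside $T^j$ by Proposition~\ref{prop: smooth connection within bounding triangle} --- which is a welcome clarification rather than a different route.
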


\begin{proof}
We consider the difference $\epsilon = \int_{\Omega}(\chi^{\gamma}_{j} - \chi^{\mathrm{Conv}}_{j})\chi_{i} \mathrm{d}x$ to represent the area difference between the region constructed by smooth curve $\gamma$ and convex hull in $D(i,j)$. 
Since $S_{i}$ is convex, $A(j,i) = 0$, and $D(i,j) = A(i,j) = \frac{\int_{\Omega} \chi_{j}^{\mathrm{Conv}}\chi_{i} \mathrm{d}x}{\int_{\Omega} \chi_{i} \mathrm{d}x}$, and 
\begin{equation*}
D_\gamma (i,j) = \frac{\int_{\Omega} \chi^{\gamma}_{j}\chi_{i}\mathrm{d}x}{\int_{\Omega}\chi_{i}\mathrm{d}x} 
= \frac{\int_{\Omega} \chi^{\mathrm{Conv}}_{j}\chi_{1}\mathrm{d}x}{\int_{\Omega}\chi_{i}\mathrm{d}x} + \frac{\epsilon}{\int_{\Omega}\chi_{i}\mathrm{d}x} 
= D(i,j) + \frac{\epsilon}{\int_{\Omega}\chi_{i}\mathrm{d}x}. 
\end{equation*}
The difference area $\epsilon$ is within the bounding triangle $T^j$ in  \eqref{eq:triangleArea},  and from condition \eqref{eq:angles_small},
\[ D_\gamma(i,j) - D(i,j) = \frac{\epsilon}{\int_{\Omega}\chi_{i}\mathrm{d}x} < \frac{ |T^j|} {\int_{\Omega}\chi_{i}\mathrm{d}x} < \frac{D_{\gamma}(i,j) \int_{\Omega}\chi_{i}\mathrm{d}x}{\int_{\Omega}\chi_{i}\mathrm{d}x} = D_{\gamma}(i,j)\]
Thus if $D_{\gamma}(i,j) > 0$, then $D(i,j) > 0$.
\end{proof}

Proposition \ref{prop: euler vs convex hull} supports the use of convex hull for estimating depth ordering, since it shows using convex hull gives the same depth ordering as any curvature-based inpainting as long as the constructed result $\gamma(t)$ and the shape layer boundary satisfy the regularity given in Proposition \ref{prop: smooth connection within bounding triangle} and \ref{prop: euler vs convex hull}.  
This can be generalized that if the given color quantized raster image $f$ has all the shape layers $S_i$ satisfying the regularity condition \eqref{eq:angles_small} in Proposition \ref{prop: euler vs convex hull}, that any shape layer is either convex or one-sided to some straight line segment, then the proposed depth ordering given by $D(i,j)$ is consistent with the depth ordering given by $D_\gamma(i,j)$ using $\gamma$ defined in Proposition \ref{prop: smooth connection within bounding triangle}. 
Since convex hull is more computationally efficient compared to curvature-based inpainting for depth ordering, we leverage on convex hull method's efficiency. 

\textbf{Remark}  We note that the proposed depth ordering does not use T-junction information, while using T-junctions to determine depth ordering has certain advantages as shown in \cite{palou2013depth}.  There are three major reasons for using area based depth ordering $D(i,j)$ in this paper. First, from the given color quantized image $f$ it is not easy to compute accurate T-junction due to staircase effect, especially for small regions. Second, the number of T-junctions in a real image could be quite large, possibly larger than the number of connected components. Last but not least, we found that using area based measure is more stable.
Figure \ref{fig: non example of ordering} (c) presents such an example, where T-junction at $x_1$ and $x_2$ give conflicting information on which shape is above which.  However, using the depth ordering \eqref{eq: ordering}, our method computes $D(1,2) = A(1,2) - A(2,1) \approx 0.02514-0.03538 < 0$ and gives $S_{2} \rightarrow S_{1}$.  

\section{Euler's elastica based model for inpainting shape layers }\label{sec:elastica}

To convexify each shape layer's occluded region, we use Euler's elastica model, which dates back to 1744, when Euler~\cite{euler1744methodus} solved the well-known elastica problem, which is to find a curve that minimises a linear combination of arc-length and squared curvature term. Natural extension along curvature direction in imaging is a very attractive feature that these models are explored in image segmentation as well as in image inpainting. Mumford explored visual perception and construction of elastica model for computer vision~\cite{mumford1994elastica}, Masnou et al.~\cite{morel1998elastica} presented a framework for level line structure to achieve disocclusion. Shen et al.~\cite{inpaint2003} studied the mathematical perspective and numerically computed Euler's elastica based inpainting model. Chan et al.~\cite{chan2001nontexture} explored a curvature-based inpainting model.  Ballester et al.~\cite{sapiro2001inpaint} explored joint interpolation of vector fields and gray level to incorporate curvature directions, and  Chan et al.~\cite{chan2001nontexture} explored curvature-driven diffusion. Bredies et al.~\cite{bredies2015euler} suggested a convex, lower semi-continuous modification to the model. 
Despite the versatility of Euler's elastica model, difficulty still lies in its high non-linearity and non-convex property which makes computation slow and difficult. A fast algorithm based on Augmented Lagrangian Method is explored in~\cite{tai2011alm}. In~\cite{Yashtini2016AFR}, authors suggested two numerical schemes for the Euler's elastica problem that are based on operator splitting and alternating direction method of multipliers, and in~\cite{zhu2017}, authors used Augmented Lagrangian Method for elastica based segmentation model.  
For large domain inpainting problem, which is the case considered in this paper, we take an alternative direction for more stable large region computation: de Giorgi~\cite{DeGiorgi1991} studied a $\Gamma$-convergence approach, and in~\cite{kang2014}, authors adopted this approach for illusory shapes construction using large domain curvature-based inpainting.

\subsection{Corner phase function and inpainting}
\label{subsec: phase and support}

One of the unique idea of illusory shapes construction in~\cite{kang2014} is to get a clue of illusory shape from convex corners of the given shapes. These convex corners are assumed to be generated from occlusion by an illusory shape, and we also extend such ideas for convexifying shape layers. In particular, we utilize shape-covered regions $\mathcal{O}_{i}$ in  \eqref{eq:O_i} to make sure inpainting of $S_{i}$ happens inside and only inside $\mathcal{O}_{i}$, such that after stacking vectorized shape layers, the inpainted regions would not be seen without moving shape layers on top away. We first introduce the following definitions:
 
\begin{definition}\label{def:endpoint}
Let $\partial S_{i}$ be the boundary of shape $S_{i}$, and $\Gamma_{i}$ be a part of  $\partial S_{i}$ touching $O_i$, i.e., $\Gamma_{i}:=\partial S_{i} \bigcap \partial O_i$. Let $N_\Gamma$ be the total number of connected curves $\gamma_{il}$ in $\Gamma_{i}$, thus $\Gamma_i=\bigcup_{l=1}^{N_\Gamma} \{ \gamma_{il} \}$ and each $\gamma_{ij}$ and $\gamma_{ik}$ are disjoint curves for $j \neq k$. We define the two endpoints of $\gamma_{il}$ as the \textbf{inpainting endpoints}, $b_{il0}$ and $b_{il1}$, and let the set of all inpatining endpoints to be $\mathcal{B}_{i} = \bigcup_{l=1}^{N_\Gamma} \{b_{ilj} | j =0, 1\}$.
\end{definition}

In Figure \ref{fig: sun euler result} (a), the orange region has the blue line as $\Gamma_{i}$ with two red endpoints.

\begin{definition} 
Let $\partial S_{i}$ be the boundary of $S_{i}$, and $\mathcal{B}_{i} = \bigcup_{l=1}^{N_\Gamma} \{b_{ilj} | j =0, 1\}$ be the set of inpainting endpoints for $S_i$.
Since $S_i$ is a closed region, there exists a parameterization of $\partial S_{i}$ such that $s_i(t) : [0,1] \to \partial S_{i}$, and $s_i(0) = s_i(1)$. For an inpainting endpoint $b_{ilj}$, we let $s_i(t_{ilj}) = b_{ilj}$, and let $\mathbf{n}(t_{ilj}^{-})$ and $\mathbf{n}(t_{ilj}^{+})$ be the pre and post-normal vector of $s_i(t)$ at an inpainting endpoint $b_{ilj}$.  Let $B(b_{ilj},r)$ be a small disk centered at $b_{ilj}$ with a small fixed radius $r$.  
We define \textbf{inpainting corner phase function} $\psi_{b_{ilj}}: B(b_{ilj},r) \to \{-1, 0, 1\}$ as 
\begin{equation}\label{eq: phase def}
    \psi_{b_{ilj}}(x) = \left\{\begin{array}{rl}
        -1, & \text{if $\mathbf{n}(t_{ilj}^{-}) \cdot (x - b_{ilj}) \geq 0$ and $\mathbf{n}(t_{ilj}^{+}) \cdot (x - b_{ilj}) \geq 0$}, \\
    0,  & \text{if $x \in S_i$, and}  \\
    1, & \text{otherwise}.
        \end{array} \right. 
\end{equation}
\end{definition}
Here the region with $\psi_{b_{ilj}}=1$, the purple region in Figure \ref{fig: sun euler result}(a), is the inpainting domain where the region $S_i$ can be extended to, and the region with $\psi_{b_{ilj}}=-1$, the yellow region in Figure \ref{fig: sun euler result}(a), is where $S_{i}$ should not extend to in order to stay faithful to input raster image.  
Using Figure \ref{fig:shapelayer} as an example, if all the black and white regions, $S_1, S_2, S_5$ and $S_6$, are identified to be above the orange sun $S_3$, this makes the light green region $O_3\backslash S_3$ in Figure \ref{fig: sun euler result} (a). Since the orange sun $S_3$ only has one boundary segment (the blue curve) touching its shape-covered region, we have two inpainting corners (red dots).  At these two inpainting endpoints, we define the inpainting corner phase functions. The phase function is equal to $1$ on purple regions, where inpainting is desired, and is equal to $-1$ on yellow region where inpainting is not allowed.

\begin{figure}
\centering
\begin{tabular}{cc}
(a) & (b) \\
\begin{tikzpicture} [spy using outlines={rectangle,red,magnification=5,size=2.25cm, connect spies}]
\node[anchor=south west,inner sep=0] (image) at (1.5,0) {\includegraphics[width = 4cm]{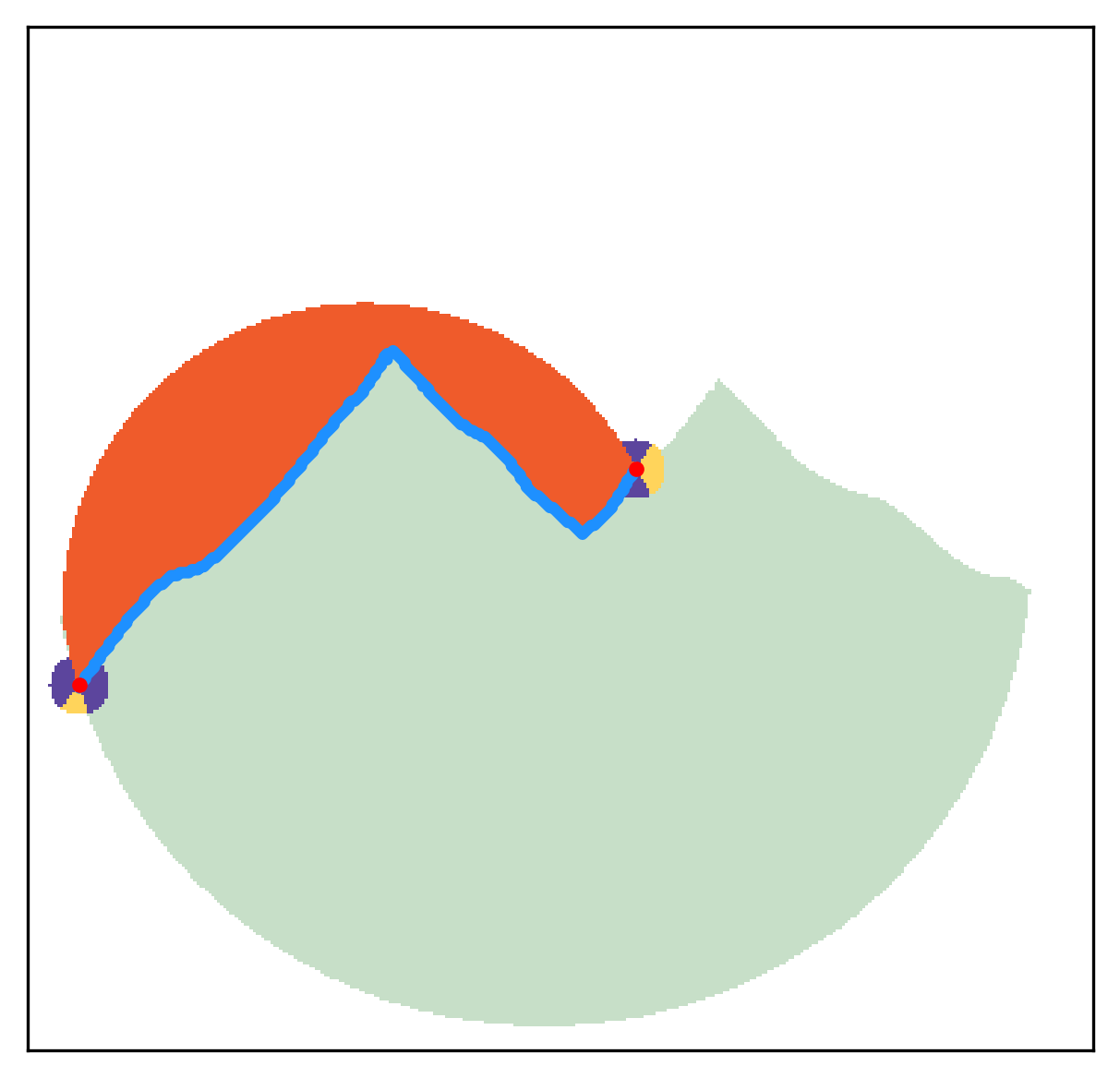}};
\spy[red, connect spies] on (1.85,1.4) in node at (0, 1.8) ;
\spy[red, connect spies] on (3.80,2.2) in node at (7, 1.8) ;
\end{tikzpicture}
&
\includegraphics[width = 4cm]{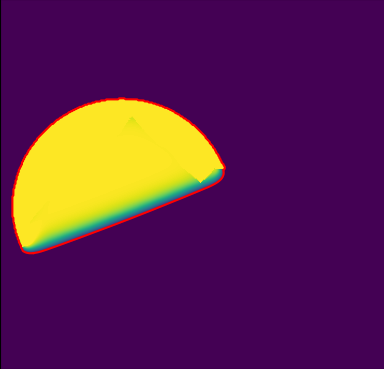}
\end{tabular}
\caption{[Inpainting corner phase function] From the shape layers in Figure \ref{fig:shapelayer}, (a) shows the orange sun $S_3$ and the blue curve $\Gamma_3$.
Two zoomed areas show the inpainting endpoint phase functions, where the yellow region is $\psi(b_{ilj})= -1$ (not to diffuse), and the purple region is  $\psi(b_{ilj})=1$ (to inpaint). (b) shows the curvature inpainted region represented by a phase transition function $u_i >-0.5$, and the red boundary is used to fit \bezier curve for vectorization. Notices the orange region $S_3$ in (a) is convexified to $C_3$ in (b) as a red curve. } 
\label{fig: sun euler result}
\end{figure}

\subsection{Double-well potential model for Euler's elastica curvature-based functional }\label{subsec: gamma euler elastica}

We convexify each shape layer $S_i$ by a phase transition function $u_{i} : \Omega \to [-1,1]$ by minimizing the following \emph{Euler's elastica curvature-based inpainting with shape-covered region $O_i$ constraint}:
\begin{align}\label{eq: energy with phase and support}
    E(u_{i}) &= \int_{O_i} \left( a + b \left( \nabla \cdot \frac{\nabla u_{i}}{|\nabla u_{i}|} \right)^{2} \right) |\nabla u_{i}| \mathrm{d}x + \sum_{l=1}^{N_\Gamma} \sum_{j=0}^{1}\int_{B(b_{ilj},r)\cap O_i} (u_{i} - \psi_{b_{ilj}})^{2} \mathrm{d}x  \\ 
    &\text{subject to } S_{i} \subseteq \{x \mid u_{i}(x) > 0 \} \subseteq O_{i} \nonumber
 \end{align}
where $N_\Gamma$ is the total number of connected curves in $\Gamma_i$ as in Definition \ref{def:endpoint}.  In the first term, the integral is over occluded region $O_i$ which is given by depth ordering, and the second term considers all disks centered at the inpainting corners $\mathcal{B}_{i} = \bigcup_{l=1}^{N_\Gamma} \{b_{ilj} | j =0, 1\}$  of $S_i$. The second term is fitting the phase information given by inpainting corner phase functions, while the first term extends the boundary following the curvature direction.  This model has a constraint that $S_i \subseteq C_i \subseteq O_i$(where $C_{i}$ is the output inpainted shape), and using phase transition function representation, we have an equivalent constraint $S_{i} \subseteq \{x \mid u_{i}(x) > 0 \} \subseteq O_{i}$.

We consider the $\Gamma$-convergence approximated energy proposed by de Giorgi~\cite{DeGiorgi1991}, and use corner-based large domain inpainting as in~\cite{kang2014}.  
We look for inpainted shape layer $C_i$, given the inpainting corner phase function $\psi_{b_j}$ and shape-covered region $O_i$, by minimizing
\begin{equation}
 \label{eq: euler elastica}
E_{\epsilon}(u^{\epsilon}_{i})  =  \displaystyle{\int_{O_i} a \left( \frac{\epsilon}{2} | \nabla u^{\epsilon}_{i} |^{2} + \frac{W(u^{\epsilon}_{i})}{2 \epsilon} \right) 
+ \frac{b}{\epsilon} \left( \epsilon \Delta u^{\epsilon}_{i} - \frac{W^{\prime}(u^{\epsilon}_{i})}{2 \epsilon}\right)^{2} \mathrm{d}x 
+ \sum_{l=1}^{N_\Gamma} \sum_{j=0}^{1} \int_{B(b_{ilj},r) \cap O_i} (u^{\epsilon}_{i} - \psi_{b_{ilj}})^{2} \mathrm{d}x} 
\end{equation}
subject to $S_{i} \subseteq \Omega^{\epsilon}_{i} = \{ x \mid u_{i}^{\epsilon}(x) > 0 \} \subseteq O_i$, where $\epsilon > 0$ is a small constant and $W(x)$ is the double-well potential 
\[ 
    W(x) = (x-1)^{2}(x+1)^{2}.
\] 
It is proved in~\cite{modica1987gradient} that with this choice of double-well potential energy, the solution to  \eqref{eq: euler elastica} $\Gamma$-converges to that of  \eqref{eq: energy with phase and support} as $\epsilon \to 0^{+}$. One advantage of solving  \eqref{eq: euler elastica} is that it produces  a second order PDE using Euler-Lagrange, whereas solving  \eqref{eq: energy with phase and support} directly would produce a higher order PDE which leads to higher unstability.
In~\cite{bellettini1993minimizer}, the authors considered an extended lower semi-continuous envelope of \eqref{eq: curvature based energy} and determined the domain where this envelope is bounded.

Figure \ref{fig: sun euler result} (a) shows the input to the Euler's elastica model \eqref{eq: euler elastica} with two inpainting corners, and the light green region represents the shape-covered region $O_i$.  The orange region is the sun's original area.  Figure \ref{fig: sun euler result} (b) shows the inpainted shape layer $C_i$ bounder by a red closed curve. The image value ranges from $-1$(solid purple) to $1$(solid yellow). Taking the phase transition  $C_i =\{x | u_i(x) > 0.5 \}$ gives a smooth contour that is depicted in red.  The proposed model \eqref{eq: euler elastica} successfully forces the inpainting corners information to diffuse to the direction that is the intersection of the shape-covered region and where the phase function is equal to $1$.

\section{Numerical implementation details} \label{sec:numerical}

We provide details of numerical implementation in this section.  We first mention a denoising step of removing small regions after color quantization in subsection \ref{subsec: denoising}.  The computational details of elastica curvature inpainting model is presented in subsection \ref{ssec:curvature}, and we explain more details in section \ref{subsec: convexification and topological sort}. Pseudocode is included in Appendix  \ref{Asec:codes}.

\subsection{Denoising shape layers}
\label{subsec: denoising}

During the color quantization step, it is common that there are many small connected components whose color is quite different from any of adjacent large regions.  In Figure \ref{fig: noisy regions}(a), we show an example: after $K$-mean color quantization step, some pixels between the large black and white shape are mis-identified, giving a color which is neither black or white.  
This effect is one of the challenges in vectorization: how to either properly remove these small regions or correct their colors to one of the adjacent colors, while keeping the correct features of the boundary. 

\begin{figure}
\centering
\begin{tabular}{cc}
(a) & (b) \\
\begin{tikzpicture}[spy using outlines={rectangle,red,magnification=15,size=2.5cm, connect spies}]
\node [anchor=south west,inner sep=0] (image) at (1.5,0) {\includegraphics[width = 4cm]{pic/mountain_example/quantized_mountain.png}};
\spy[red, connect spies] on (1.95,3) in node at (-0.25, 1.75) ;
\spy[red, connect spies] on (5,1.25) in node at (7.25, 1.75) ;
\end{tikzpicture}
&
\includegraphics[width=4cm]{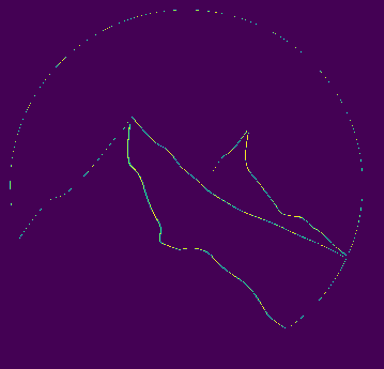}
\end{tabular}
\caption{[Noise layer $S_{noise}$] (a) Two zoomed areas of $f$ in Figure \ref{fig:shapelayer} show different colors (orange) although they are between green/yellow or black/white. (b) The noise layer $S_{noise}$ which is a collection of small noisy regions from image (a). }
\label{fig: noisy regions}
\end{figure}

In this work, we identify such small connected regions as noise and incorporate them into shape-covered regions $O_i$s.
\begin{definition}\label{def: noisy components}    
A small connected region $S_{n_i}$ is defined as a part of the \emph{noise layer}, $S_{noise}$, if it satisfies the following two conditions: 
\begin{enumerate}
\item{The area is small: $|S_{n_i}| \leq \epsilon$, for some small integer $\epsilon$, and }  
\item{$S_{n_i}$ is adjacent to at least two shapes of different colors.}
\end{enumerate}
We define a noise layer 
\begin{equation} \label{eq:noiselayer}
 S_{noise} = \bigsqcup_{\forall n_{i}} S_{n_i}     
\end{equation}
as the union of all noisy small connected regions.
\end{definition} 
Figure \ref{fig: noisy regions}(b) shows the noise layer $S_{noise}$, and this is added to the shape-covered regions, i.e.,  $S_{noise} \subset O_i $ for all $i=1,\dots, N_\mathcal{S}$ where $N_\mathcal{S}$ is the total number of shape layers as in Definition \ref{def:shapelayer}. This helps the boundary of each shape layers to follow the curvature direction and has a less chance to leave a gap between adjacent shapes in vectorization.  The noise layer $S_{noise}$ is not considered to be one of the shape layers, since following Definition \ref{def:shapelayer},  $S_{noise}$ does not have one associated color $c_l$, thus it is not considered for depth ordering nor considered for vectorization in general.

\subsection{Numerical details of Euler's elastica inpainting model} \label{ssec:curvature}

To find a minimum of Euler’s elastica curvature-based double-well potential model with the constraint in  \eqref{eq: euler elastica}, we compute the Euler-Lagrange equation:
\[ 
    a \left( -\epsilon \Delta u^{\epsilon}_{i} + \frac{W^{\prime}(u^{\epsilon}_{i})}{2\epsilon} \right) + 2b \Delta \left( \epsilon \Delta u^{\epsilon}_{i} - \frac{W^{\prime}(u^{\epsilon}_{i})}{2 \epsilon}\right) - \frac{b}{\epsilon^{2}}W^{\prime \prime}(u^{\epsilon}_{i})\left( \epsilon \Delta u^{\epsilon}_{i} - \frac{W^{\prime}(u^{\epsilon}_{i})}{2 \epsilon}\right) = -2\sum_{l=1}^{N_\Gamma} \sum_{j=0}^{1} (u^{\epsilon}_{i} - \psi_{b_{ilj}})\chi_{b_{ilj}}
\]
where $\chi_{b_{ilj}}$ is the characteristic function of the disk $B(b_{ilj},r)$ centered at an inpainting endpoints $b_{ilj}$. 
This is solved on $O_i$ and initialization is given as the characteristic function of $S_i$. 
We introduce an auxiliary function $v^{\epsilon}_{i}$ as in \cite{Ginzburg2009, Cahn1958, kang2014}
\[   
v^{\epsilon}_{i} = \epsilon \Delta u^{\epsilon}_{i} - \frac{W^{\prime}(u^{\epsilon}_{i})}{2 \epsilon},
\]
and solve the following iterative scheme:
\begin{align}
    -av^{\epsilon}_{i} + 2b\Delta v^{\epsilon}_{i} - \frac{b}{\epsilon^{2}}W^{\prime \prime}(u^{\epsilon}_{i})v^{\epsilon}_{i} &= -2 \sum_{l=1}^{N_\Gamma} \sum_{j=0}^{1}(u^{\epsilon}_{i} - \psi_{b_{ilj}})\chi_{b_{ilj}} \label{eq: v subproblem} \\
    \epsilon \Delta u^{\epsilon}_{i} - \frac{W^{\prime}(u^{\epsilon}_{i})}{2 \epsilon} &= v_{i}^{\epsilon} \label{eq: u subproblem} 
\end{align}
with constraints that $ u_{i}^{\epsilon}(x) = 1$ if $x \in S_i$ and $u_{i}^{\epsilon}(x) = -1$ if $x \notin O_i$.  We use the standard forward and backward discrete operators:
\begin{align*} 
    \partial_{x}^{-}v(i, j) &= \begin{cases}
        v(i, j) - v(i-1,j), & 1 < i \leq w \\
        v(1, j) - v(w,j), & i=1 
    \end{cases},
    \partial_{x}^{+}v(i, j) = \begin{cases}
        v(i+1, j) - v(i,j), & 1 \leq i \leq w - 1 \\
        v(1, j) - v(w,j), & i=w 
    \end{cases}\\
    \partial_{y}^{-}v(i, j) &= \begin{cases}
        v(i, j) - v(i,j-1), & 1 < j \leq h \\
        v(i, 1) - v(i,h), & j=1 
    \end{cases},
    \partial_{y}^{+}v(i, j) = \begin{cases}
        v(i, j + 1) - v(i,j), & 1 \leq j \leq h - 1 \\
        v(i, 1) - v(i,h), & j=h 
    \end{cases}. 
\end{align*}

To solve $v$-subproblem  \eqref{eq: v subproblem}, we add an extra Tikhonov type regularization term $cv^{\epsilon}_{i}$ to increase stability for some small positive constant $c$.
\begin{equation}\label{eq: discrete v subproblem}
    (-a + c) v^{\epsilon}_{i} + 2b (\partial_{x}^{-}\partial_{x}^{+} + \partial_{y}^{-}\partial_{y}^{+}) v^{\epsilon}_{i} = -2\sum_{j=1}^{k} (u^{\epsilon}_{i} - \psi_{b_{j}})\chi_{b_{j}} + \frac{b}{\epsilon^{2}}W^{\prime \prime}(u^{\epsilon}_{i})v^{\epsilon}_{i} + c v^{\epsilon}_{i}.
\end{equation}
As in~\cite{tai2011alm, kang2014}, we apply Fast Fourier Transform (FFT) $\mathcal{F}$ to solve the above  \eqref{eq: discrete v subproblem} to utilize the advantage of one-off fast pointwise division and FFT process, without an additional iterative process.  
Thus, for every grid point $(i,j)$, we have the discrete Laplacian operator as: 
\begin{align*}
     \mathcal{F}(\partial_{x}^{-}\partial_{x}^{+} + \partial_{y}^{-}\partial_{y}^{+}) &= -(e^{-\sqrt{-1}2 \pi (i-1) / w} - 1)(e^{\sqrt{-1}2 \pi (i-1) / w} - 1) -(e^{-\sqrt{-1}2 \pi (j-1) / h} - 1)(e^{\sqrt{-1}2 \pi (j-1) / h} - 1) \\
     &= -2\left( 2 - \cos \left( \frac{2\pi(i-1)}{w}\right) - \cos \left( \frac{2\pi(j-1)}{h}\right)\right). 
\end{align*}
Applying discrete Fourier Transform to the both sides of  (\ref{eq: discrete v subproblem}), we have
\begin{equation}\label{eq: fft'ed v subproblem}
\begin{array}{l}
     \left(4b \left( 2 - \cos \left( \frac{2\pi(i-1)}{w}\right)  - \cos \left( \frac{2\pi(j-1)}{h}\right) \right) + a\right)\mathcal{F}v^{\epsilon}_{i}(i,j) \\
     = \mathcal{F} \left( -2\sum_{j=1}^{k} (u^{\epsilon}_{i} - \phi_{p_{j}})\chi_{p_{j}} + \frac{b}{\epsilon^{2}}W^{\prime \prime}(u^{\epsilon}_{i})v^{\epsilon}_{i} + cv^{\epsilon}_{i}\right)
\end{array}
\end{equation}
At each point, $v^{\epsilon}_{i}$ is computed by a division and an inverse FFT.

To solve $u$-subproblem \eqref{eq: u subproblem}, we similarly solve for $u^{\epsilon}_{i}$ with a Tikhonov type  regularization term $cu^{\epsilon}_{i}$, $c > 0$, 
\begin{equation}\label{eq: discrete u subproblem}
    -2\epsilon \left( 2 - \cos \left( \frac{2\pi(i-1)}{w}\right) - \cos \left( \frac{2\pi(j-1)}{h}\right) + \frac{c}{-2\epsilon}\right) \mathcal{F}u^{\epsilon}_{i}(i,j) = \mathcal{F} \left( v^{\epsilon}_{i} +  \frac{W^{\prime}(u^{\epsilon}_{i}(i,j))}{2 \epsilon} + cu^{\epsilon}_{i}(i,j) \right).
\end{equation}

For initialization, $u^{\epsilon}_{i}$ is given as the shape of $S_i$, i.e., $u^{\epsilon}_{i} = \chi_{i}$, and $v^{\epsilon}_{i}$ is initialized to a zero function. For a non-simply-connected $S_{i}$, we fill-in the holes of $u^{\epsilon}_{i}$ as an initialization by~\cite{2020SciPy-NMeth}.   To speed up the inpainting process, for small shape layers $S_{i}$'s with area less than a threshold, e.g., picked to be $30$ pixels in the experiments, we directly obtain $S_{i}^{\mathrm{Conv}} \cap O_{i}$ as output.

\subsection{Convex hull computation and topological sort}\label{subsec: convexification and topological sort}

We briefly introduce standard algorithms to find the convex hull of a binary image and topological sort that are used to estimate a depth ordering \eqref{eq: ordering} of input raster images. 
To compute the convex hull for a 2D black and white image, we use the Graham scan algorithm~\cite{graham1972}. We first identify the set of boundary pixels representing the object which serves as the input points for the algorithm.  The starting point which has the lowest y-coordinate is picked (and in case of ties, the leftmost pixel), and the remaining points are sorted based on the polar angle they form with the starting point. The algorithm constructs the convex hull by iterating through the sorted list and using a stack, a linear data structure that accompanies the Last-In-First-Out principle, to maintain the sequence of points that form the convex boundary, ensuring that only left turns are made to exclude interior points. This method efficiently yields the smallest convex polygon enclosing all the boundary pixels of the object.

To form a linear global depth ordering from the directed graph $G(M,E)$ in Section \ref{subsec: depth graph}, we use topological sorting~\cite{knuth97}. The algorithm starts by identifying the source which is a node with no incoming edges (there may be multiple source and one can be chosen randomly) from the given graph.  We store it to the output list, and remove it from the graph.  Then, find the next source and place it behind the first node in the sorted list.  This process iterates until all nodes are processed, resulting in a valid topological order of the graph. This method ensures that dependencies represented by the edges are respected in the final sequence.

\section{Numerical Experiments} \label{sec: experiments}

We present various numerical results in this section.  First, Figure \ref{fig: mountain shapes} shows the progress of the proposed algorithm for the example in Figure \ref{fig:shapelayer}.  Figure \ref{fig: mountain shapes}(a) shows the results of depth ordering \eqref{eq: ordering} as a table. The depth ordering with $S_7$ is computed, but not shown, since it is obviously at the bottom and gives $D(i,7) > 0$ for all $i=1,2,\dots,6$.  
The pairwise depth ordering yields the global depth ordering graph $G(M,E)$. If there are directed cycles in the graph, we use convex hull symmetric difference   \eqref{eq: CSdifference} to break them and obtain a linear depth ordering. 
Each shape layer is convexified using the Euler's elastia curvature-base model \eqref{eq: euler elastica} and shown in the second row (b).  The noise layer $S_{noise}$, shown in Figure \ref{fig: noisy regions}(b), is added to shape-covered region $O_i$ for all shape layers $S_i$.   The last row (c) shows the stacking of vectorized layers following the depth ordering in the reverse order. Here $R_{7,4}$ represents stacking of $C_7$ and $C_4$, i.e. convexified shape layers of $S_{7}$ and $S_{4}$. Figure \ref{fig:mountainzoom} shows zoom of details of the result from Figure \ref{fig: mountain shapes} $R_{7,4,3,2,6,1,5}$.  Zoomed images show good approximations to the curves, and T-junction is also well-approximated, each curves following each level line direction. 
\begin{figure}
\centering
(a) 
\begin{tabular}{|c|c|c|c|c|c!{\vrule width 2pt}c|c|c|c|c|c|}
\hline
$S_i$ & $S_j$ & $A(i,j)$ & $A(j,i)$ & $D(i,j)$ & ordering & $S_i$ & $S_j$  & $A(i,j)$ & $A(j,i)$ & $D(i,j)$ & ordering \\ \hline
$S_2$ & $S_3$  & 0.1649  & 0.0490 &  0.116 &  $S_2$ $\rightarrow$ $S_3$ & $S_3$  &  $S_4$         & 0.960  & 0.003 &  0.957 &  $S_3$  $\rightarrow$ $S_4$     \\ \hline
$S_3$      &   $S_6$    &  0.0821  & 0.174  & -0.0917  &  $S_6$  $\rightarrow$ $S_3$   &  $S_1$     &    $S_6$   &  0.480  & 0.008  & 0.472  &  $S_1$  $\rightarrow$ $S_6$     \\ \hline
$S_2$    &    $S_6$    & 0.0551  &  0.331 & -0.276  &   $S_6$  $\rightarrow$ $S_2$   &  $S_5$   &     $S_4$     & 0.604  &  0.0190 & 0.585  &   $S_5$  $\rightarrow$ $S_4$  \\
\hline 
\end{tabular}

\vspace{0.3cm}
\begin{tabular}{c}
   (b) \\ 
    \begin{tikzpicture}[node distance=2.45cm, every edge/.style={draw=black, ->}]
    \node (S5) [label=above:$S_{5}$] {\includegraphics[width=0.11\textwidth]{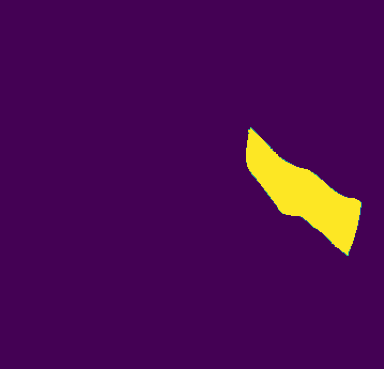}};
    \node (S1) [label=above:$S_{1}$, right of = S5] {\includegraphics[width=0.11\textwidth]{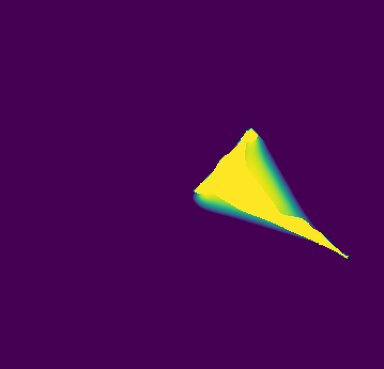}};
    \node (S6) [label=above:$S_{6}$, right of = S1] {\includegraphics[width=0.11\textwidth]{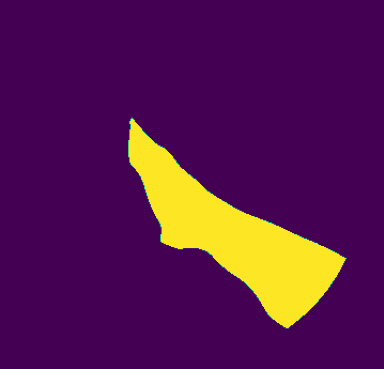}};
    \node (S2) [label=above:$S_{2}$, right of = S6] {\includegraphics[width=0.11\textwidth]{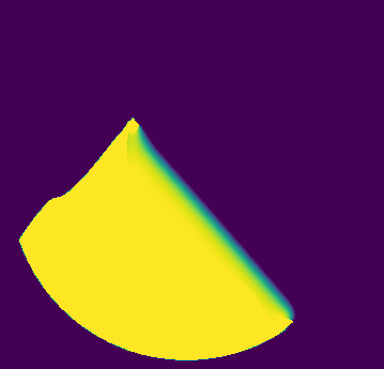}};
    \node (S3) [label=above:$S_{3}$, right of = S2] {\includegraphics[width=0.11\textwidth]{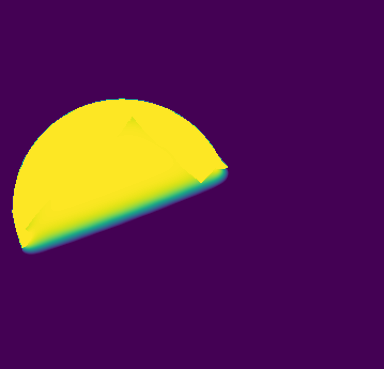}};
    \node (S4) [label=above:$S_{4}$, right of = S3] {\includegraphics[width=0.11\textwidth]{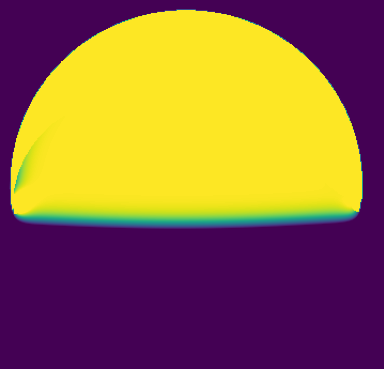}};
    \node (S7) [label=above:$S_{7}$, right of = S4] {\includegraphics[width=0.11\textwidth]{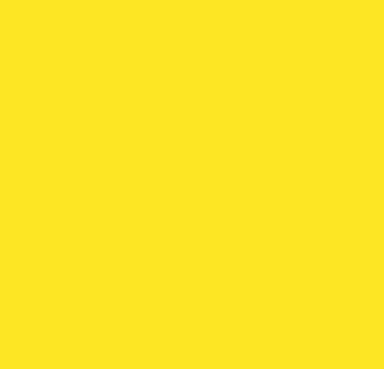}};
    \draw [->] (S5) -- (S1);
    \draw [->] (S1) -- (S6);
    \draw [->] (S6) -- (S2);
    \draw [->] (S2) -- (S3);
    \draw [->] (S3) -- (S4);
    \draw [->] (S4) -- (S7);
    \end{tikzpicture}
  \end{tabular}
\vspace{0.3cm}
(c)
\begin{tabular}{cccccc}
$R_{7,4}$ & $R_{7,4,3}$ & $R_{7,4,3,2}$ & $R_{7,4,3,2,6}$ & $R_{7,4,3,2,6,1}$ & $R_{7,4,3,2,6,1,5}$ \\
\includegraphics[width=0.14\textwidth]{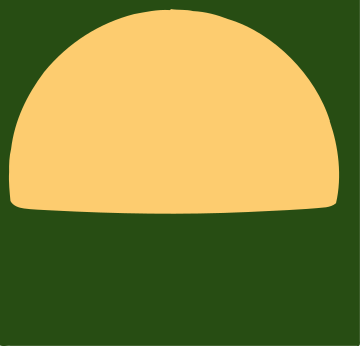} &
\includegraphics[width=0.14\textwidth]{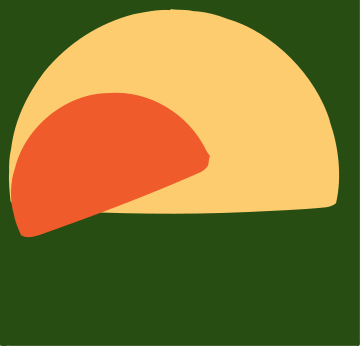} &
\includegraphics[width=0.14\textwidth]{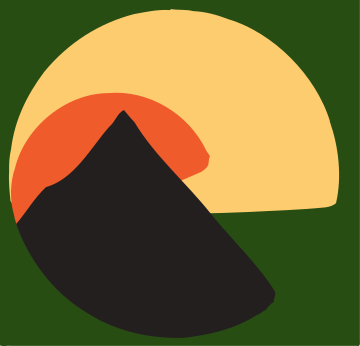}&
\includegraphics[width=0.14\textwidth]{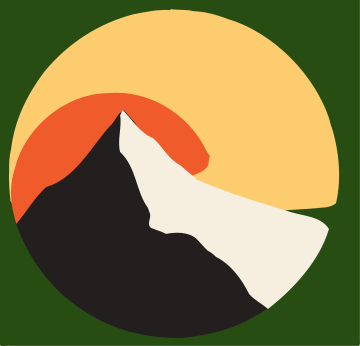} &
\includegraphics[width=0.14\textwidth]{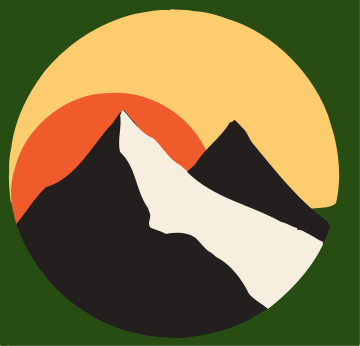} &
\includegraphics[width=0.14\textwidth]{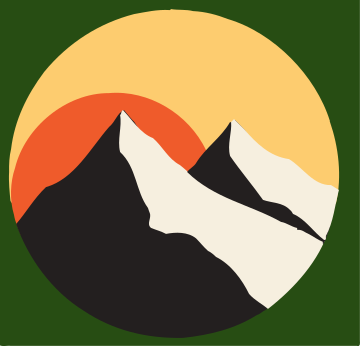}\end{tabular}
\caption{[Outline of image vectorization with depth] From the input image $f$ and shape layers in Figure \ref{fig:shapelayer}, (a) Table of $D(i,j)$ depth ordering.  It determines $S_5 \rightarrow S_1 \rightarrow S_6 \rightarrow S_2 \rightarrow S_3 \rightarrow S_4 \rightarrow S_7$. In (b) each shape layer is convexified and depth ordered from left to right.  The last row shows the stacking of vectorized layers in reverse order. Here $R_{7,4}$ represents stacking of $C_7$ and $C_4$.}  
\label{fig: mountain shapes}
\end{figure}

\begin{figure}
\centering
\begin{tikzpicture} [node distance = 7cm, spy using outlines={rectangle,red,magnification=9,size=2cm, connect spies}]
\node[label = above: (a)] (quantized image) {\includegraphics[width = 4cm]{pic/mountain_example/quantized_mountain.png}};
\spy[red, connect spies] on (-.3, -.6) in node at (3.1,-1.1);
\spy[red, connect spies] on (0.3, 0.27) in node at (3.1,1.1);
  \node[right of = quantized image, label = above: (b)] (vectorized image) { \includegraphics[width = 4cm]{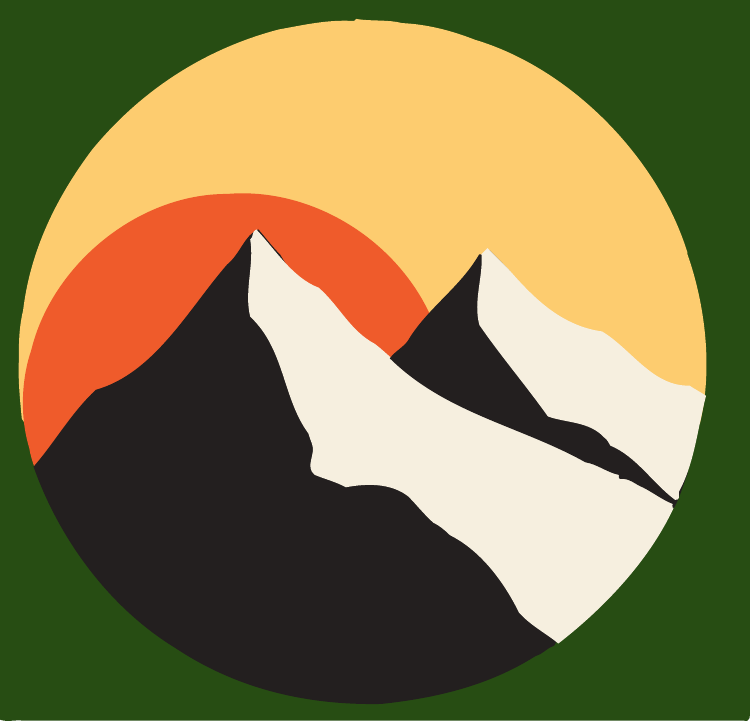}};
  \spy[red, connect spies] on (6.7, -.6) in node at (10.1,-1.1);
  \spy[red, connect spies] on (7.3, 0.25) in node at (10.1,1.1); 
  \end{tikzpicture}
\caption{[Image vectorization with depth result] (a) The given color quantized image $f$ in Figure \ref{fig:shapelayer}. (b) The proposed method's vectorization result in Figure \ref{fig: mountain shapes}. Zoomed images show good approximations to the curves, and T-junction is also well-approximated following the level line. }
    \label{fig:mountainzoom}
\end{figure}

For the experiments, in general we set curvature  functional \eqref{eq: euler elastica} parameters to be $a = 0.1$, $b = 1$ and $\epsilon = 5$, the curvature extrema \eqref{eq: discretized curvature} parameter to be $\mathcal{T}=1.25$, and \bezier curve \eqref{eq: wls fitting} fitting parameter as $\tau=1$. For most of the experiments, we set $\delta=0.05$ for depth ordering in \eqref{eq: ordering result}. If different values are used for experiments, we mention them in each experiment accordingly.  

We present more results in Section \ref{ssec:firstresults}, then show how the proposed model makes image modification easy in Section \ref{ssec:modify}, and how grouping quantization, a proposed pre-processing step, can improve vectorization of real images in Section \ref{ssec:multiquanti}. We present comparisons of our vectorization with depth against other layer-based vectorization approaches in Section \ref{ss:comparison}. In Section \ref{ss:grouping}, we further explore grouping shape layers to perform vectorization.

\subsection{Image vectorization with depth}
\label{ssec:firstresults}

\begin{figure}
    \centering
    \begin{tabular}{c|cccc}
    \multicolumn{1}{c}{(a)} \\
    \multicolumn{1}{c|}{\includegraphics[width=2.7cm]{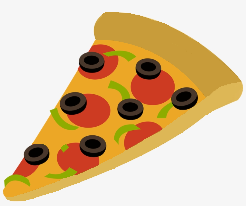}}      &  \includegraphics[width=2.7cm]{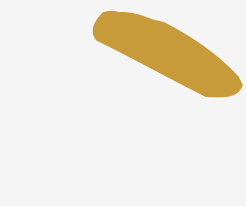} & \includegraphics[width=2.7cm]{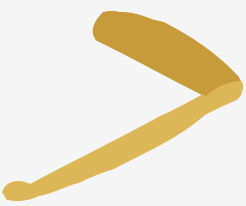} & \includegraphics[width=2.7cm]{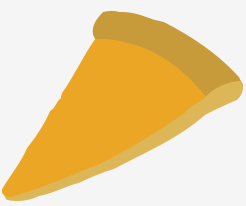} \\
    &\includegraphics[width=2.7cm]{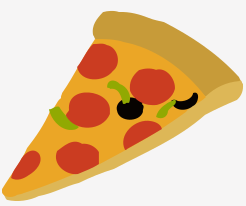}&  \includegraphics[width=2.7cm]{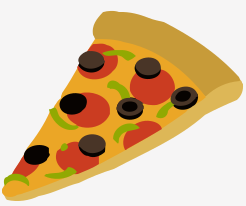}&  \includegraphics[width=2.7cm]{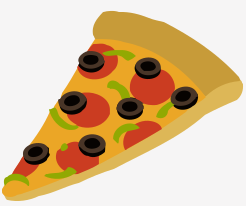}   \\   
    \multicolumn{1}{c}{(b)} \\
    \multicolumn{1}{c|}{\includegraphics[width=2cm]{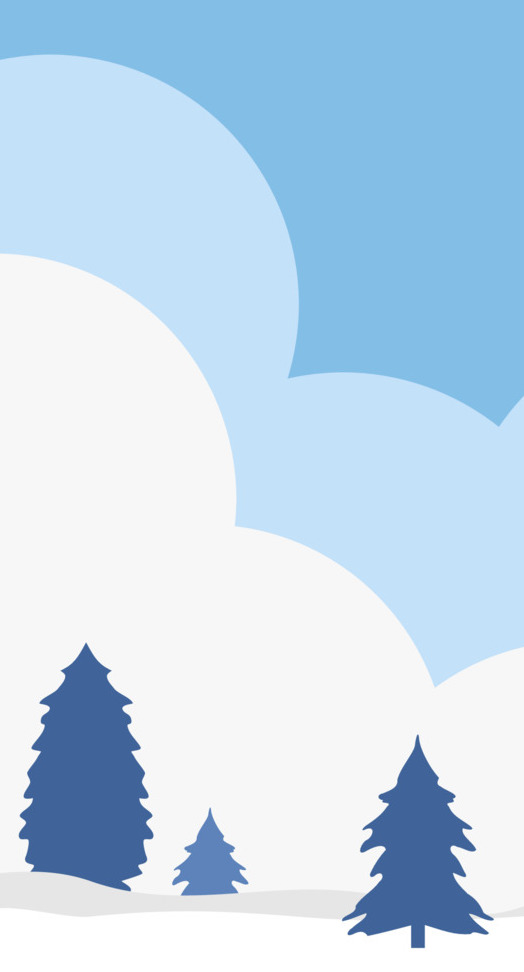}}      &   \includegraphics[width=2cm]{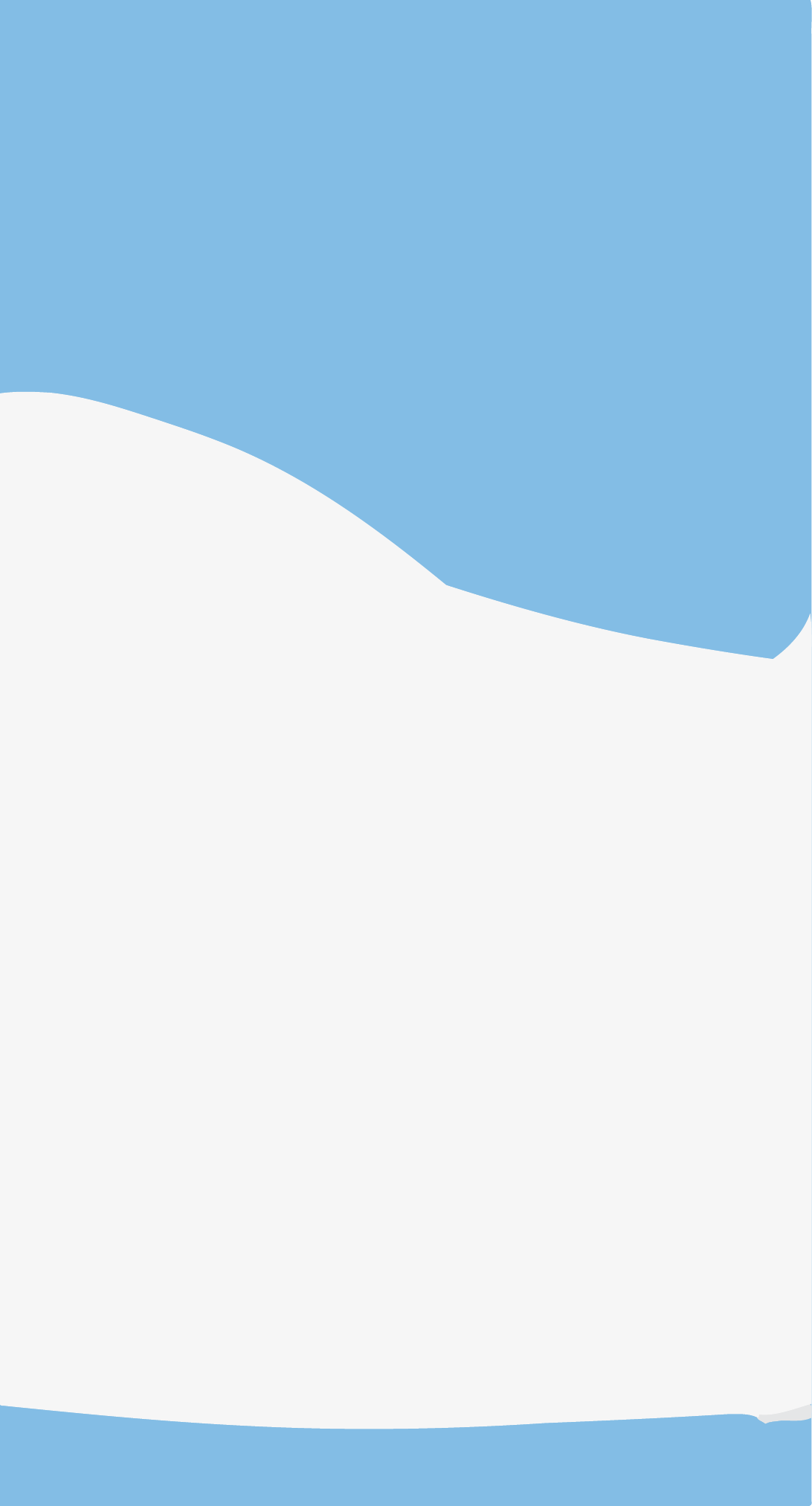} & \includegraphics[width=2cm]{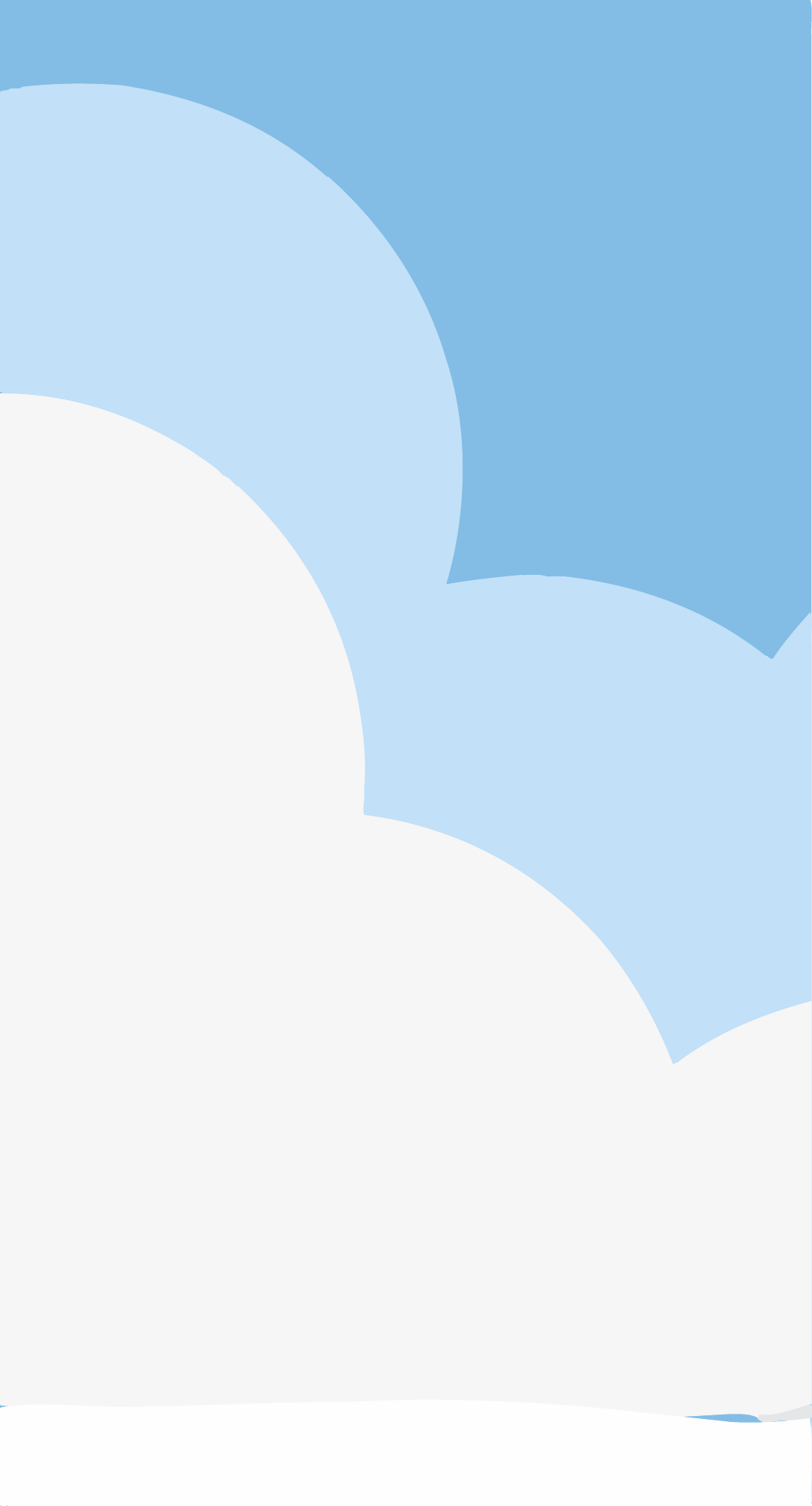} &  \includegraphics[width=2cm]{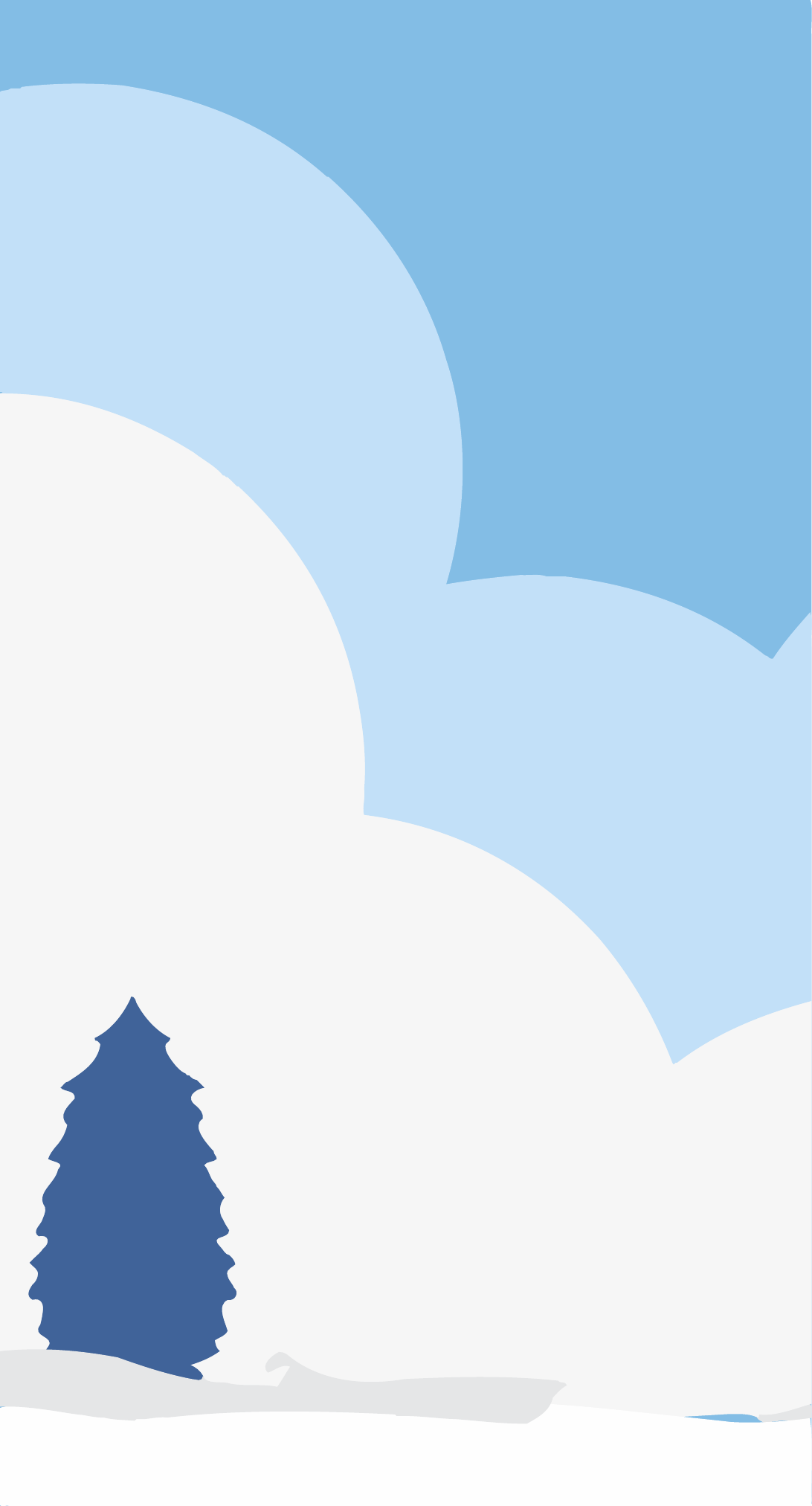}&    \includegraphics[width=2cm]{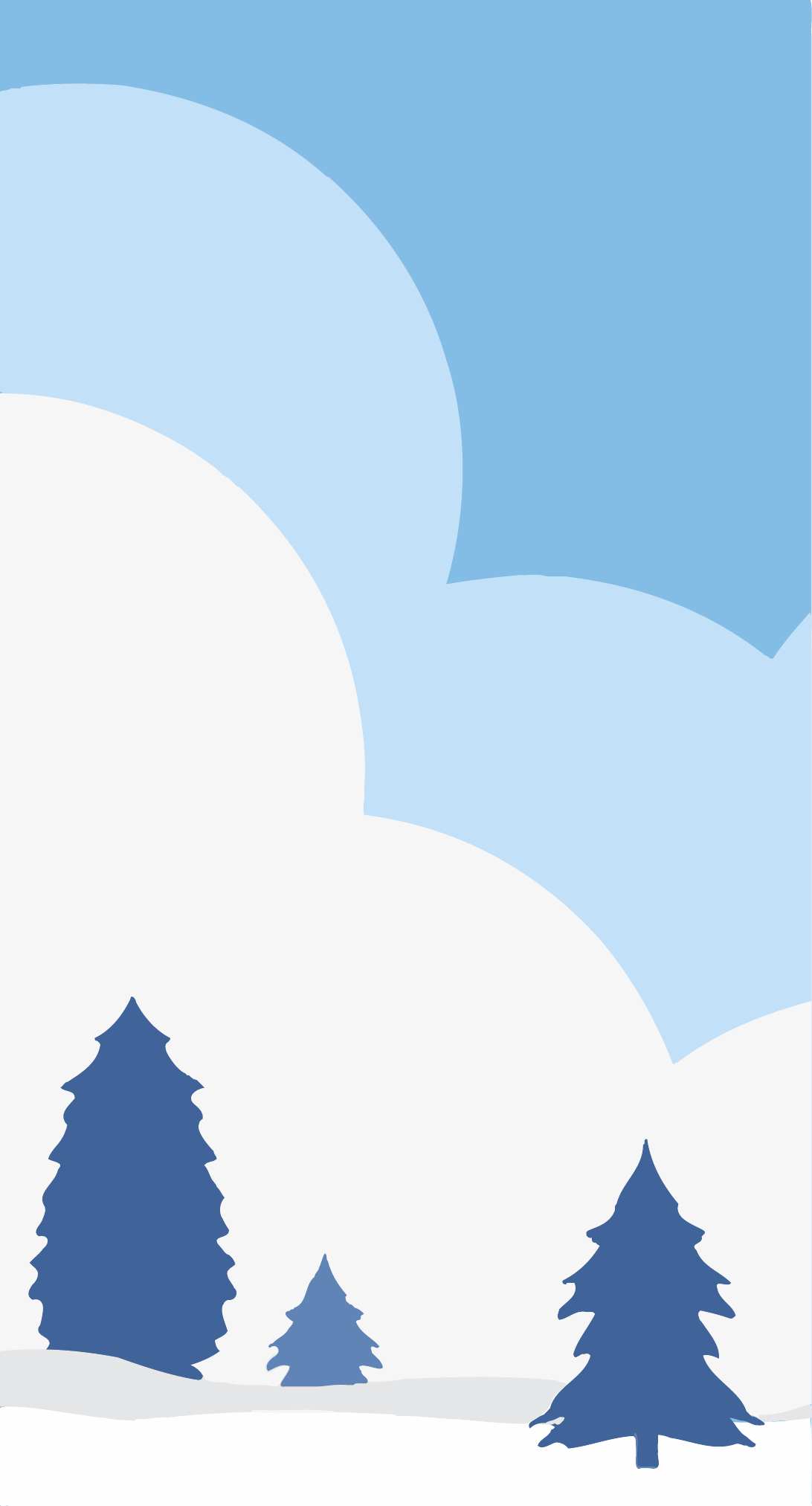}      \\ 
\multicolumn{1}{c}{(c)}\\
\multicolumn{1}{c|}{\includegraphics[width=2.7cm]{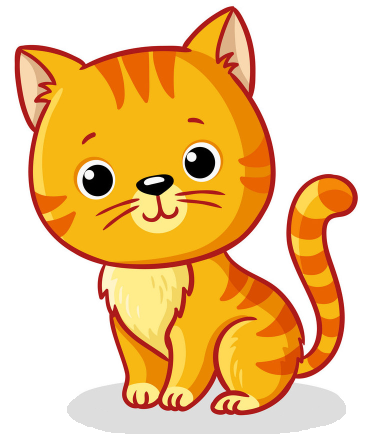}}      &  \includegraphics[width=2.7cm]{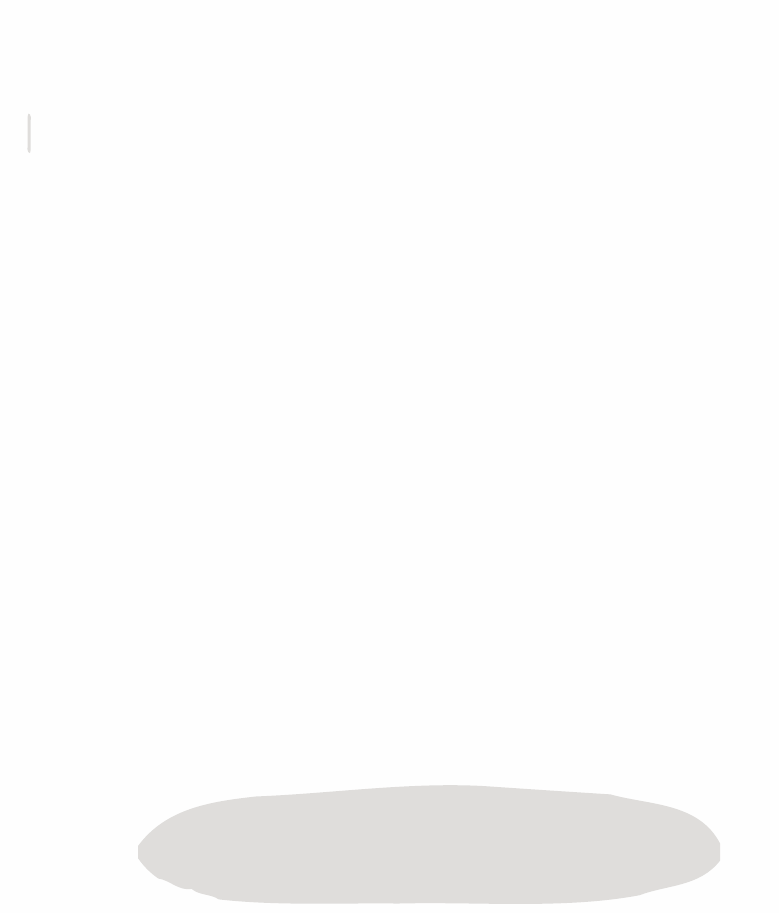} & \includegraphics[width=2.7cm]{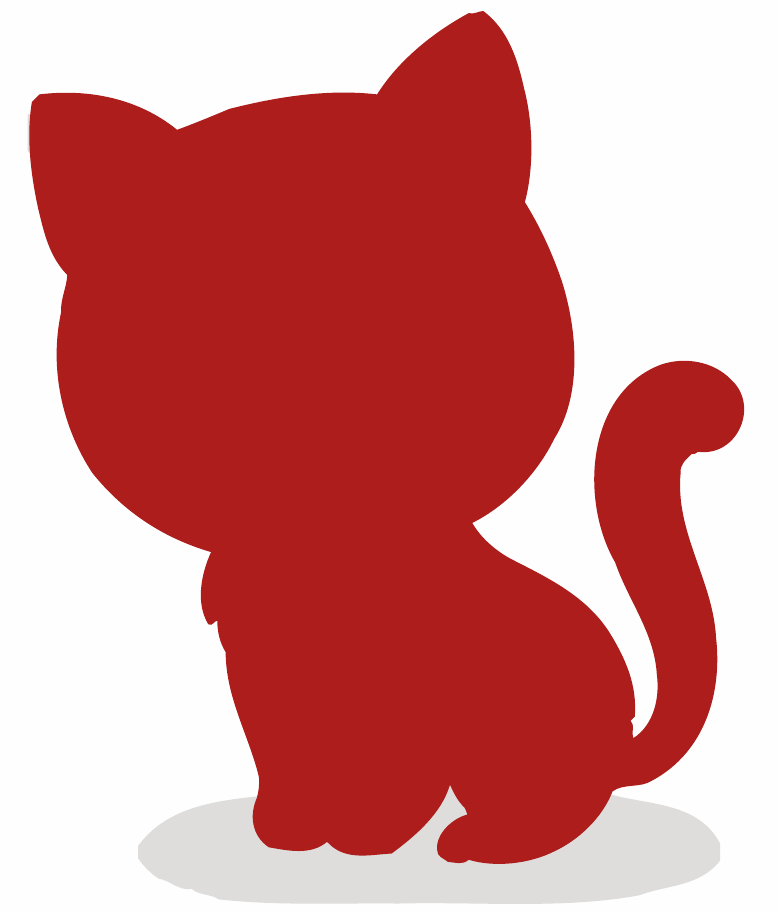} &  
\includegraphics[width=2.7cm]{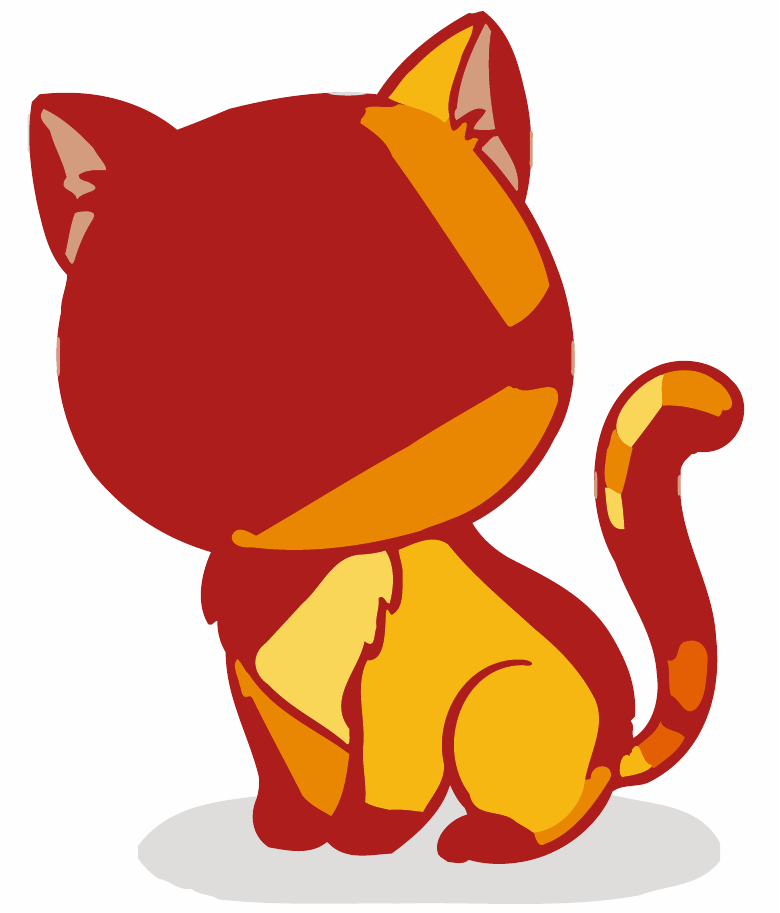}&  \includegraphics[width=2.7cm]{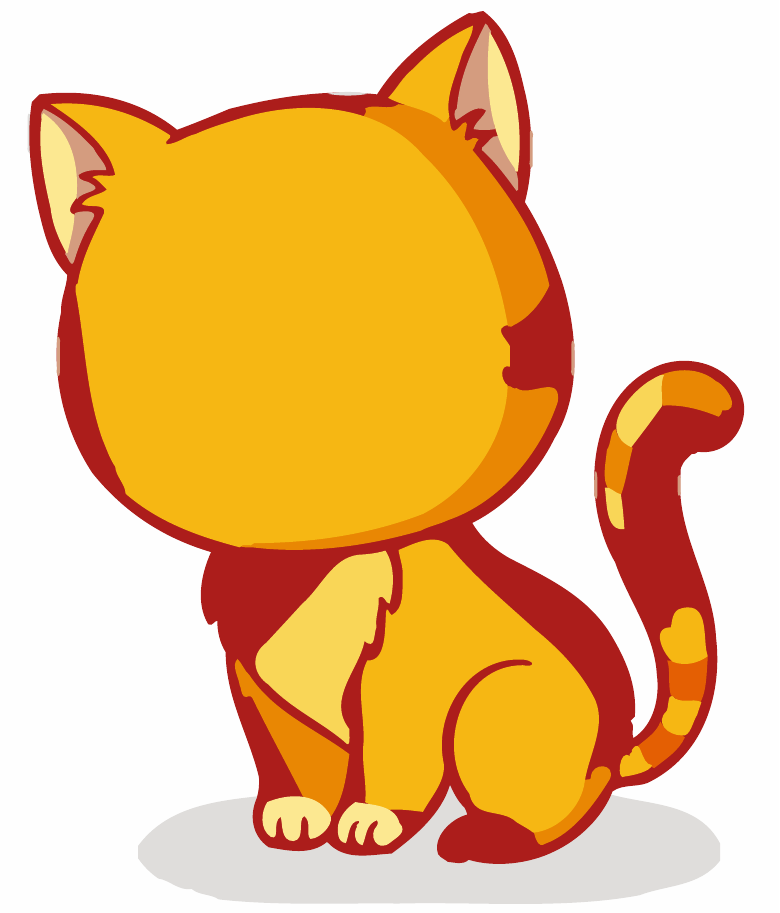}      \\ 
&
\includegraphics[width=2.7cm]{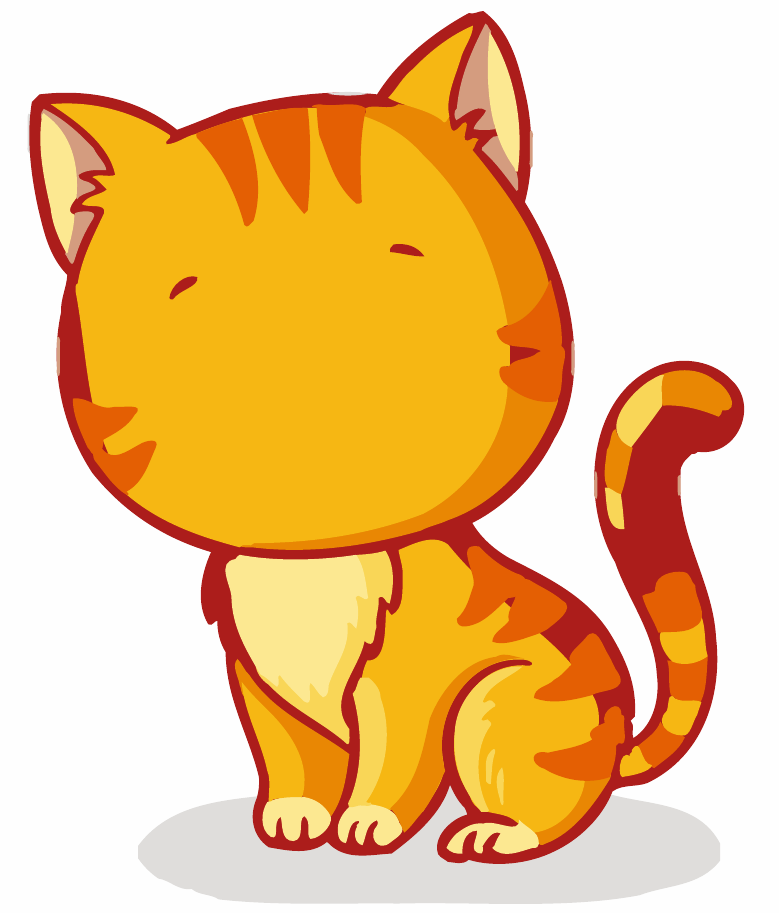} & 
\includegraphics[width=2.7cm]{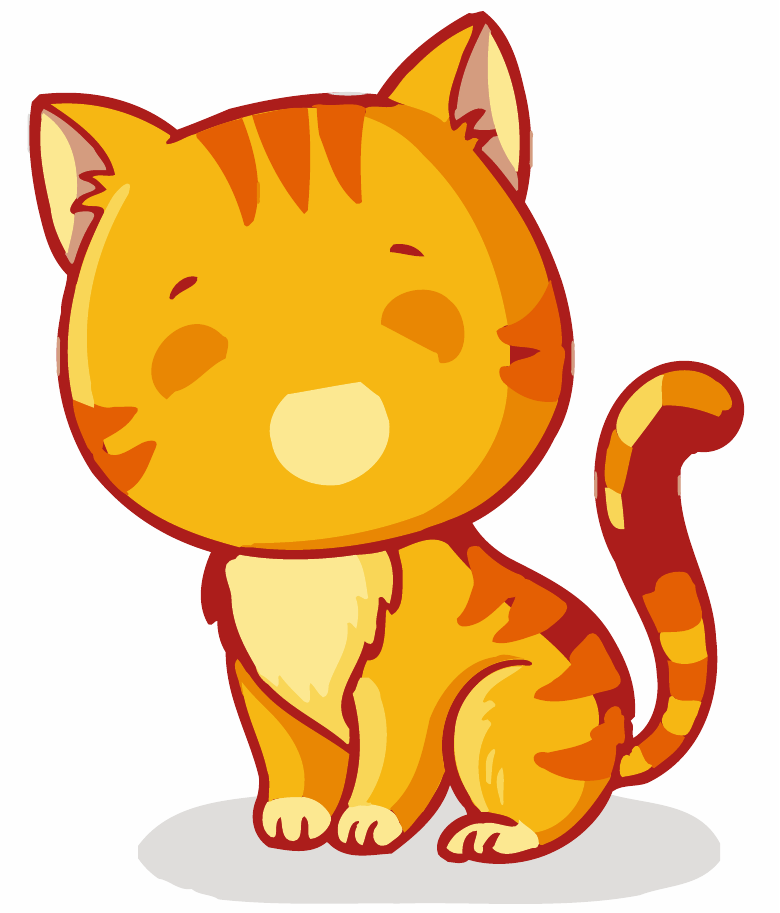} &  \
\includegraphics[width=2.7cm]{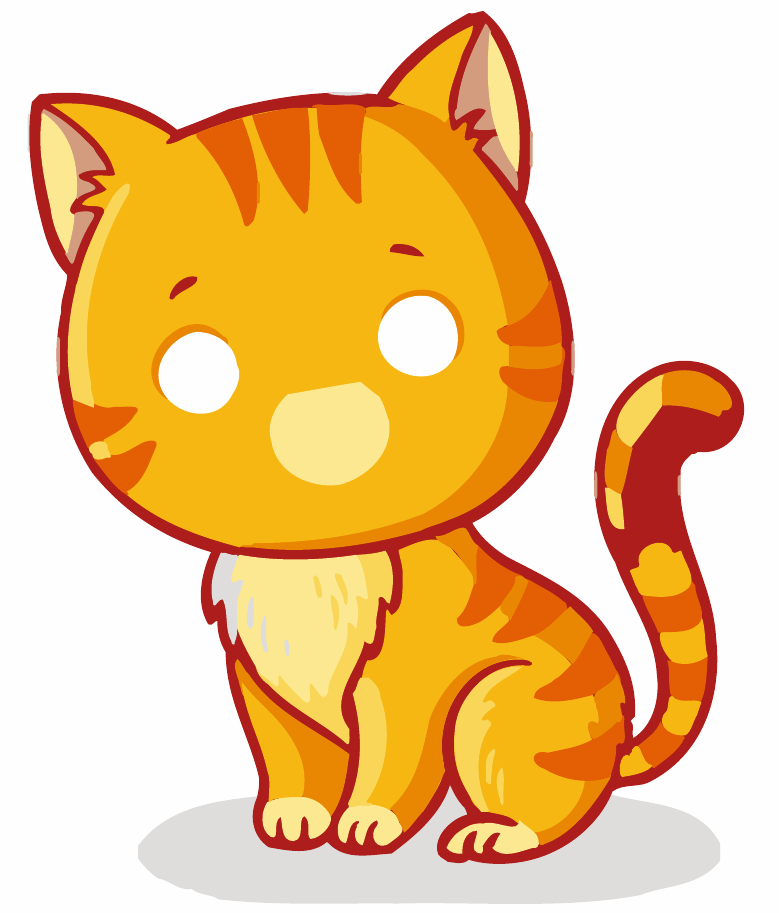}&  
\includegraphics[width=2.7cm]{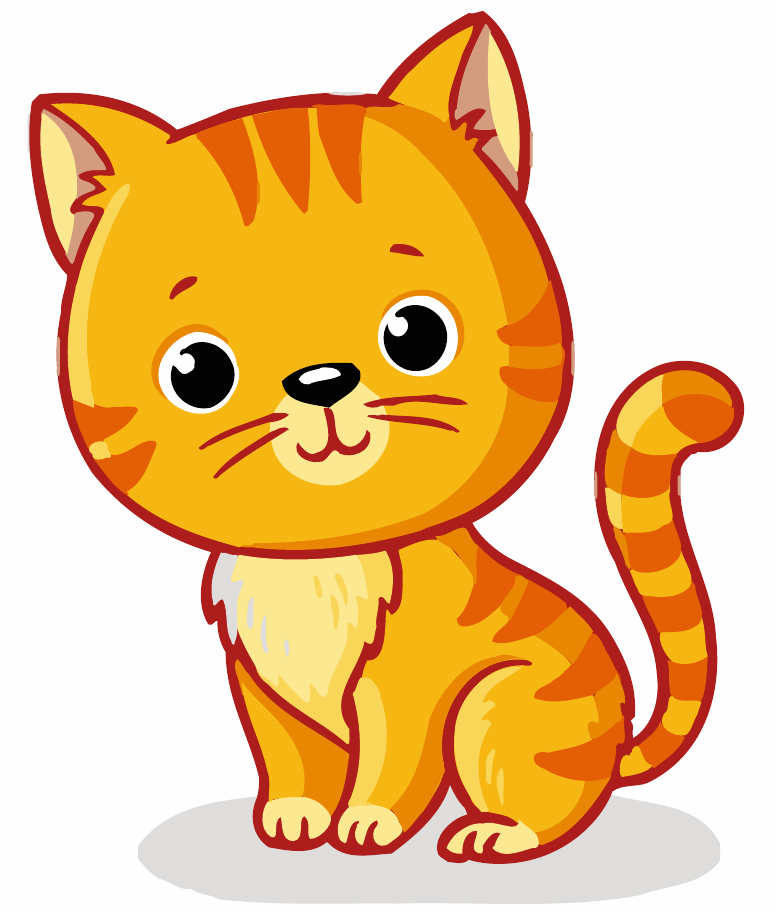}      \\ 
\end{tabular}
\caption{[Depth layering effect of the proposed method]  The first column shows the color quantized raster images $f$.  The second column to the last column present different levels of depth ordered shape layers.  We add 1 to 20 shape layers from one image to the next one. In (a) noticed occluded toppings are reconstructed as convex shapes.  In (b), the sky and the floor are covexified, then two clouds and snowy floor are on top, then trees.  In (c), the shadow is reconstructed as a convex shape, the silhouette of whole cat shown as one layer, more details are added on top, then finally the eyes and nose.  Effects of correct depth ordering and curvature-based inpainting are clear. }
\label{fig: 3results}
\end{figure}

Figure \ref{fig: 3results} shows three results presented as in the bottom row of Figure \ref{fig: mountain shapes}, that from the second column to the last column (from left to right and first row to the second row), it shows the stacking from the bottom shape layer to the top layer. Unlike Figure \ref{fig: mountain shapes} where we add one layer at a time, we add multiple shape layers from one image to the next for more concise presentation. 

The top row (a) shows the pizza image, where layers are added in the order of the crust, the side of pizza, the bottom part of the pizza, then various toppings following the depth ordering. 
In the second row (b), landscape example shows a snowy background with a few trees. The proposed method first finds the blue sky and a blue floor in the first column, and then a couple of clouds, one light blue and one white, shown in the second column. Then a tree and some finer details are on top. The final result on the very right shows the final vectorization of the given image in the left. 
In the third cartoon cat example on row (c), our depth ordering identifies the shade as the one of the bottom layer and constructs it as an ellipse shape by the curvature inpainting convexification step. A thin dark orange stroke around the cat is identified as one shape layer, filled in by the curvature inpainting showing the total silhouette of the cat.  Each additional shape layer adds more details to the cat and the final vectorization is very close to the given image.  Effects of correct depth ordering and curvature-based inpainting is clear. 
For the first row, we used $\delta = 0.01$, $b=5$ and $\epsilon = 10$; for the second row, $\delta = 0.1$ and $\epsilon = 10$; for the last row, $\delta = 0.05$, $\epsilon = 10$ and $b = 0.25$, while we use the general parameters for the rest.

\subsection{Easy editing by SVG file modification}
\label{ssec:modify} 

A proper depth ordering and convexification step give a huge advantage of our model that it makes editing and modification easy and natural.  We present the results of the proposed image vectorization with depth in Figure \ref{fig: modified pizza} (a) pizza, (e) snowy landscape, (g) cartoon cat, and (i) painting. Figure \ref{fig: modified pizza} (a), (e) and (g) are the same vectorized results in Figure \ref{fig: 3results}. In Figure \ref{fig: modified pizza}, to the right of the vertical separators are variations from the vectorization showing various modifications.  In (b), we simply delete all toppings that are all identified to be above the pizza. Then, we add new toppings to (b) which yields (c) and (d).
For the snowy landscape (e), we remove the cloud, and add a green mountain with one of the original tree relocated on top of it in (f). 
For the cartoon cat (g), first we only remove some shapes on the cat's face and body, then craft a bowtie on its neck in (h). Without any other manual modifications, we can easily make drastic changes to the input raster image as from (g) to (h). 
Figure \ref{fig: modified pizza} (i) shows the vectorized output of a famous artwork by Henri Matisse. After vectorization, we can easily reposition certain elements. For instance, we move the guitar-like object and reposition the separated white finger back to the palm to form a complete hand. We rearrange the green patterns on the black body and remove some yellow leaves in (j). 

\begin{figure}
\centering
\begin{tabular}{c|ccc}
(a) & (b) & (c) & (d) \\
\includegraphics[width = 3cm]{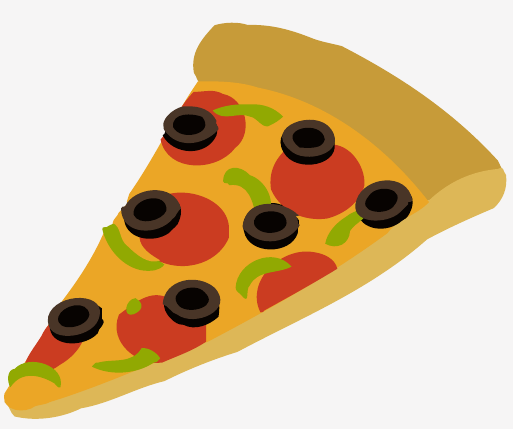} &
\includegraphics[width = 3cm]{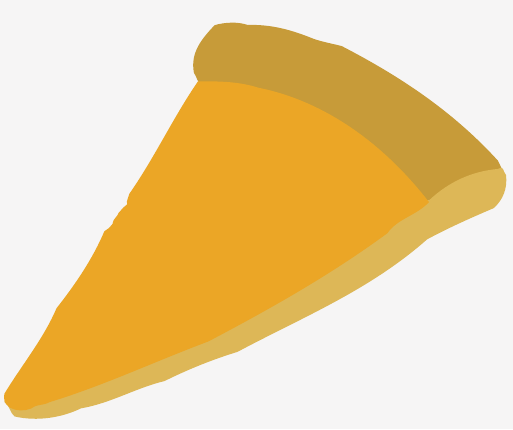} & 
\includegraphics[width = 3cm]{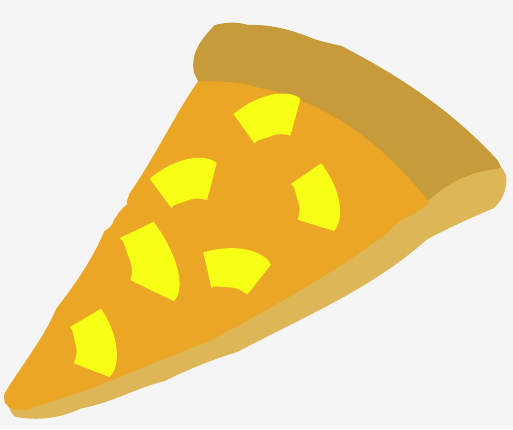} &
\includegraphics[width = 3cm]{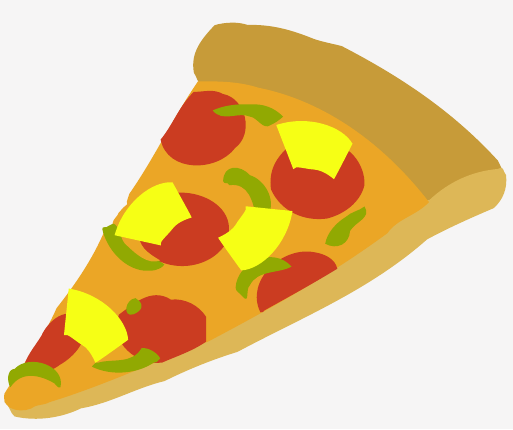} 
\end{tabular}
\begin{tabular}{c|cc|c}
(e) & (f) & (g) & (h) \\
\includegraphics[width=2.5cm]
{pdf/new_winter/interpolation/new_winter5.pdf} & \includegraphics[width=2.5cm]{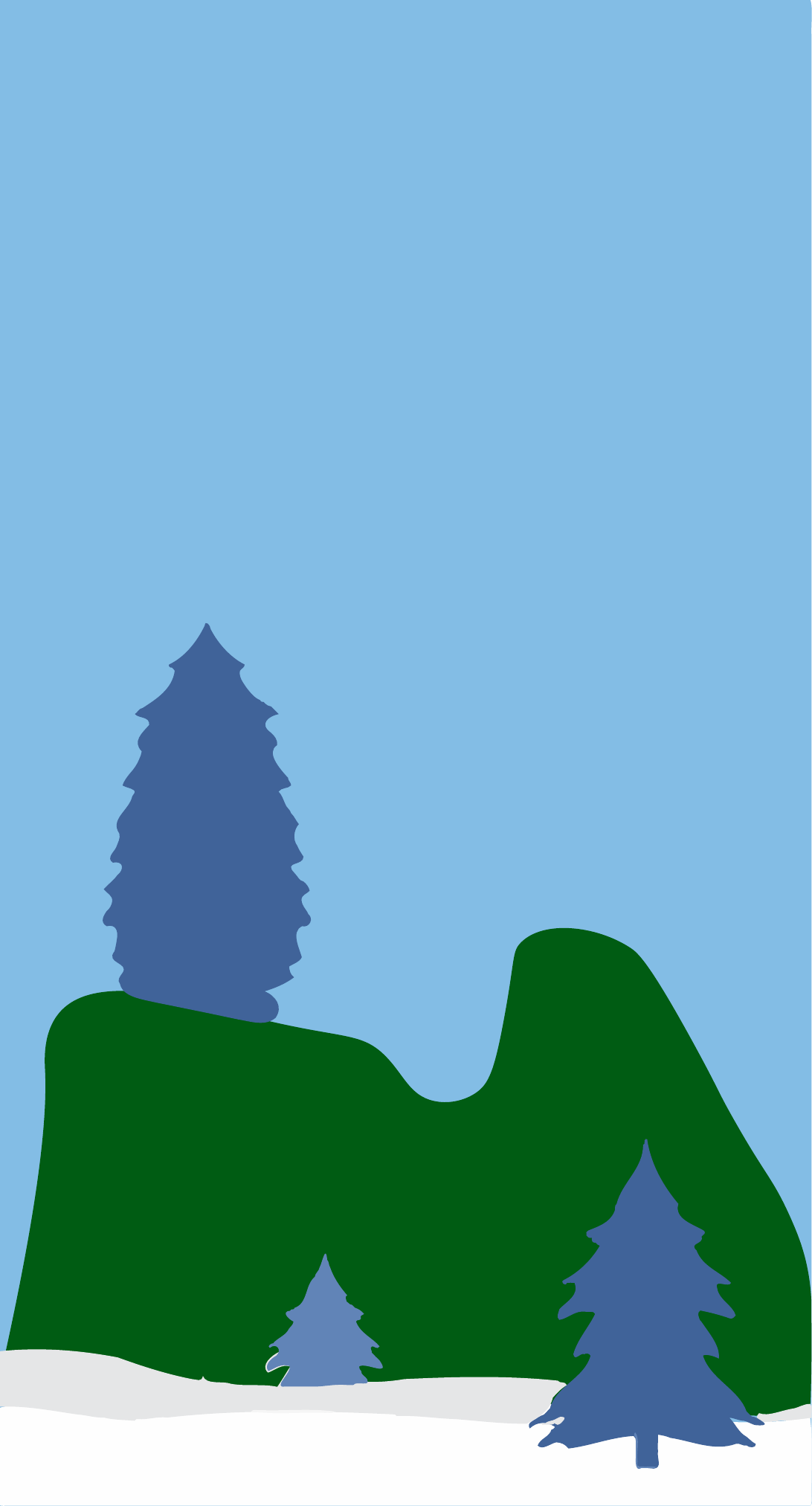} & \includegraphics[width=3cm]{pdf/cat/interpolation/cat11.pdf}& \includegraphics[width=3cm]{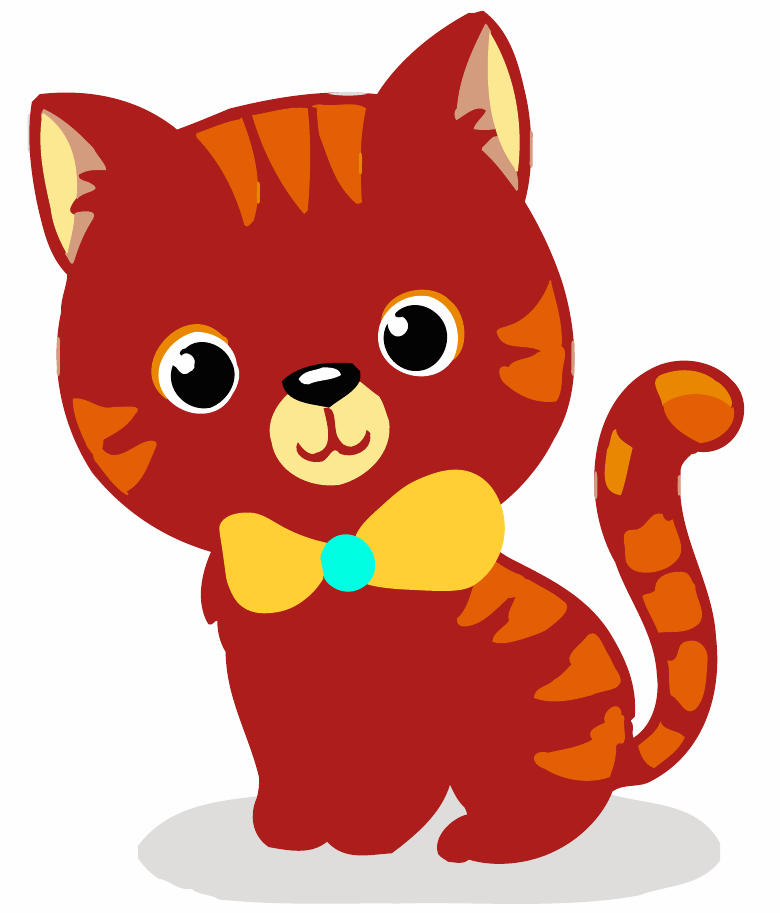}
\end{tabular}
\begin{tabular}{c|c}
(i) & (j)  \\
\includegraphics[width=7cm]{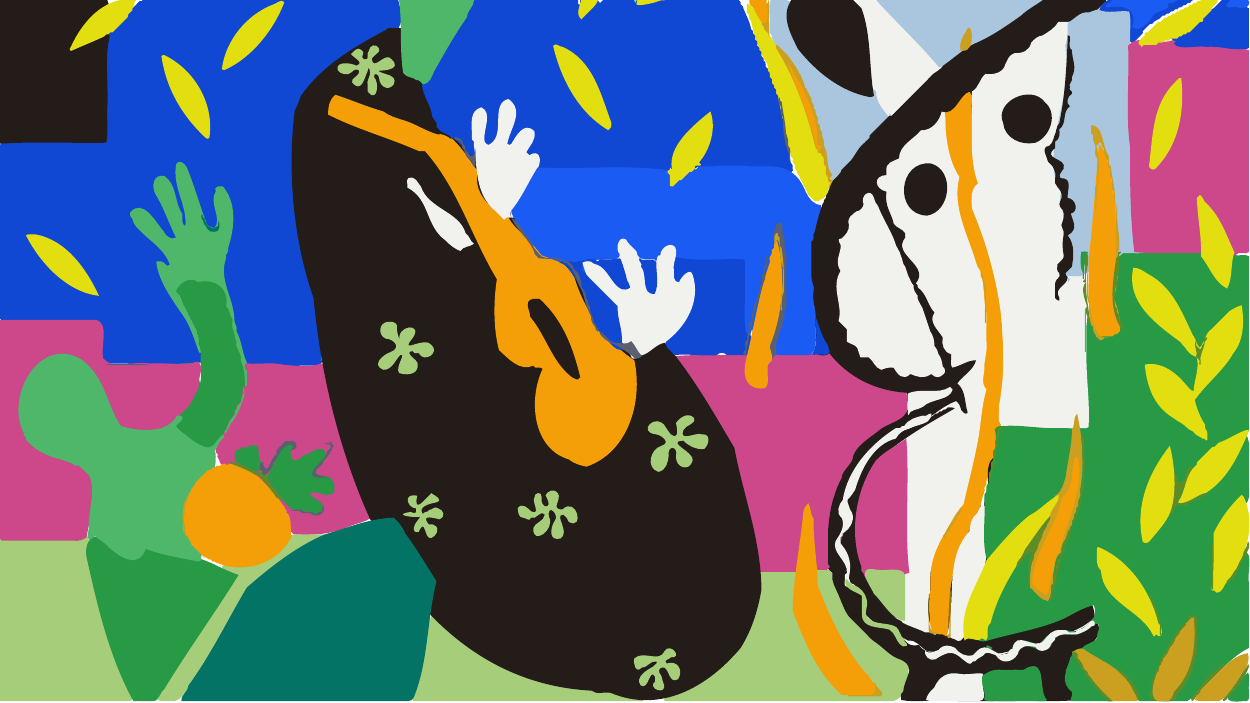} &
\includegraphics[width=7cm]{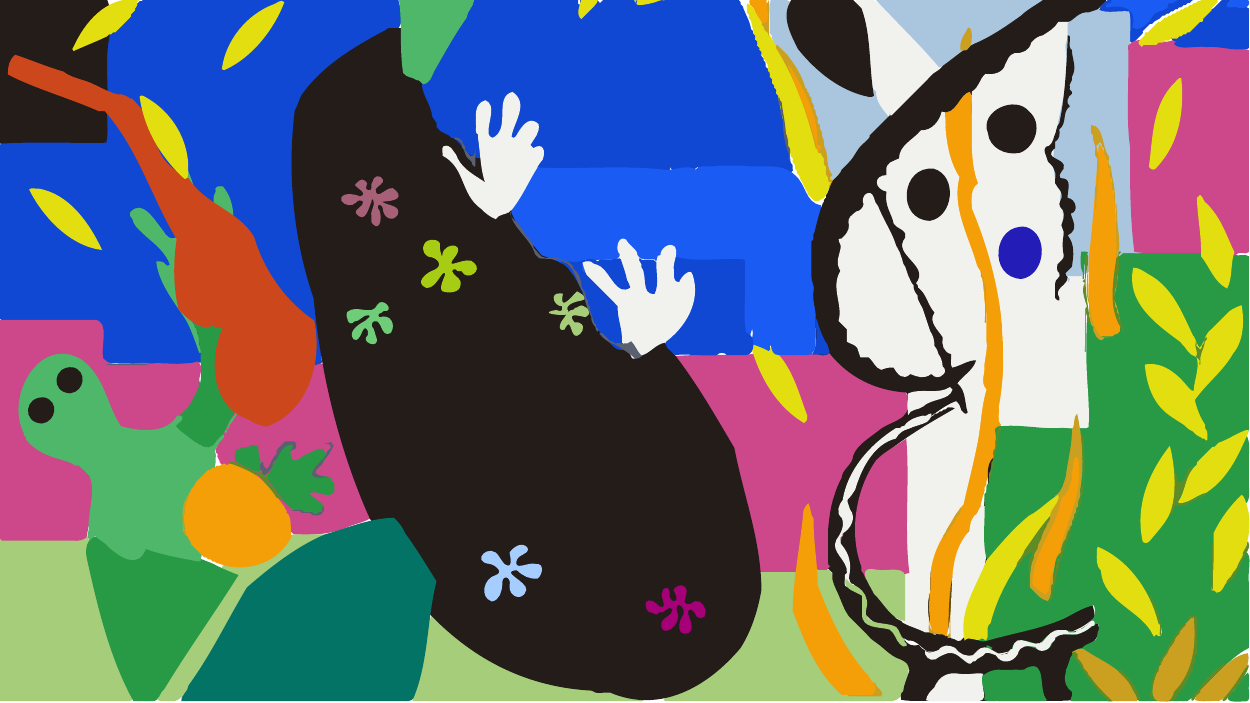}
\end{tabular}
\caption{[Ease of Editing by the proposed method] Image vectorization with depth on the left (a), (e), (g), and (i). To the right of the separators, possible modifications are presented in (b), (c), (d), (f), (h), and (j). Some shape layers are deleted, moved or new layers are added. With convex shape layers, modifications are natural and easy. }
\label{fig: modified pizza}
\end{figure}

Since the shape layers boundaries become more regular after curvature inpainting, editing becomes natural and easy. We demonstrate this by comparing with a typical vectorization in Figure \ref{fig: transformation}.  We present the proposed method's vectorization result in the first row, and Adobe Illustrator's~\cite{adobeillustrator} in the second. 
We apply identical deletion, translation and rotation to shape layers for both methods. In the first column, small mountain is removed, and for the proposed method, it shows the green background and the yellow sky are convexified, while Adobe Illustrator~\cite{adobeillustrator} leaves a blank region of the mountain shape.
In the second column, the orange sun is translated and rotated from the first column results, and in the third column, the yellow sky is removed from the second column image.  Notice that typical vectorization gives many blank regions, while the proposed method gives convexified shape layers in the back. In addition, typical vectorization does not have depth information.  This shows the convenience of using our model to modify vectorized outputs. 

\begin{figure}
\centering
\begin{tabular}{ccc}
   (a)  & (b) & (c) \\
    \includegraphics[width = .25\textwidth]{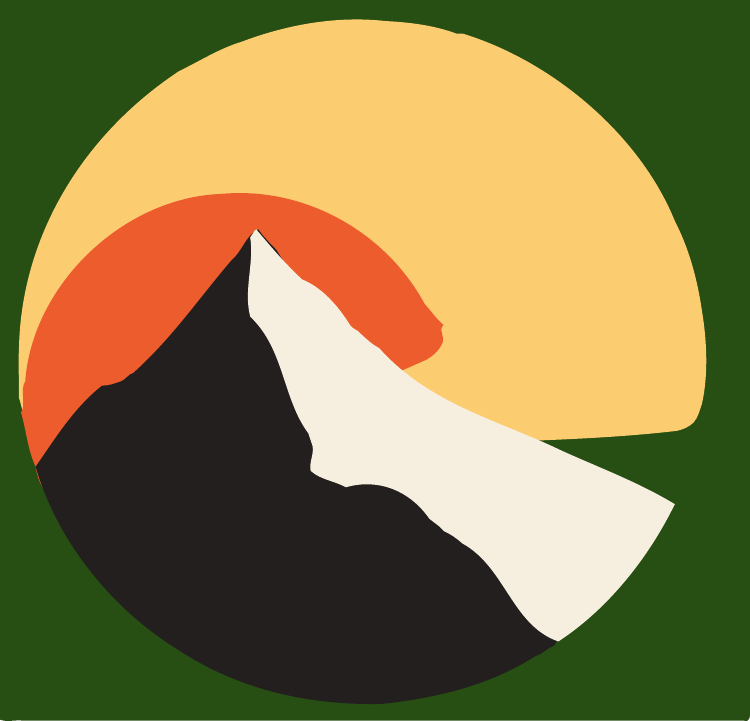}
&    \includegraphics[width = .25\textwidth]{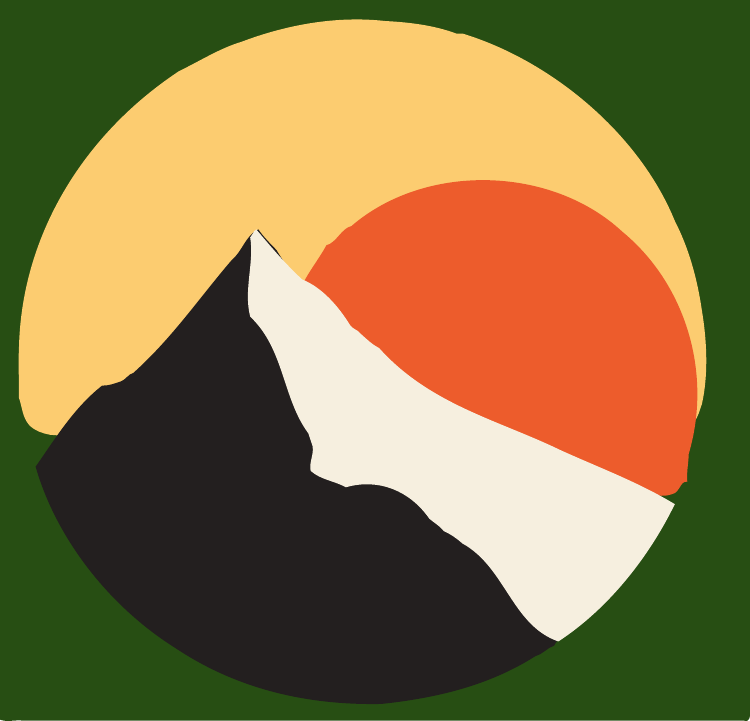}
&    \includegraphics[width = .25\textwidth]{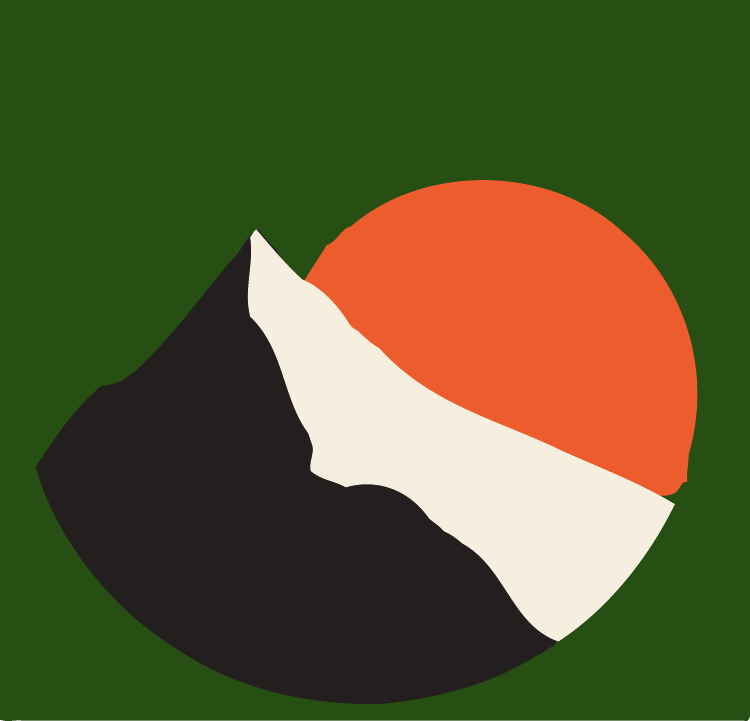} \\
   (d)  & (e) & (f) \\
    \includegraphics[width =.25\textwidth]{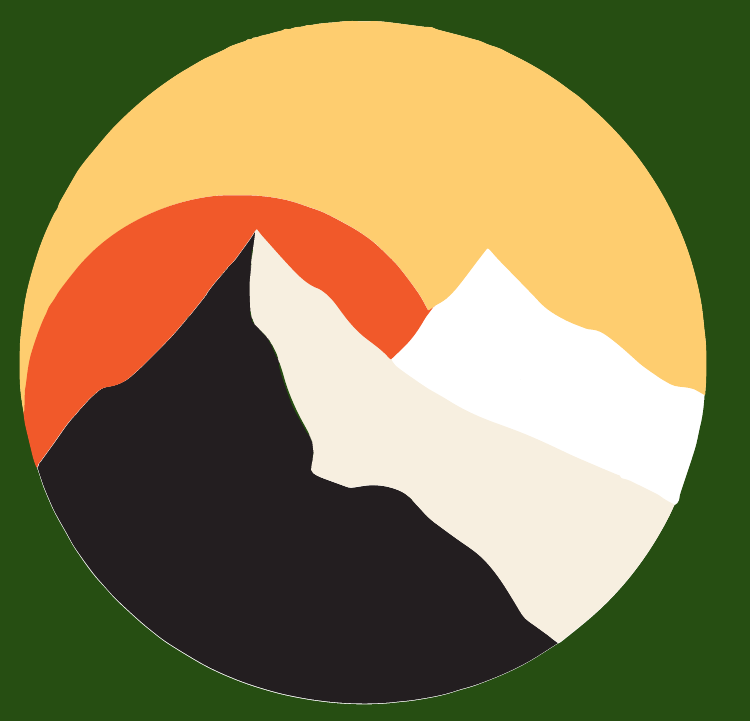}
&
    \includegraphics[width = .25\textwidth]{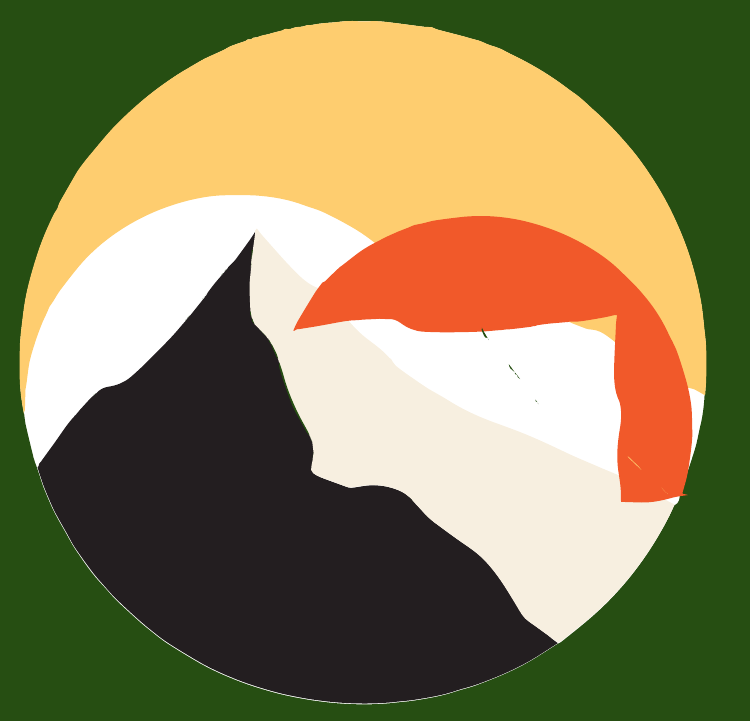}
&     \includegraphics[width = .25\textwidth]{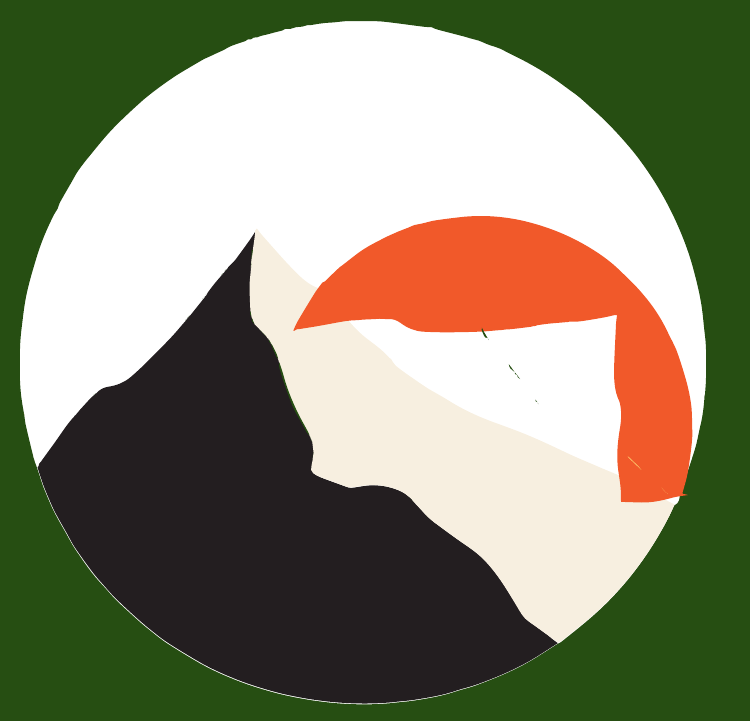}
\end{tabular}
\caption{[Comparison of editing effect] The top row is image vectorization with depth, and the second row is a typical vectorization~\cite{adobeillustrator}.  In the first column, small mountain is removed; in the second, the orange sun is translated and rotated; and in the third, the yellow sky is removed. We applied the same transform for each column. Convexified shape layers makes modification easy and more natural.}
\label{fig: transformation}
\end{figure}

\subsection{Grouping quantization and updated shape layer set} \label{ssec:multiquanti}

The proposed model starts with the given color quantized raster image $f$ which defines the shape layers in Definition \ref{def:shapelayer}. For images with complex contrast or color gradient, such $K$-mean color quantization inclines to partition regions of similar color into smaller regions, which gives little meaning to semantic understanding of object.

We explore adding grouping quantization as a pre-processing to give more semantic vectorization. This is simply done by modifying the shape layer set $\mathcal{S}$, i.e. adding a few grouped regions of similar colors. The main idea is to obtain a coarser segmentation which is less sensitive to color gradient or brightness, and combine it with the finer details from $K$-mean color quantization. 
Given a color quantized image $f: \Omega \to \{ c_{\ell} \}_{\ell=1}^{K}$, with $K$ different colors,  we segment $f$ into $\bar{k}$ phases, where $\bar{k} \ll K$. To do so, we use unsupervised segmentation~\cite{sandberg2010segmentation} which minimizes 
\begin{equation}\label{eq: unsupervised seg}
    E[\bar{k}, \bar{\phi}_i, \bar{c}_{i} | f] = \mu \left( \sum_{i=1}^{\bar{k}} \frac{P(\bar{\phi}_{i})}{|\bar{\phi}_{i}|} \right) \sum_{i=1}^{\bar{k}} P(\bar{\phi}_{i}) + \sum_{i=1}^{\bar{k}}|f - \bar{c}_{i}|^{2}\chi_{\bar{\phi}_{i}}
\end{equation}
where $\bar{\phi}_{i} \subset \Omega$ is a region  representing each cluster phase, and $\chi_{\bar{\phi}_{i}}: \Omega \rightarrow \{0,1\}$ gives the characteristic function of $\bar{\phi}_{i}$,  $f$ is the color quantized input image, $\bar{c}_{i}$ is the average color value in each phase, i.e., $\bar{c}_{i}= \int_\Omega f \chi_{\bar{\phi}_i} dx$, $P(\bar{\phi}_{i})$ is the perimeter of $\bar{\phi}_{i}$ and $| \bar{\phi}_i|$ is the total area of $\bar{\phi}_i$.  We follow the algorithm in~\cite{sandberg2010segmentation}: the minimization process iterates through each pixel to determine if each pixel should be labeled as another existing segmented phase, be labeled as a new one, or remain unchanged. For details of the minimization process, readers are referred to~\cite{sandberg2010segmentation}. In the experiments, we let $\mu$ to be chosen from $0.5$ to $1$, and cap the total number of phases in  \eqref{eq: unsupervised seg} to be less then $6$, such that once $\bar{k}$ reaches $6$, we only let the pixels to move to existing phases or stay in the current phase, and no more new phase is created. 

This pre-processing step gives a more semantic segmentation.  We add these new phases to the set of shape layers $\mathcal{S}$ while removing some of redundant small regions. 
From the phases $\bar{\phi}_i$ for $i=1, 2, \dots, \bar{k}$ with $\bar{k}\leq 6$ from minimizing  \eqref{eq: unsupervised seg}, and we let $\bar{\phi}^j_i$ be each disjoint connected region of phase $\bar{\phi}_i$, i.e., $\bar{\phi}_i = \bigcup_{j=1}^{\bar{N}_i} \bar{\phi}^j_i$ where  $\bar{N}_i$ is the number of connected components in $\bar{\phi}_i$.
For each  $\bar{\phi^j_i}$, we assign a color $\widehat{c}^j_{i} \in \{c_l \}_{l=1}^{K}$ such that 
\[ 
\widehat{c}^j_{i} = \arg \max \mathcal{H}(f(\bar{\phi}^j_i)) 
\]
where $\mathcal{H}$ is the histogram.  We find the color $\widehat{c}^{j}_{i}$ for each connected component $\bar{\phi}^j_i$. This picks the color which appears the most frequently within $\bar{\phi}^j_i$ among $\{c_l\}_{l=1}^K$.  This is different from using $\bar{c}_i$ in  \eqref{eq: unsupervised seg} which is the average computed among all $\bar{\phi}^j_i$ for $j=1,\dots, \bar{N}_i$.  We allow each disjoint connected component $\bar{\phi}^j_i$ to have a different color $\widehat{c}^j_{i}$.  
Let the set $\mathcal{P}$ be the collection of these new shape layers $\mathcal{P}=\{\bar{\phi}^j_i \mid i=1,2,\dots, \bar{k} \;\;  \text{and} \;\; j=1,2, \dots, \bar{N}_i \}$ where each element is associated with a color $\widehat{c}^j_i$ respectively. Among the shape layers given from $f$ in $\mathcal{S}$, if there are shape layers $S_i\in \mathcal{S}$ which is (i) a subset of $\bar{\phi}^j_i$ (ii) with the same associated color $\widehat{c}^j_{i}$, we collect them in a set $\mathcal{S}_R$ and remove these from the set $\mathcal{S}$.  The shape layers in $\mathcal{S}_R$ are redundant in a sense that they are a part of shape layers in $\mathcal{P}$ but smaller sized regions.  We update the shape layer set $\mathcal{S}$ given from $f$, by adding grouping quantization shape layers $\mathcal{P}$ and removing redundant shape layers $\mathcal{S}_R$:
\[ \mathcal{S}_{new} = ( \mathcal{S} \cup \mathcal{P}) \backslash \mathcal{S}_R. \]
This new shape layer set $\mathcal{S}_{new}$ is used in our vectorization process instead of $\mathcal{S}$, and we proceed to depth ordering and convexification. The pseudo code of this step can be found in Appendix \ref{Asec:codes}.

\begin{figure}
\centering
\begin{tabular}{ccc}
(a) & (b) & (c) \\
\includegraphics[width=5cm]{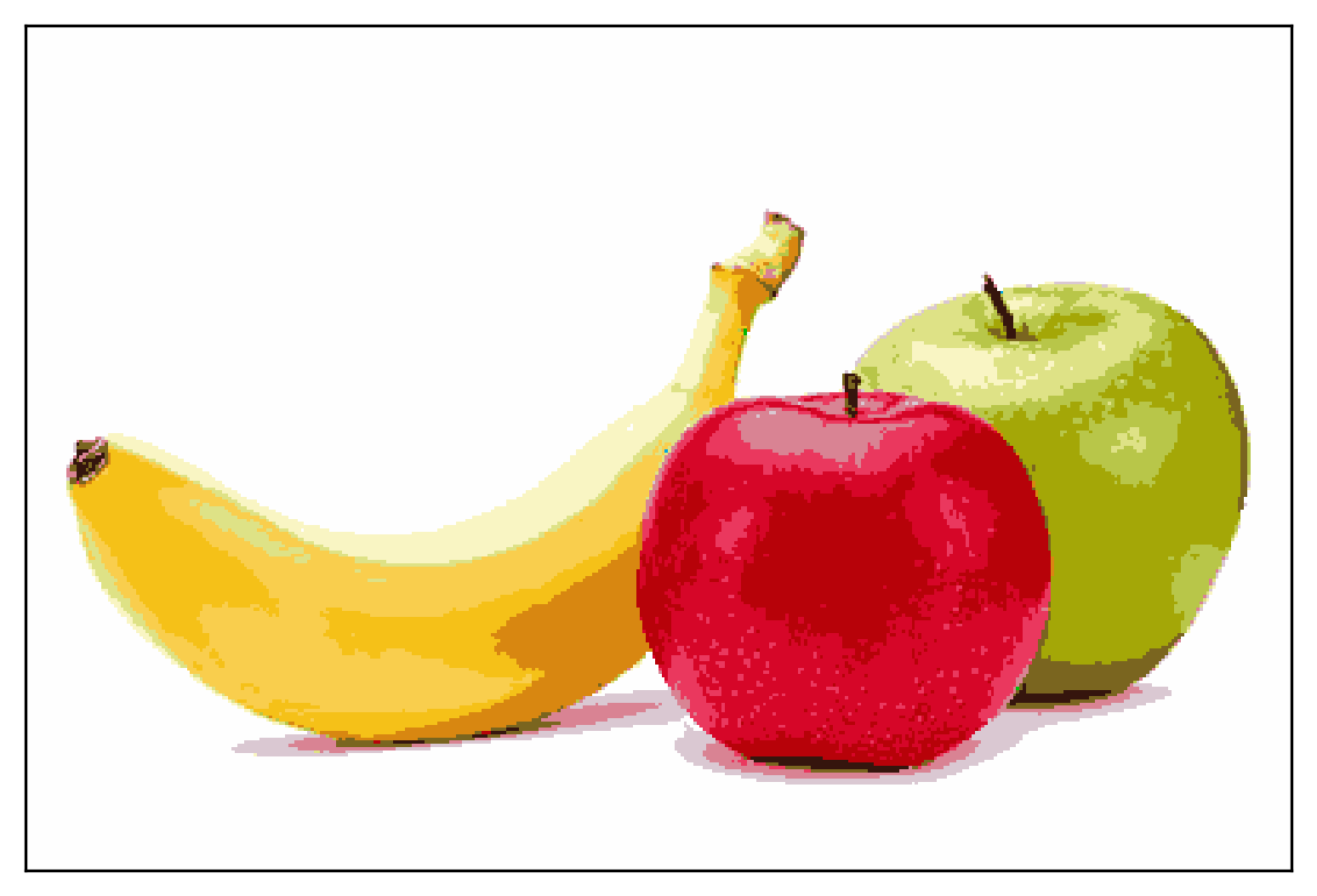}& \includegraphics[width=5cm]{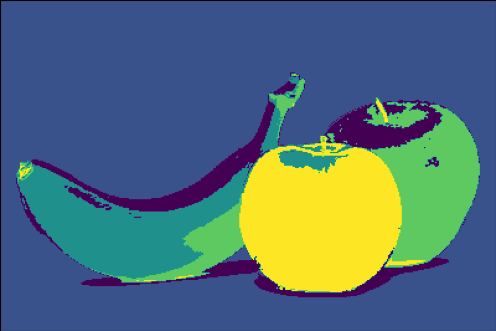} & 
\includegraphics[width=5cm]{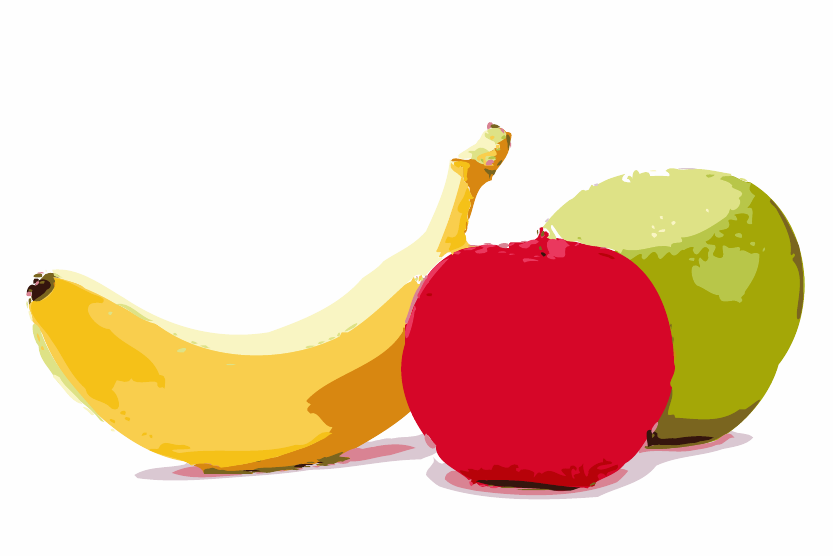} \\
\end{tabular}
\begin{tabular}{cc}
(d) & (e) \\
\begin{tikzpicture}[spy using outlines={rectangle,red,magnification=4,size=2.2cm, connect spies}]
\node (with MQ) at (0,0)   {\includegraphics[width=5cm]{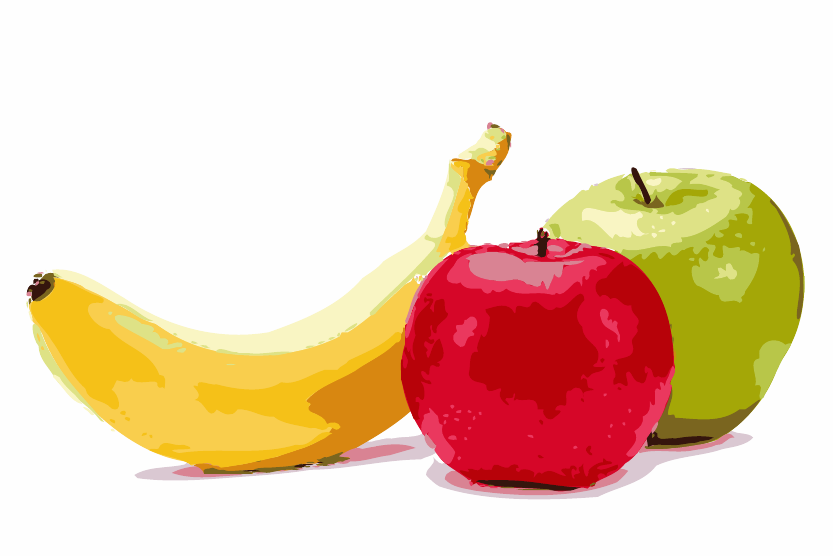}};
\spy[red, connect spies] on (0.75,0.2) in node at (3.5,1.1);
\spy[red, connect spies] on (-2.25,-0.1) in node at (3.5,-1.1);
\end{tikzpicture} &
\begin{tikzpicture}[spy using outlines={rectangle,red,magnification=4,size=2.2cm, connect spies}]
\node (without MQ) at (0,0){\includegraphics[width=5cm]{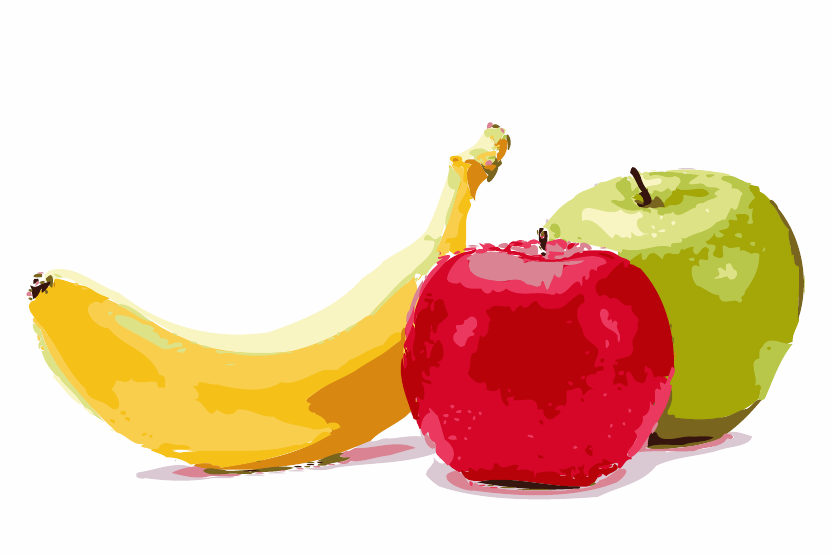}}; 
\spy[red, connect spies] on (0.75,0.15) in node at (3.5,1.1);
\spy[red, connect spies] on (-2.25,-0.1) in node at (3.5,-1.1);
\end{tikzpicture}
\end{tabular}
\caption{[Image vectorization with depth using $\mathcal{S}_{new}$]  (a) The given color quantized image $f$. (b) After multiphase segmentation \eqref{eq: unsupervised seg} applied to $f$ showing different cluster phases. (c) The shape layers in $\mathcal{P}$ with their associated colors. (d) The proposed vectorization with depth using $\mathcal{S}_{new}$.  (e) The proposed vectorization with depth using $\mathcal{S}$.  The boundaries in (d) is less oscillatory compared to (e).  }
\label{fig: fruit phase seg}
\end{figure}
Figure \ref{fig: fruit phase seg} shows the result of grouping quantization for vectorization using $\mathcal{S}_{new}$.  In (a), the given color quantized image $f$ is shown. Unsupervised multiphase segmentation   \eqref{eq: unsupervised seg} is applied to $f$ and gives the segmentation in (b).  Different colors represent different phases, and note that each phase may contain multiple disconnected regions.  (c) shows the shape layers in $\mathcal{P}$ with each associated color $\widehat{c}^j_i$. Notice that in (b), a part of banana and a part of apple are identified to be one phase, yet their associated colors are different and better approximated in (c).  The result using $\mathcal{S}_{new}$ is shown in (d) while, the result using $\mathcal{S}$ is shown in (e). The boundaries in (d) are much better defined and clear compared to the oscillatory boundaries in (e).    During the color quantization, the boundaries are easily affected by contrast and shade, which results in many small regions of different colors in $f$, and  these effects can get emphasized by \bezier curve fitting.  Using $\mathcal{S}_{new}$ helps to add larger semantic shape layers  to the proposed image vectorization with depth. 

\begin{figure}
    \centering
    \begin{tabular}{cc}
       (a)  & (b) \\
     \begin{tikzpicture}[spy using outlines={rectangle,red,magnification=4,size=3cm, connect spies}]
    \node (with MQ) at (0,0)   {\includegraphics[width = 4cm]{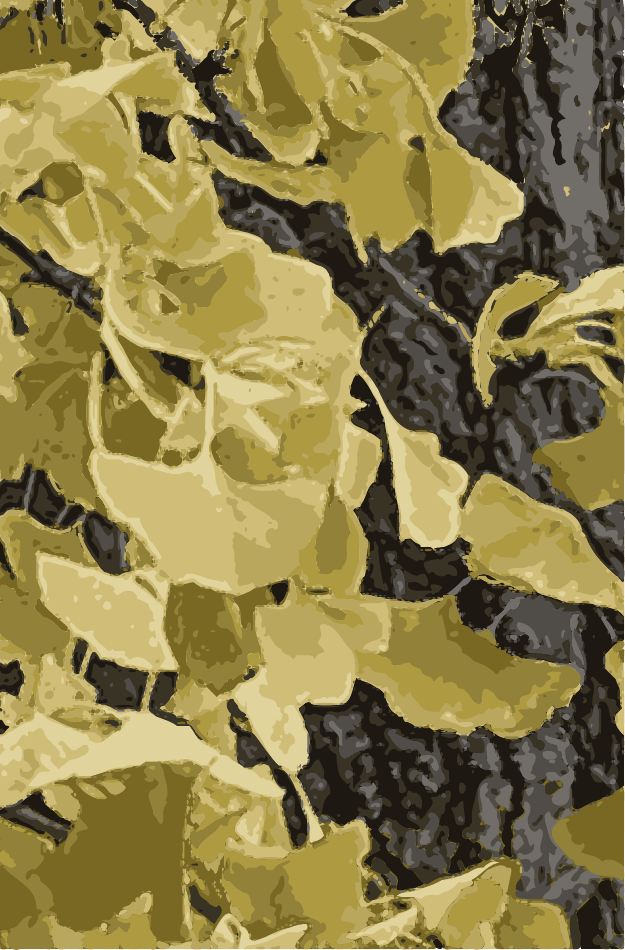} };
    \spy[red, connect spies] on (0.3,0.4) in node at (-3.7,1.5);
    \spy[red, connect spies] on (-1.1,-1.4) in node at (-3.7,-1.5);
    \end{tikzpicture}     & \begin{tikzpicture}[spy using outlines={rectangle,red,magnification=4,size=3cm, connect spies}]
    \node (without MQ) at (0,0){\includegraphics[width = 4cm]{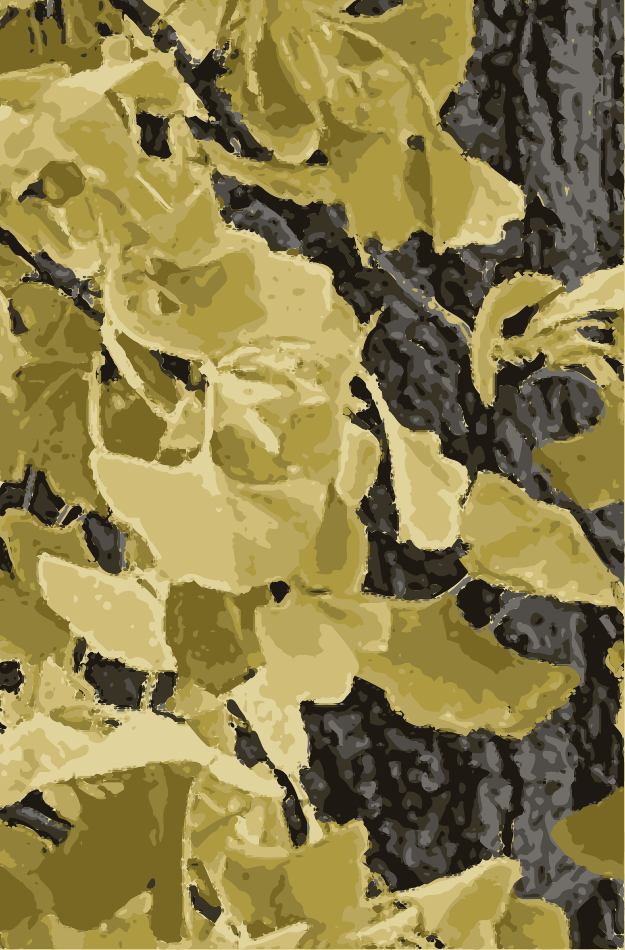} }; 
    \spy[red, connect spies] on (0.3,0.4) in node at (3.7,1.5);
    \spy[red, connect spies] on (-1.1,-1.4) in node at (3.7,-1.5);
    \end{tikzpicture}
    \end{tabular}
   \caption{[Image vectorization with depth using $\mathcal{S}_{new}$] (a) The proposed vectorization with depth using $\mathcal{S}_{new}$.  (b) The proposed vectorization with depth using $\mathcal{S}$.  The boundaries in (a) are less oscillatory compared to (b). }
    \label{fig: tree example}
\end{figure}
Figure \ref{fig: tree example} shows another example of using $\mathcal{S}_{new}$. Figure \ref{fig: tree example} (a) and (b) show image vectorization with depth using $\mathcal{S}_{new}$ and $\mathcal{S}$ respectively.  Using $\mathcal{S}_{new}$, details look sharper and smoother with less oscillation, while using $\mathcal{S}$ has more small noisy regions.
This $\mathcal{S}_{new}$ can reinforce integrity of more semantic shape layers, which is hard to maintain during $K$-mean color quantization step mostly due to brightness and color gradient. This helps to preserve details and minimize gaps between shape layers. It is recommended to carry out this extra step for real images, while it is not as necessary for simpler images or cartoon images.

\subsection{Comparison with layer-based vectorization methods} \label{ss:comparison}

Our approach is unique in a way that we incorporate (computed) depth ordering to vectorization.  To provide comparisons, we pick state-of-the-art methods which considers layer-based vectorization.  We compare our model to LIVE~\cite{ma2022layer}, DiffVG~\cite{li2020diffvg} and LIVSS~\cite{wang2024layered}.
Li et al.~\cite{li2020diffvg} proposed a differentiable rasterizer (DiffVG) which connects the raster image and the vector domain, allowing gradient-based optimization for learning-based approaches toward various vector graphic applications, one of which includes image vectorization. Li et al.~\cite{li2020diffvg} use this differentiable rasterizer to gradually deform randomly initialized shapes until they resemble the input raster image. This can be viewed as a layerwise approach since the shapes overlap each other.  LIVE~\cite{ma2022layer} and LIVSS~\cite{wang2024layered} build on this differentiable rasterizer, that LIVE~\cite{ma2022layer} progressively adds more curves to fit the given image, and LIVSS~\cite{wang2024layered} adds semantic simplification to this process.  

\begin{figure}
    \centering
    \begin{tabular}{ccc}
    (a) & (b) & (c)\\
\includegraphics[width=0.23\textwidth]{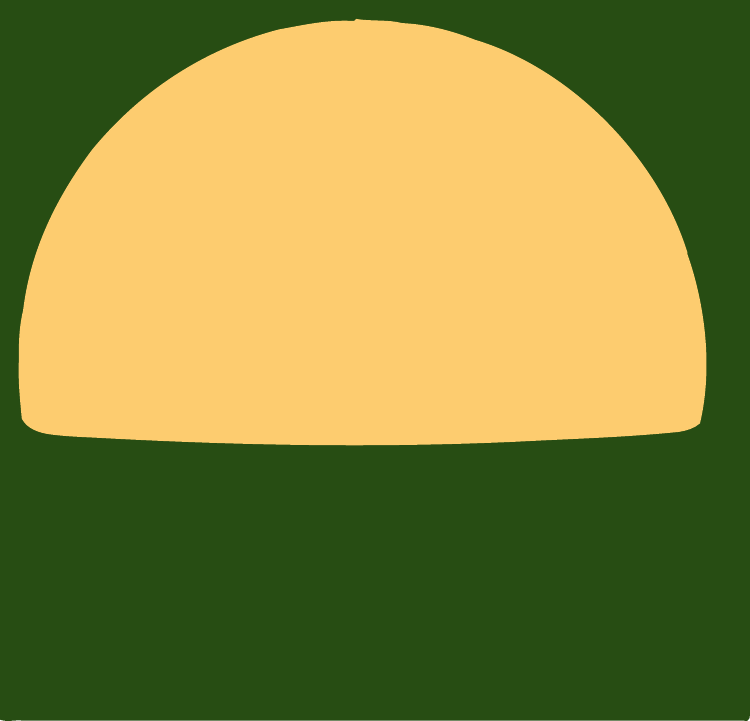} &
\includegraphics[width=0.23\textwidth]{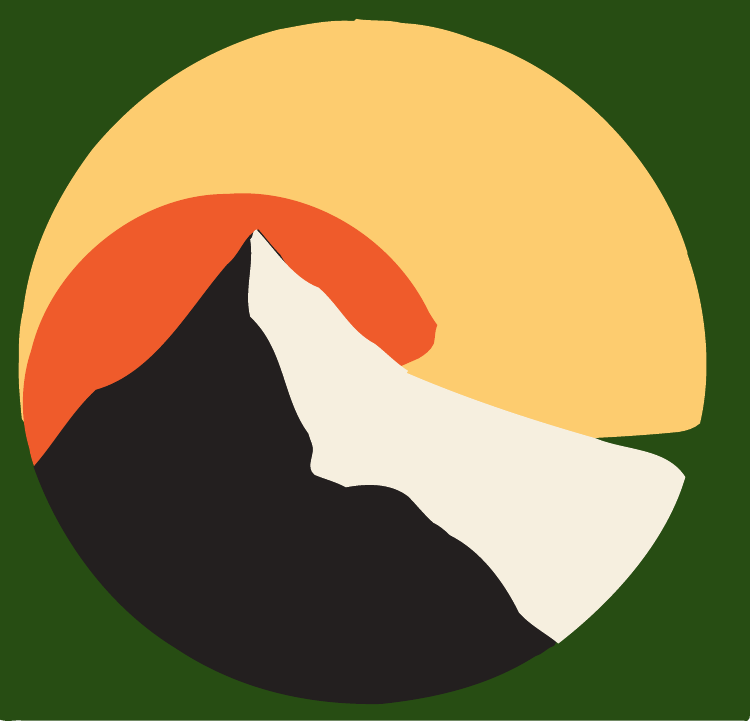} &
\includegraphics[width=0.23\textwidth]{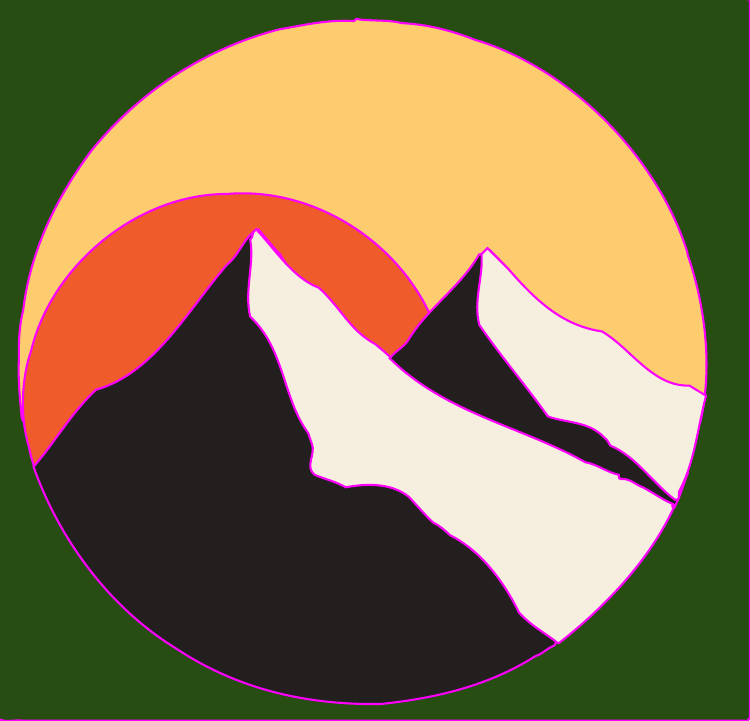} \\
(d) & (e) & (f) \\
\includegraphics[width=0.25\textwidth]{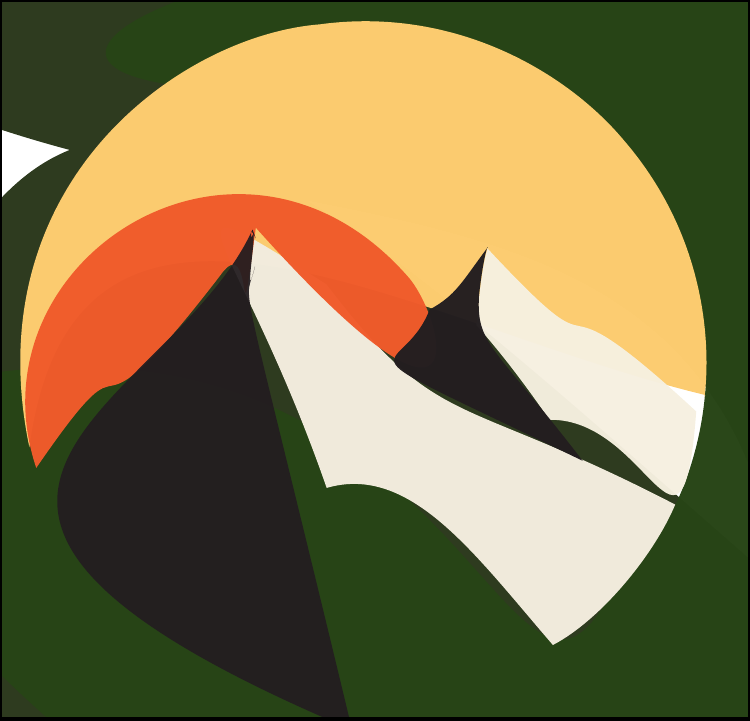}
&    \includegraphics[width=0.25\textwidth]{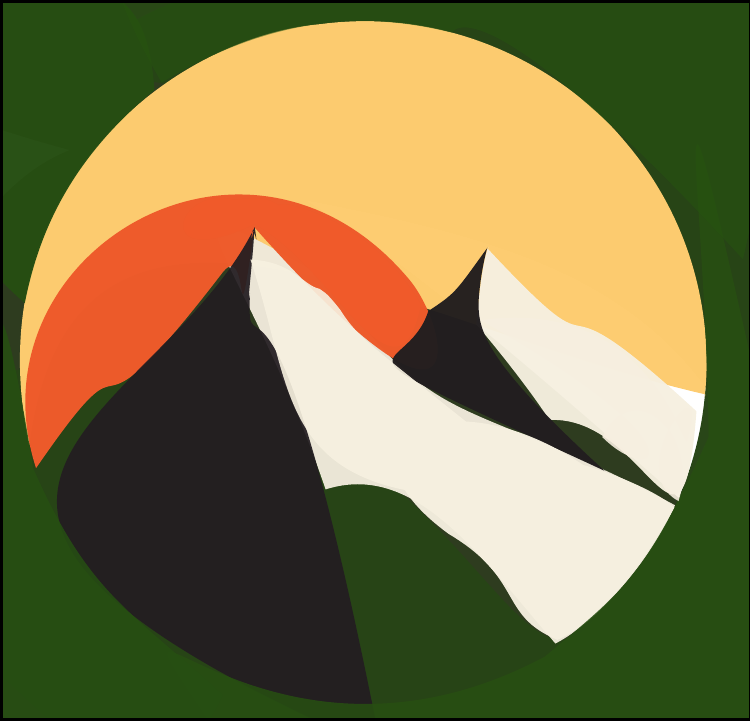}
& \includegraphics[width=.25\textwidth]{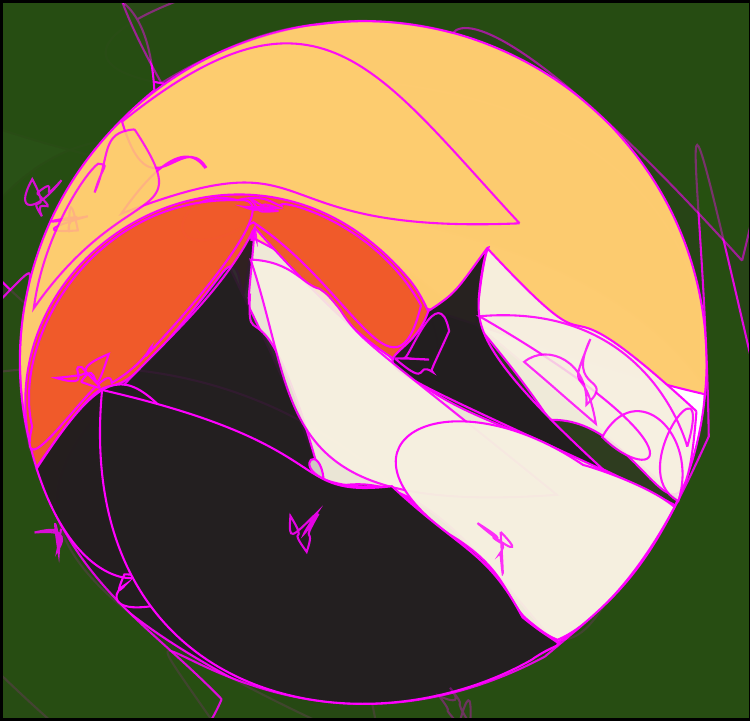}\\
(g) & (h) & (i) \\
\includegraphics[width=0.25\textwidth]{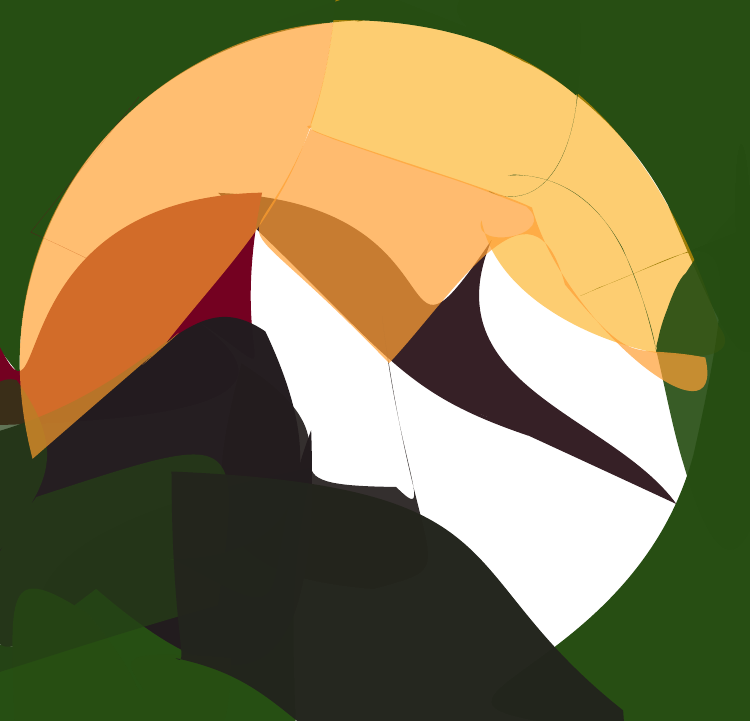} & 
\includegraphics[width=0.25\textwidth]{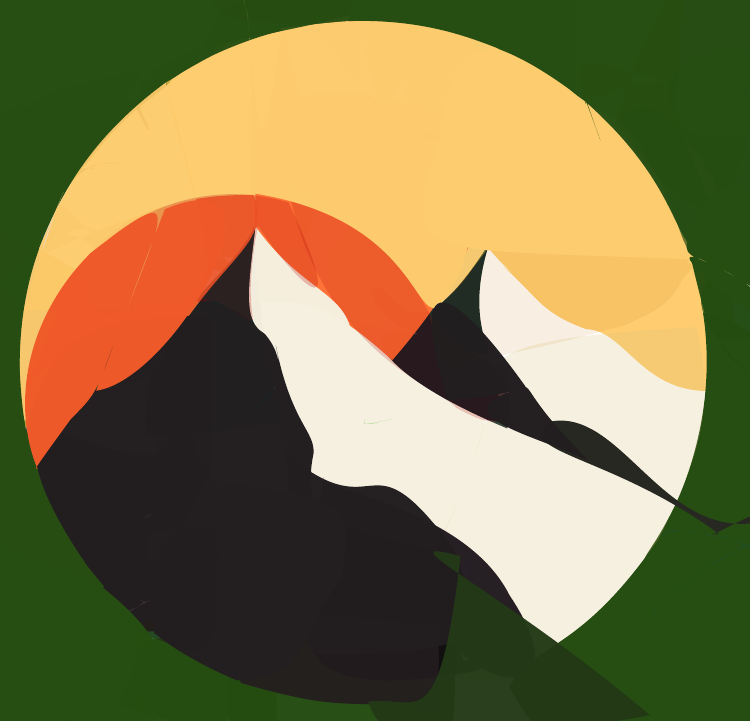} &
\includegraphics[width=0.25\textwidth]{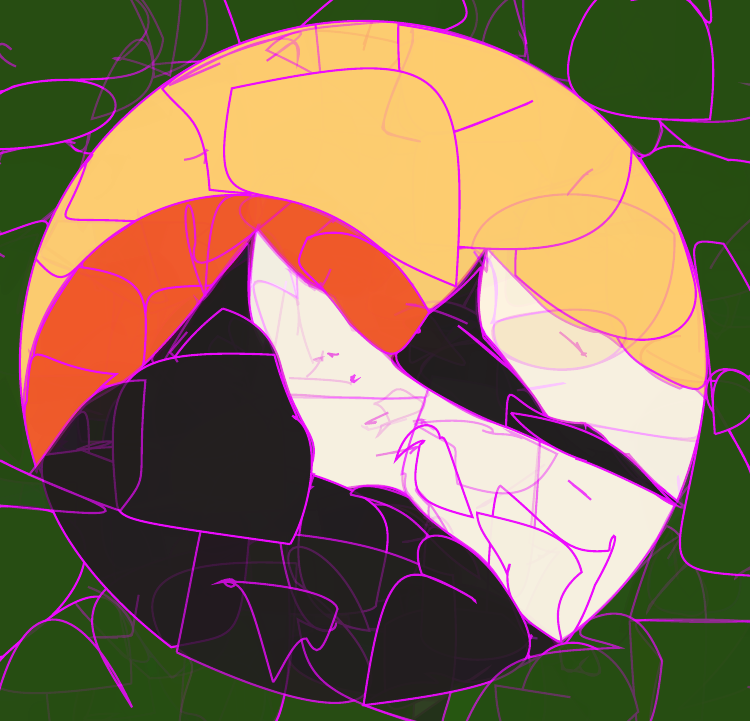} \\
\end{tabular}
\caption{[Comparison with LIVE and DiffVG] The top row, (a)-(c) show the proposed method
$R_{7,4,3}$, $R_{7,4,3,2,6}$ and final vectorization as in Figure \ref{fig: mountain shapes}. 
The second row, (d)-(f) show some layers of LIVE~\cite{ma2022layer}, and the last row (g)-(i) show some layers of  DiffVG~\cite{li2020diffvg}. In the third column, we superpose the boundarires of each shape as pink to emphasize the differences, while other methods use many regions to represent one shape and have excess outside of the given image region, the proposed method (c) gives 7 shape layers depth ordered. }
\label{fig: comparison live mountain long}
\end{figure}
In Figure \ref{fig: comparison live mountain long}, we present the comparisons: the top row shows $R_{7,4,3}$, $R_{7,4,3,2,6}$ and final one as in Figure \ref{fig: mountain shapes} the third row.  In Figure \ref{fig: comparison live mountain long}, the second row shows some layers of LIVE~\cite{ma2022layer}, and the last row that of  DiffVG~\cite{li2020diffvg}.   In the third column, we outline the boundary of each shape as pink to emphasize the differences. 
LIVE~\cite{ma2022layer} and DiffVG~\cite{li2020diffvg} are both initialized by the number of output shapes, or paths as called in \cite{ma2022layer} and \cite{li2020diffvg}, and we use the default parameters provided by the authors for both methods.  Paths are highlighted in pink strokes, and we add black square frames in (d)-(f) to indicate the original size of the input quantized image.  LIVE nor DiffVG does not limit the vectorized paths to be contained in the given image size. 
It is reasonable to perceive 7 shapes for this given image, but LIVE or DiffVG use many paths, i.e., use many regions to represent one shape. 
LIVE~\cite{ma2022layer} considers only minimizing color difference and suppressing self-intersecting \bezier curves, other shape regularity such as arc-length and curvature is not taken into account. Shapes have less regularity at places that they are covered.

\begin{figure}
\centering
\begin{tikzpicture}[node distance=8.45cm]
  \node (input) [label=above: (a)] {\begin{tikzpicture}[spy using outlines={rectangle,red,magnification=3.5,size=2cm, connect spies}]
  \node (given) at (0,0) {\includegraphics[width = 7cm]{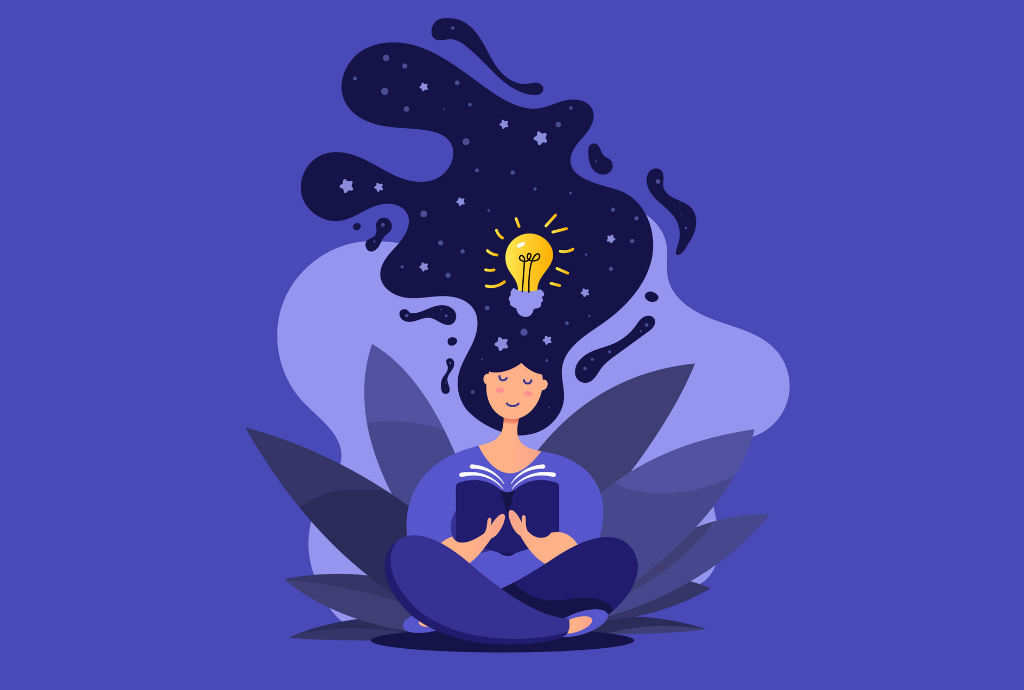}};
  \spy[red, connect spies] on (-0.8,1.8) in node at (-3,3.5);
  \spy[red, connect spies] on (0.1,0.55) in node at (-1,3.5);
  \spy[red, connect spies] on (0.05,-0.3) in node at (1,3.5);
  \spy[red, connect spies] on (-0.05,-1.05) in node at (3,3.5);
  \end{tikzpicture}};
  \node (ours) [label=above: (b), right of = input]{\begin{tikzpicture}[spy using outlines={rectangle,red,magnification=3.5,size=2cm, connect spies}]
  \node (ours)  at (0,0) {\includegraphics[width = 7cm]{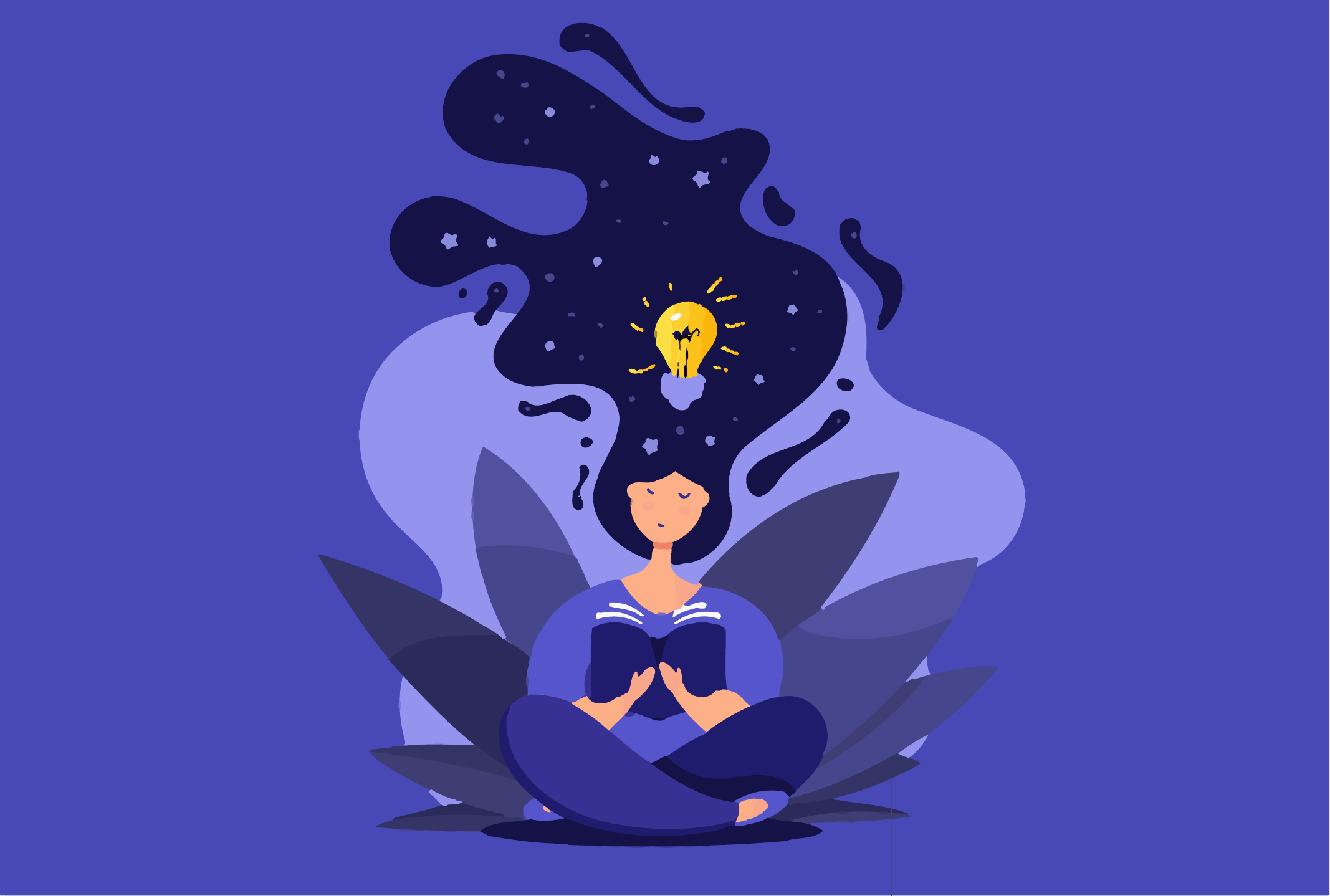}};
  \spy[red, connect spies] on (-0.8,1.8) in node at (-3,3.5);
  \spy[red, connect spies] on (0.1,0.55) in node at (-1,3.5);
  \spy[red, connect spies] on (0.05,-0.3) in node at (1,3.5);
  \spy[red, connect spies] on (-0.05,-1.05) in node at (3,3.5);
  \end{tikzpicture} };
  \node (live) [label=above: (c), below of = input]{\begin{tikzpicture}[spy using outlines={rectangle,red,magnification=4,size=2cm, connect spies}]
  \node (live) at (0,0) {\includegraphics[width = 7cm]{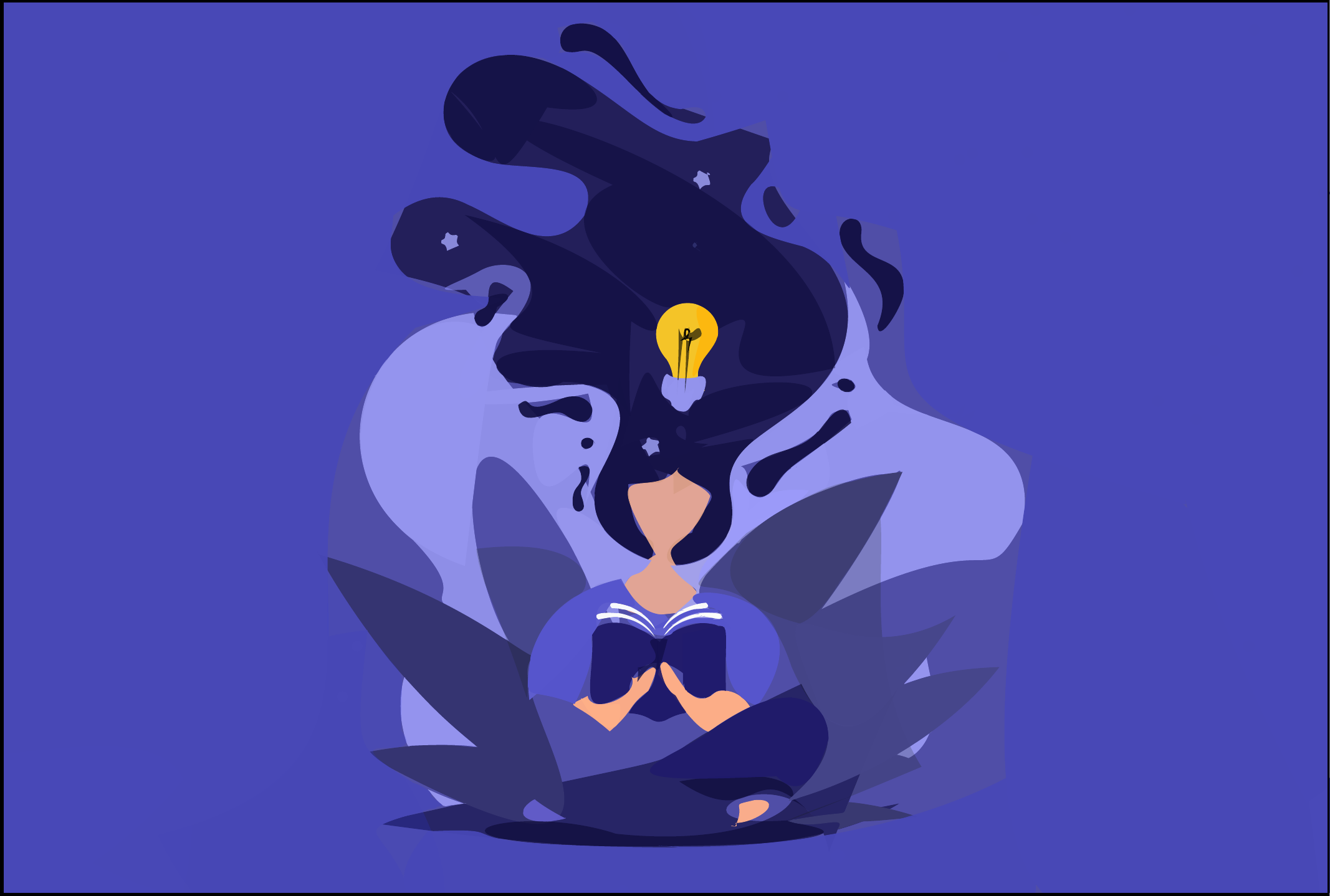}};
  \spy[red, connect spies] on (-0.8,1.8) in node at (-3,3.5);
  \spy[red, connect spies] on (0.1,0.55) in node at (-1,3.5);
  \spy[red, connect spies] on (0.05,-0.3) in node at (1,3.5);
  \spy[red, connect spies] on (-0.05,-1.05) in node at (3,3.5);
  \end{tikzpicture} };
  \node (diffvg) [label=above: (d), right of = live]{\begin{tikzpicture}[spy using outlines={rectangle,red,magnification=4,size=2cm, connect spies}]
  \node (diffvg) at (0,0) {\includegraphics[width = 7cm]{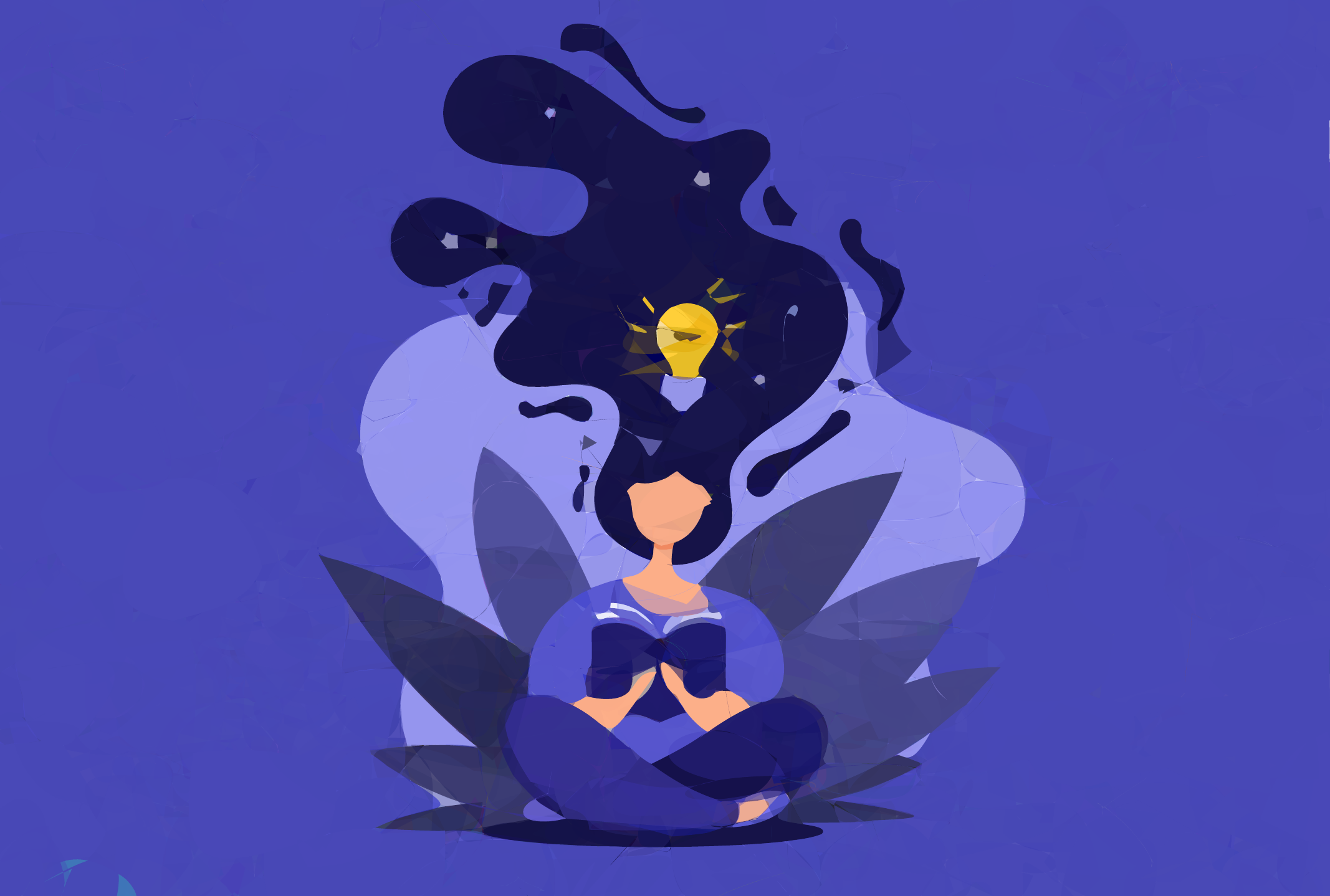}};
  \spy[red, connect spies] on (-0.8,1.8) in node at (-3,3.5);
  \spy[red, connect spies] on (0.1,0.55) in node at (-1,3.5);
  \spy[red, connect spies] on (0.05,-0.3) in node at (1,3.5);
  \spy[red, connect spies] on (-0.05,-1.05) in node at (3,3.5);
  \end{tikzpicture} };
\end{tikzpicture}
\caption{[Comparison with LIVE and DiffVG] (a) The given color quantized image $f$.  (b) the proposed method. (c) and (d) are results from LIVE and DiffVG respectively. Not only vectorization with depth give the correct depth information, it keeps more details with less number of \bezier curves. } 
\label{fig: detail comparison live woman}
\end{figure}
Figure \ref{fig: detail comparison live woman} shows another example, considered in \cite{ma2022layer}.  From the given color quantized image $f$ in (a), we present our result in (b), LIVE~\cite{ma2022layer} result in (c) and DiffVG~\cite{li2020diffvg} result in (d). 
The proposed method and LIVE yield meaningful depth arrangements, while DiffVG, with its random shape initialization approach, may not provide layering information of comparable significance. The proposed  method preserves more details: image vectoriation with depth retains facial expressions and finer details on the book and the background.

In Table \ref{table: comparison}, we present computational complexity and some quantitative measures for comparison.  We run experiments on the same MacBook Pro with M1 pro consisting of 10-core CPU and 32GB memory without using GPU. We note that there is difference in hardware: our method is run on an Apple M1 Pro with 10-core CPU, while DiffVG~\cite{li2020diffvg} runs on 13th Generation Intel\textregistered Core\texttrademark i9-13900K with 24 cores and NVIDIA GeForce RTX 4090. It took 5 hours 44 minutes for LIVE~\cite{ma2022layer} to complete the process while the proposed method took only 37 seconds.  Due to the demanding computing resource needed for LIVE~\cite{ma2022layer}, we did not try higher number of paths such as 128 or 512 as used in \cite{ma2022layer}. 
For quantitative comparison, we present Mean Square Error(MSE) and Peak Signal-to-Noise Ratio (PSNR). We first convert each vectorized output to PNG format using Adobe Illustrator~\cite{adobeillustrator}, and compute the error against the raw input raster image. 
Some cases despite its long computation time and better computation resource for LIVE, it still appears to be far from convergence.
In general, the proposed method gives a lower MSE loss and higher PSNR.

\begin{table}
        \centering
        \begin{tabular}{|c|c|c|c|c|c|c|}
        \hline
        & Initialization & \begin{tabular}[c]{@{}c@{}}CPU/\\ GPU\end{tabular} & \begin{tabular}[c]{@{}c@{}}MSE\\ Loss\end{tabular} & PSNR & \begin{tabular}[c]{@{}c@{}}Number of \\ \bezier Curves\end{tabular} &  Time(s) \\ \hline
        \begin{tabular}[c]{@{}c@{}}The proposed method\\ Figure \ref{fig: flow chart}\\Figure \ref{fig: fruit phase seg} \\ Figure \ref{fig: tree example} \\ Figure \ref{fig: detail comparison live woman} \end{tabular} & \begin{tabular}[c]{@{}c@{}}5 colors\\ 15 colors \\ 10 colors \\ 40 colors \end{tabular} & \begin{tabular}[c]{@{}c@{}}CPU\\ CPU\\ CPU \\ CPU\end{tabular} & \begin{tabular}[c]{@{}c@{}}13.44\\ 84.12 \\ 99.86 \\  9.57\end{tabular} & \begin{tabular}[c]{@{}c@{}}41.62\\ 33.65\\ 32.91  \\ 43.09\end{tabular} & \begin{tabular}[c]{@{}c@{}}93\\ 786 \\ 13257 \\ 844\end{tabular} &  \begin{tabular}[c]{@{}c@{}}37\\ 75\\ 657 \\  288\end{tabular} \\ 
        \hline
        \begin{tabular}[c]{@{}c@{}}LIVE\\ Figure \ref{fig: flow chart}\\ Figure \ref{fig: fruit phase seg} \\ Figure \ref{fig: tree example} \\  Figure \ref{fig: detail comparison live woman}\end{tabular} & \begin{tabular}[c]{@{}c@{}}32 paths\\ 128 paths \\ 512 paths \\  128 paths\end{tabular} & \begin{tabular}[c]{@{}c@{}}CPU\\ GPU\\ GPU \\ GPU\end{tabular} & \begin{tabular}[c]{@{}c@{}}28.55\\ 68.71 \\ 163.95 \\  36.86 \end{tabular} & \begin{tabular}[c]{@{}c@{}}38.34\\ 34.53 \\ 30.75\\  37.24\end{tabular} & \begin{tabular}[c]{@{}c@{}}128\\ 512 \\ 2064 \\  512\end{tabular} & \begin{tabular}[c]{@{}c@{}}20640\\ 5072 \\ 36472 \\  18393\end{tabular} \\ \hline
        \begin{tabular}[c]{@{}c@{}}DiffVG\\ (Figure \ref{fig: flow chart}\\ Figure \ref{fig: fruit phase seg} \\ Figure \ref{fig: tree example} \\  Figure \ref{fig: detail comparison live woman}\end{tabular} & \begin{tabular}[c]{@{}c@{}}128 paths\\ 512 paths \\ 1024 paths\\ 1024 paths\end{tabular} & \begin{tabular}[c]{@{}c@{}}GPU\\ GPU\\ GPU \\  GPU\end{tabular} & \begin{tabular}[c]{@{}c@{}}71.35\\ 83.71 \\ 209.63 \\  40.16\end{tabular} & \begin{tabular}[c]{@{}c@{}}34.37\\ 33.67 \\ 29.69 \\  36.86\end{tabular} & \begin{tabular}[c]{@{}c@{}}517\\ 2059 \\ 4126 \\  4126 \end{tabular} & \begin{tabular}[c]{@{}c@{}}194\\ 179 \\ 310 \\  1188\end{tabular} \\ \hline
        \end{tabular}
        \caption{[Quantitative comparison]  The second column shows how we initialized the methods. The proposed method uses least number of colors.  MSE and PSNR are comparable with clear benefit in the speed. }
        \label{table: comparison}
\end{table}

\begin{figure}
      \centering
  \begin{subfigure}[b]{.3\textwidth}
    \centering
    \subcaption{}
    \includegraphics[width=\textwidth]{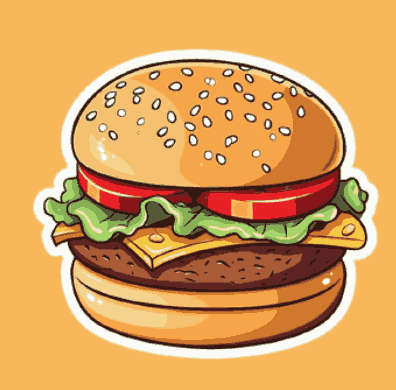} 
  \end{subfigure}
\begin{subfigure}[b]{.3\textwidth}
    \centering
    \subcaption{}
    \includegraphics[width=\textwidth]
    {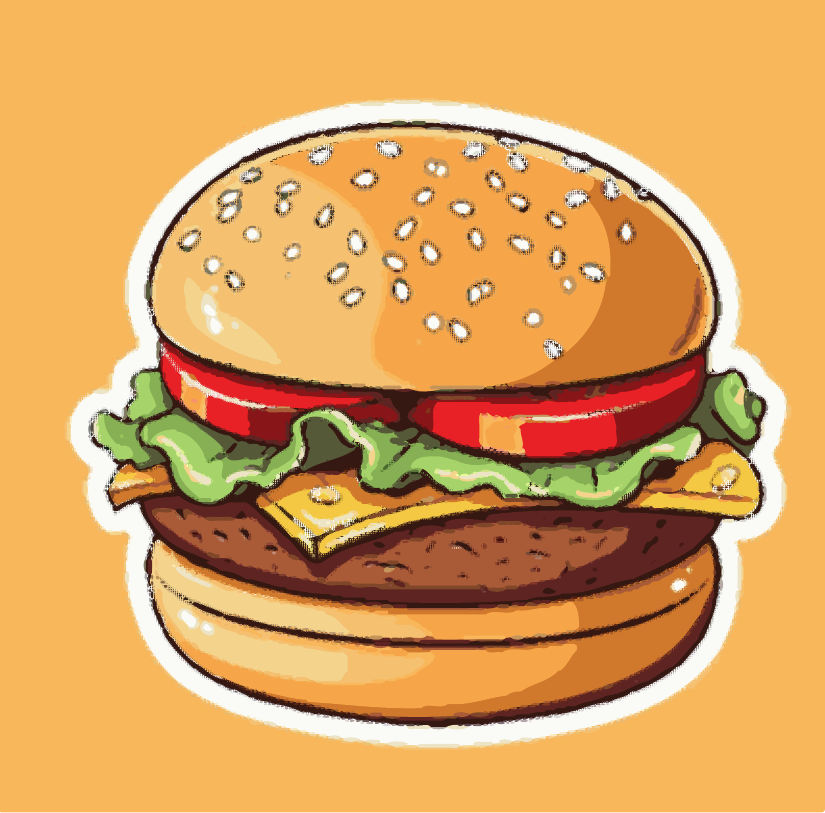} 
  \end{subfigure}
\begin{subfigure}[b]{.3\textwidth}
    \centering
    \subcaption{}
    \includegraphics[width=\textwidth]{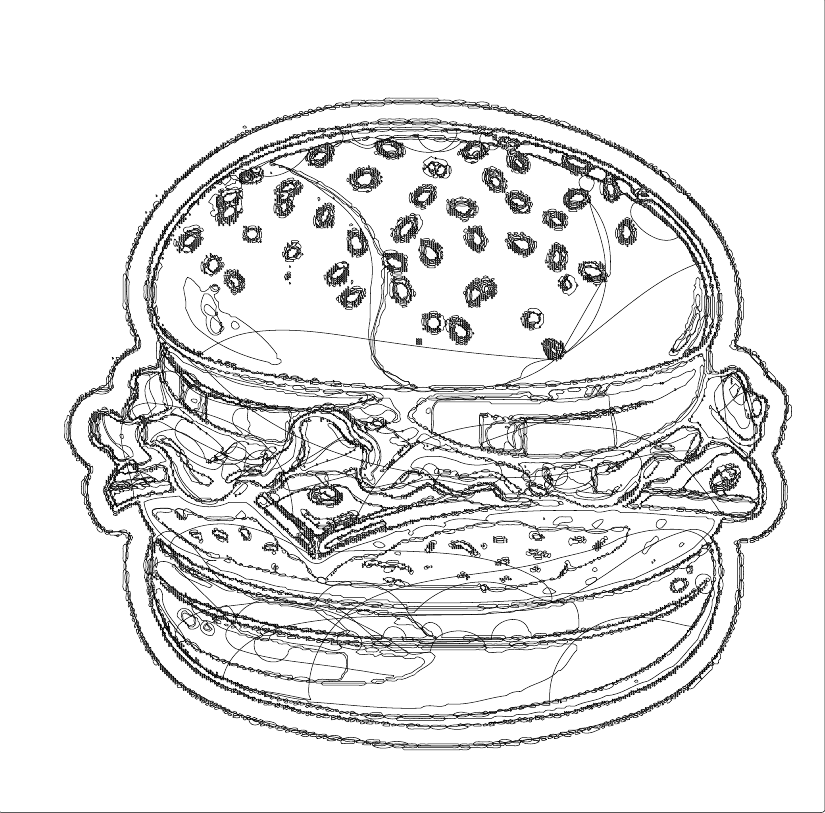}
  \end{subfigure}
\begin{subfigure}[b]{.18\textwidth}
    \centering
    \subcaption{}
    \includegraphics[width=\textwidth]{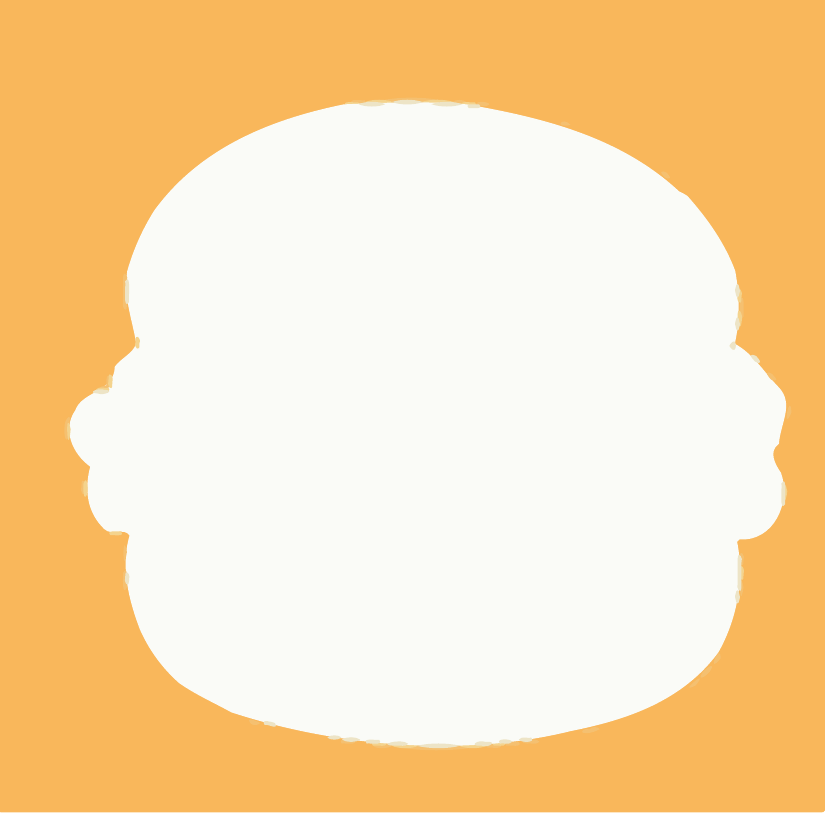}
  \end{subfigure}
\begin{subfigure}[b]{.18\textwidth}
    \centering
    \subcaption{}
    \includegraphics[width=\textwidth]{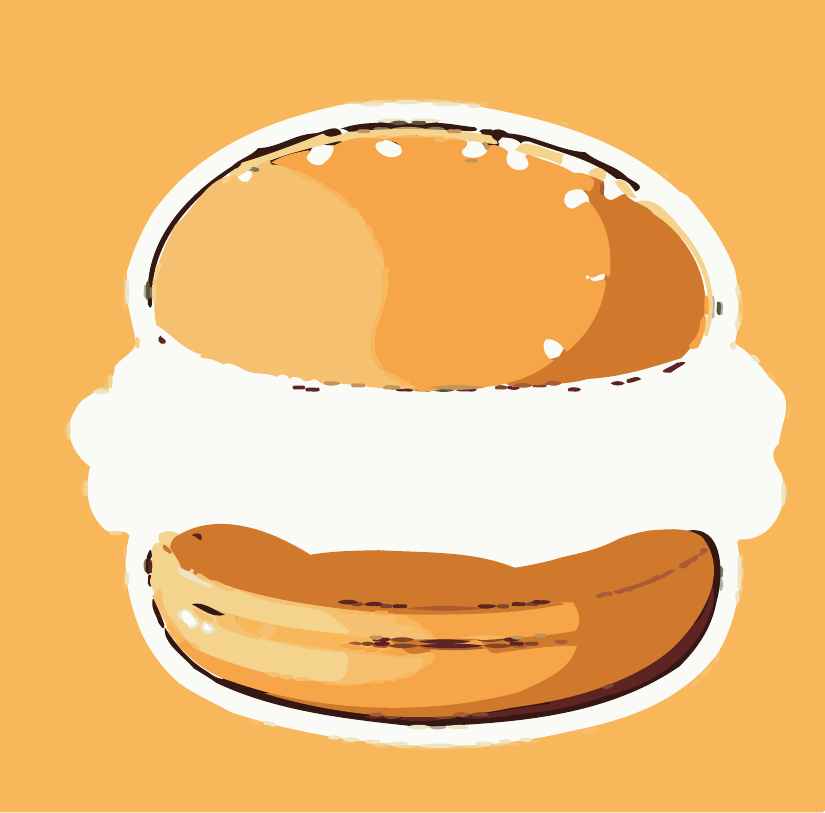}
  \end{subfigure}
\begin{subfigure}[b]{.18\textwidth}
    \centering
    \subcaption{}
    \includegraphics[width=\textwidth]{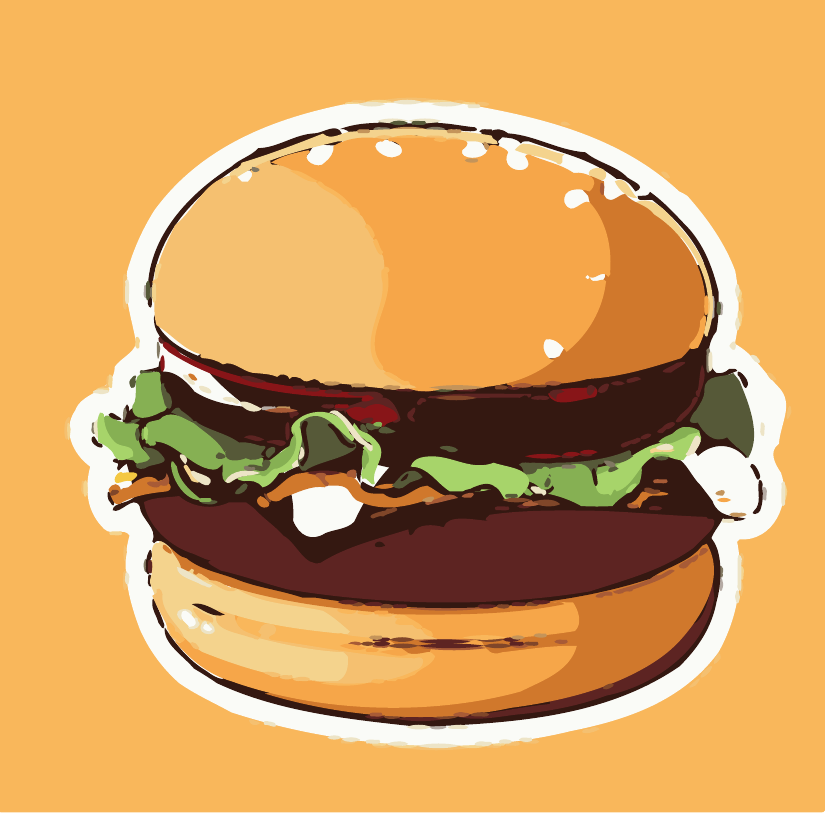}
  \end{subfigure}
\begin{subfigure}[b]{.18\textwidth}
    \centering
    \subcaption{}
    \includegraphics[width=\textwidth]{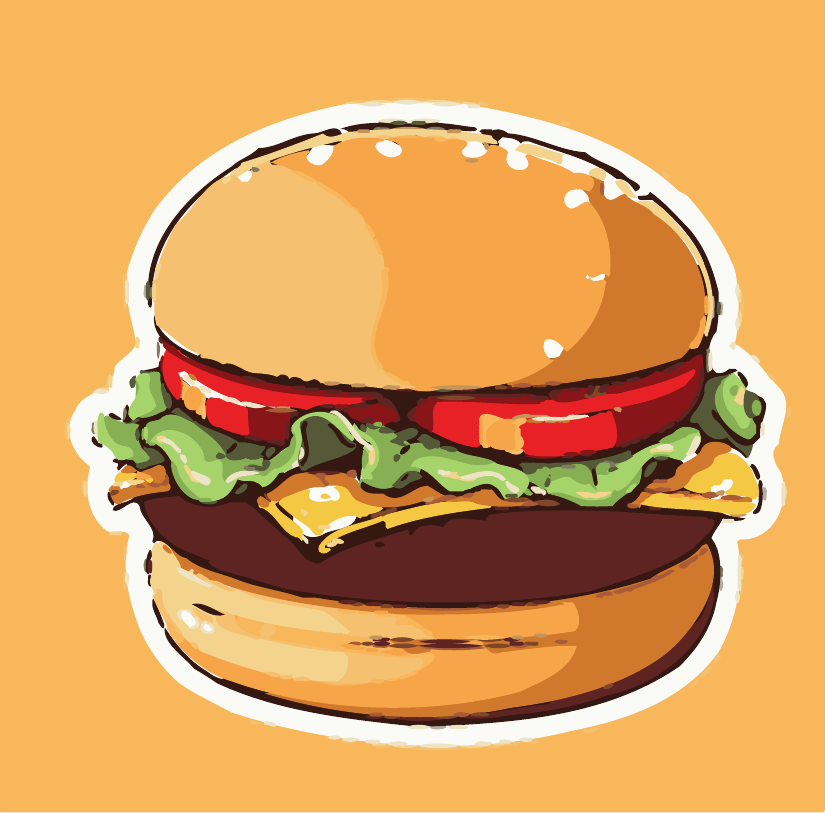}
  \end{subfigure}
  \begin{subfigure}[b]{.18\textwidth}
    \centering
    \subcaption{}
    \includegraphics[width=\textwidth]{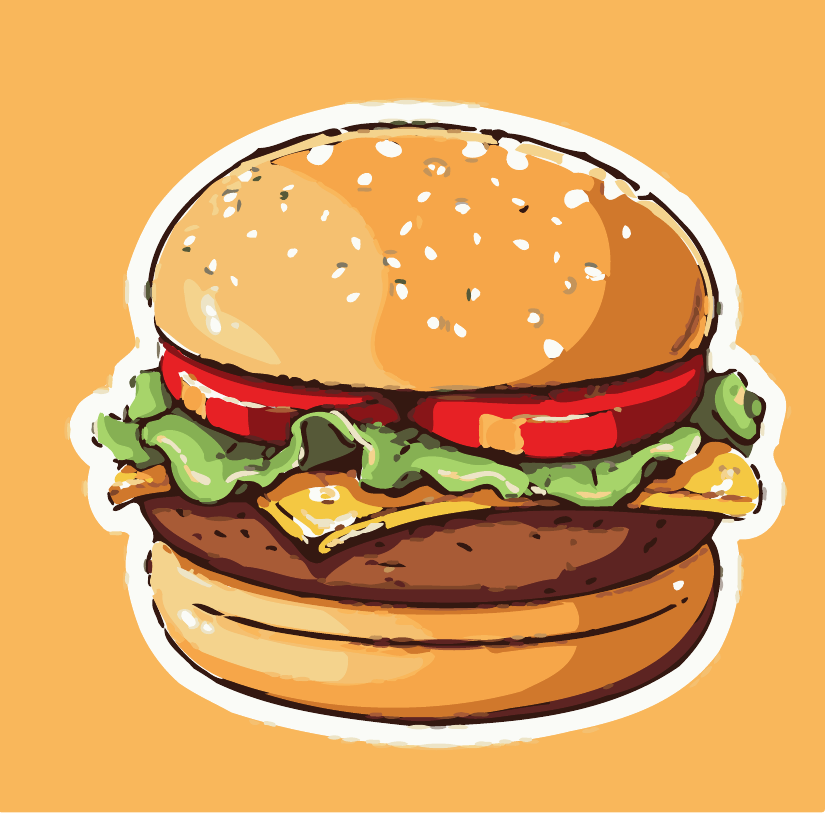} 
  \end{subfigure}
     \caption{[Comparison with LIVSS~\cite{wang2024layered}] (a) The given color-quantized image $f$. (b) The proposed method with $S_{noise}$.  (c) The stroke of result in (b).  (d) to (h) show shape layers in various depth level from the bottom to the top.  }
    \label{fig: burger}
\end{figure}
We experiment with images considered in LIVSS~\cite{wang2024layered} in Figure \ref{fig: burger}.  We direct readers to~\cite{wang2024layered} for comparison.  Figure \ref{fig: burger} (a) displays the color quantized image (396 × 390 pixels) of the image obtained via screen capture from the website of~\cite{wang2024layered}.  Our method takes 173 seconds, while~\cite{wang2024layered} reports 888 seconds; we note that this may be due to the image size being different. 
Figure \ref{fig: burger} (b) presents the proposed method and (c) shows the stroke of our vectorization.  Images (d) through (h) shows various depth levels from the bottom to the top by the proposed method.  (d) is showing two layers, the bottom shape layer is yellow background which is fully yellow, and the second shape layer is the white outline of the burger. 
One significant difference between the proposed method and \cite{wang2024layered} is that our method represents the white stroke around the burger as a single piece, avoiding the use of excess Bézier curves to depict this white shape.  The proposed method identifies the bun as a few large shape layers, showing the gradient changes of bun in (a).  From (d) to (h), it shows the ordering of textures: background, white outline of the burger, two buns, burger meat and lettuces, tomatos, then sesame seeds on the bun, each convexified by curvature based inpainting for vectorization.  To keep more details, after collecting all layers from the bottom to the top in (h), we added the vectorized noise layer $S_{noise}$ in \eqref{eq:noiselayer} to (h) and get (b) just for this experiment.  This is due to the color quantization that very thin shapes typically gets separated into many small regions with slightly different colors near the boundary, e.g., the boundaries of sesame seeds on the top bun. In such case adding vectorized noise layer can help to keep more details.   
Figure \ref{fig: burger}(c) shows the stroke of our vectorized output, which has a complexity visually comparable to that of~\cite{wang2024layered} and is less complex than LIVE~\cite{ma2022layer} and DiffVG~\cite{li2020diffvg} as shown in~\cite{wang2024layered}. 

\begin{figure}
    \centering
    \begin{subfigure}[b]{.3\textwidth}
      \subcaption{}
      \centering
      \includegraphics[width=\textwidth]{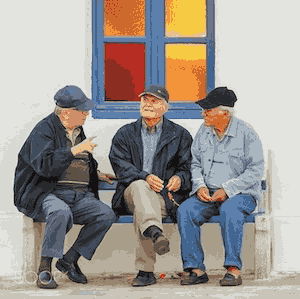}
  \end{subfigure}
  \begin{subfigure}[b]{.3\textwidth}
      \subcaption{}
      \centering
      \includegraphics[width=\textwidth]{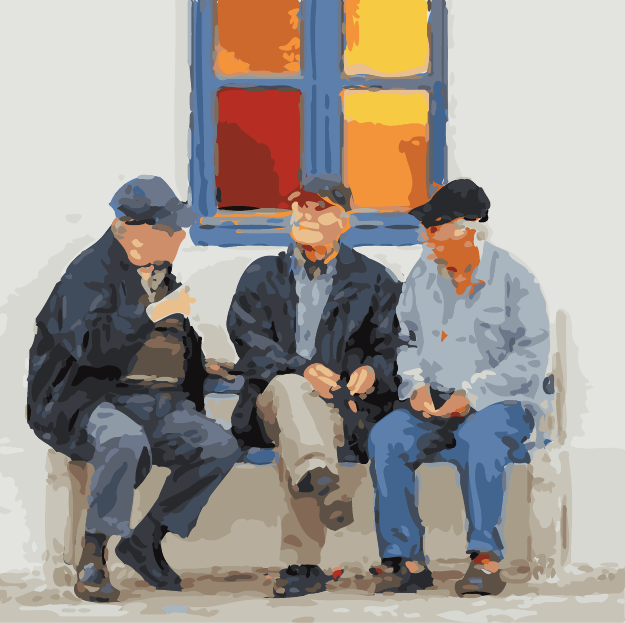}
  \end{subfigure}
  \begin{subfigure}[b]{.3\textwidth}
      \subcaption{}
      \centering
      \includegraphics[width=\textwidth]{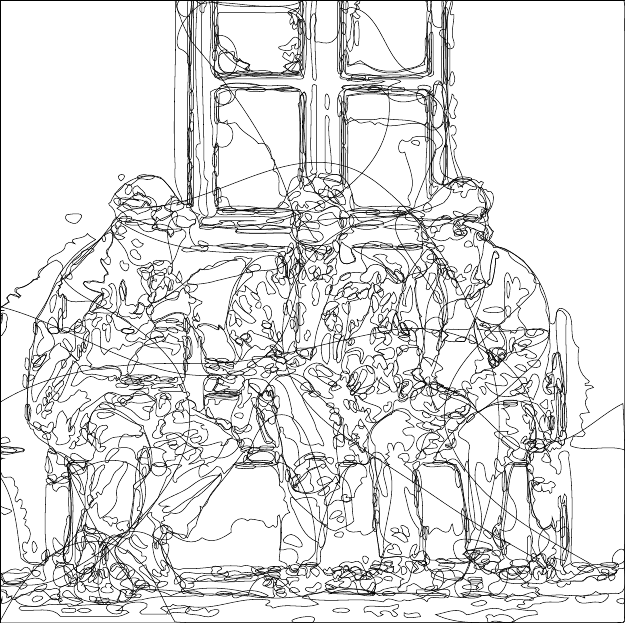}
  \end{subfigure}

  \begin{subfigure}[b]{.2\textwidth}
      \subcaption{}
      \centering
      \includegraphics[width=\textwidth]{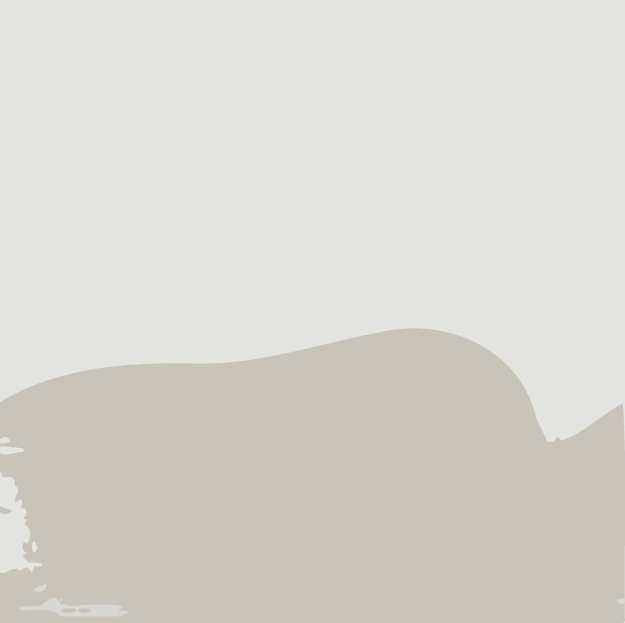}
  \end{subfigure}
   \begin{subfigure}[b]{.2\textwidth}
      \subcaption{}
      \centering
      \includegraphics[width=\textwidth]{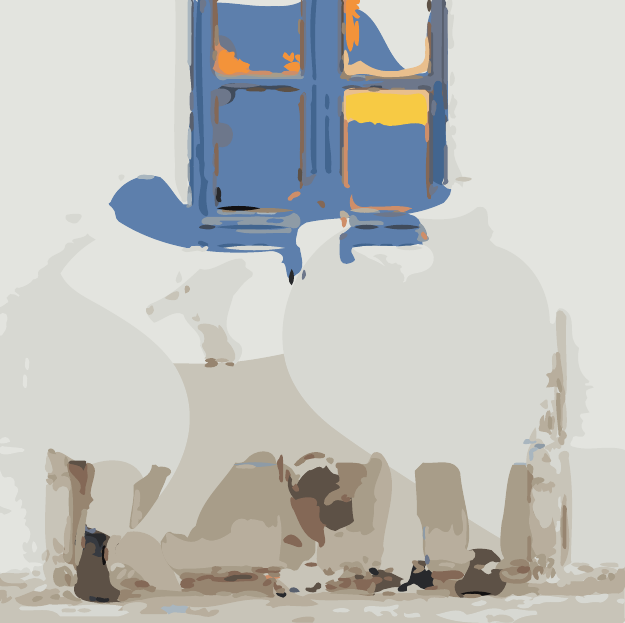}
  \end{subfigure}
  \begin{subfigure}[b]{.2\textwidth}
      \subcaption{}
      \centering
      \includegraphics[width=\textwidth]{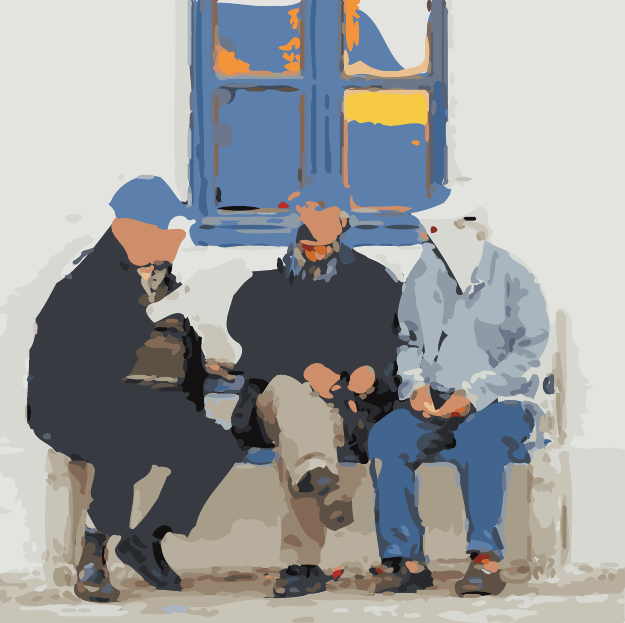}
  \end{subfigure}
  \begin{subfigure}[b]{.2\textwidth}
      \subcaption{}
      \centering
      \includegraphics[width=\textwidth]{pdf/grandpa/grandpa_5.pdf}
  \end{subfigure}
    \caption{[Comparison with LIVSS~\cite{wang2024layered}] (a) The given color quantized image $f$. (b) The proposed method using $\mathcal{S}_{new}$.  (c) The stroke of result in (b).  (d) to (g)  showing shape layers in various depth level from the bottom to the top.  }
    \label{fig: grandpa}
\end{figure}
Figure \ref{fig: grandpa} shows another example that is also in~\cite{wang2024layered}. We use the image size of $300 \times 299$ pixels, and utilize grouping of quantization and used $\mathcal{S}_{new}$. Figure \ref{fig: grandpa}(a) shows the color quantized raster image $f$, (b) shows the proposed vectorized result, and (c) shows the stroke. (d) to (g) are  the layered vectorized result from bottom to some depth ordering.

\subsection{Grouping disconnected regions}\label{ss:grouping}
\begin{figure}
    \centering
    \begin{subfigure}[b]{.3\textwidth}
      \centering
      \subcaption{}
    \includegraphics[width = \textwidth]{pic/illusory/generated.png}
    \end{subfigure}
    \hspace{-.7cm}
    \begin{subfigure}[b]{.3\textwidth}
      \centering
      \subcaption{}
    \includegraphics[width = \textwidth]{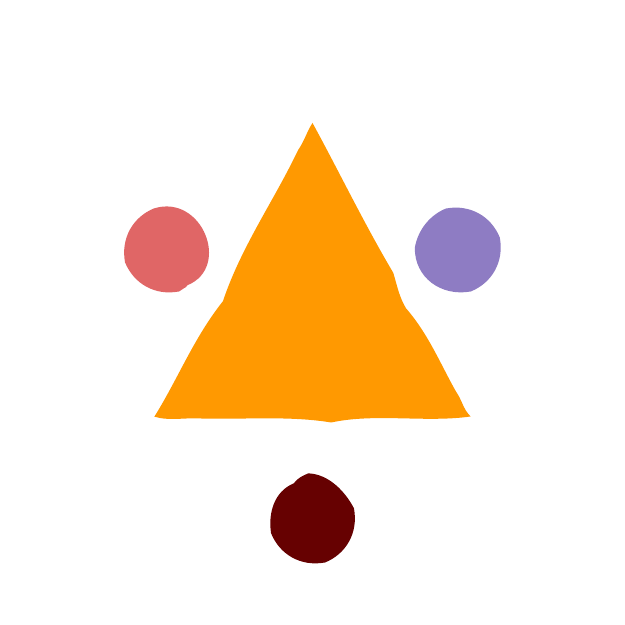}
    \end{subfigure}
    \hspace{-.7cm}
    \begin{subfigure}[b]{.3\textwidth}
      \centering
      \subcaption{}
      \begin{tikzpicture} [spy using outlines={rectangle,red,magnification=15,size=1.75cm, connect spies}]
        \node[anchor=south west,inner sep=0] (image) at (0,0) {\includegraphics[width = \textwidth]{pdf/illusory/illusory_parallel_2.pdf}};
        \spy[red, connect spies] on (2.93, 1.645) in node at (5,1.5);
        \spy[red, connect spies] on (3.5, 2.68) in node at  (5,3.5);
      \end{tikzpicture}        
    \end{subfigure}
    \caption{[Grouping shape layers] (a) The given color quantized $f$ inspired by Kanizsa triangle. (b) The same color region is considered as one shape layer, and the convexified shape layer of the three dots and the orange triangle which now becomes one connected region is shown. (c) The top shape layer is superpose over (b) which successfully recovers the original input raster image.  This process helps straighten the curves. }
    \label{fig: illusory}
\end{figure}

In Definition \ref{def:shapelayer}, each connected components are considered as separate shape layers.  For  images with illusory shapes or if there is a priori knowledge of occlusions among shapes, one can further consider grouping shape layers to add known semantic information.  For example, Figure \ref{fig: illusory}(a) is created inspired by the Kanizsa triangle.  It is not unusual to think the three orange triangles should be connected to each other to make up one big triangle. This is following a common illusion to human, since the straight level line extensions may connect the triangles. 
In this case, by simply considering all connected components with the same color as one shape layer, the proposed method finds one large occluded orange triangle  to be under the blue triangle without any additional modifications to the algorithm.  Figure \ref{fig: illusory}(b) shows the shapes at the bottom: all three dots and the orange triangle are convexified. Figure \ref{fig: illusory}(c) shows the final result with the blue triangle superposed over the orange.
The proposed depth ordering energy aligns with our human instinct, and identifies the orange triangle as one single shape to be underneath the blue, and also the proposed curvature-based inpainting successfully connects the disjoint orange triangles.  This also helps to approximate the direction of boundary sharply and approximate T-junctions better following the level line directions as shown in the zoom.

\section{Concluding Remarks}\label{sec:conclude}
We propose a non-learning based method called image vectorization with depth, that combines depth ordering and curvature-based  inpainting for new ways to shape decompose and vectorize a given raster image.  We propose  a novel depth ordering energy to identify shapes' relative depth ordering and provide analysis of its properties.  Effectiveness of this depth ordering energy is also shown in the experiment section. The combination of depth ordering and shape convexification not only gives an easier way to edit images, but also gives a more semantic vectorization result. Compared to recent work like LIVE~\cite{ma2022layer}, DiffVG~\cite{li2020diffvg} and LIVSS~\cite{wang2024layered}, the proposed method is fast, less demanding in computation resource and, more importantly, better preserves shapes as a whole.

There are some challenges that can be addressed as a future work. One key area of focus is improving denoising techniques for color quantized images, ensuring that meaningful fine details are preserved to enhance vectorization quality. Additionally, developing methods to intelligently group disconnected regions into single shape layers which aligns with human visual perception could streamline the editing process. Another promising direction for future research is the creation of neural networks capable of producing high-quality vectorization with depth. 

\bibliographystyle{plain}
\bibliography{arxiv_submission}

\appendix

\section{Pseudo Code for multiscale quantization, depth ordering and \bezier curve fitting} \label{Asec:codes}

In this section, we include the pseudo code for multiple algorithms mentioned in the previous sections, including depth ordering, grouping quantization and \bezier curve fitting. 

\begin{algorithm}
\caption{Depth Ordering graph $G(M,E)$ of Shape Layers}
\label{alg: depth ordering}
\begin{algorithmic}
\STATE{\textbf{Input:} Shapes $\{\chi_{i}\}_{i=1}^{N_\mathcal{S}}$ given from $\mathcal{S}$ as binary images.}
\STATE{\textbf{Output:} A directed graph $G$.}

\vspace{0.2cm}
\STATE{Set up $N_\mathcal{S}$ number of nodes $M$ that each represents a shape in a directed graph $G$. }

\FOR{every pair of $S_i$ and $S_j$ (or each pair of $S_i$ and $S_j$ that share a mutual boundary for real images)}
\STATE{Use Proposition \ref{prop: subset}
or equation (\ref{eq: ordering result}) to determine which one is above.}
\IF{ $S_i$ is above  $S_j$}
    \STATE{Draw a directed edge from node $i$ to node $j$ in $G$.}
\ELSIF{$S_j$ is above $S_i$}
    \STATE{Draw a directed edge from node $j$ to node $i$ in $G$.}
\ENDIF
\ENDFOR
\FOR{every cycle in $G$}
\STATE{Delete the edge $E_{i,j}$ in this cycle that is the solution to equation \eqref{eq: edge largest CS difference}.}
\ENDFOR
\STATE{Perform topological sort~\cite{cormen01introduction} to $G$ to obtain linear ordering $\mathcal{D}$.}
\end{algorithmic}
\end{algorithm}

\begin{algorithm}
\caption{\bezier Curve Fitting Algorithm}
\label{alg: fitting algorithm}
\begin{algorithmic}
    \STATE{\textbf{Input:} $\{ p_{i}\}_{i=1}^{n}$: a set of discrete points on 2D plane describing the boundary of a shape in either clockwise or counter-clockwise orientation; $\mathcal{T}$: threshold to classify local curvature extremum; $\tau$: error tolerance. }
    \STATE{\textbf{Output:} A set of tuples of coefficients for output \bezier curves.}

    \vspace{0.2cm}
    \STATE{Use equation \eqref{eq: discretized curvature} to compute curvature for each point in $\{p_{i} \}_{i=1}^{n}$.}
    \STATE{Initialize $\mathcal{I} = \{ i_{k} \mid |\kappa(p_{i_{k}})| > \mathcal{T} \text{ and } i_{k} < i_{k+1} \}$. If there is no such $p_{i}$, initialize $\mathcal{I} = \{p_{1}\}$.}
    \STATE{Let $n = | \mathcal{I} |$ and $O$ be an empty output list.}
    \FOR{$k = 1 , \cdots, n$}
    \STATE{Get the set of points $p_{i_{k}} , p_{i_{k}+1}, \cdots, p_{i_{k+1 \mod n}}$.}
    \STATE{Let $s_{1} \gets i_{k}$, $s_{2} \gets i_{k+1 \mod n}$.}
    \STATE{\textbf{do}}
    \STATE{\quad Solve the least-squares problem in  \eqref{eq: wls fitting} for $\mathbf{P}_{0},\mathbf{P}_{1},\mathbf{P}_{2},\mathbf{P}_{3}$ using the points $p_{s_{1}}$ to $p_{s_{2}}$.}
    \STATE{\quad \textbf{if} the Hausdorff distance of this \bezier curve and the points $p_{s_{1}}$ to $p_{s_{2}}$ $>\tau$ \textbf{then}}
    \STATE{\quad \quad $s_{2} \gets \text{the index between $s_{1}$ and $s_{2}$ where the maximum error occurs}$}.
    \STATE{\quad \textbf{else}}
    \STATE{\quad \quad$s_{1}\gets s_{2}, s_{2} \gets i_{k+1 \mod n}$.}
    \STATE{\quad \textbf{end if}}
    \STATE{\textbf{while} $s_{1} \neq i_{k+1 \mod n}$}
    \STATE{Store the final $(\mathbf{P}_{0},\mathbf{P}_{1},\mathbf{P}_{2},\mathbf{P}_{3})$ to $Z$.}
    \ENDFOR
    \STATE{Return $Z$}
    
\end{algorithmic}
\end{algorithm}

\end{document}